\numberwithin{equation}{section}
\numberwithin{figure}{section}
\theoremstyle{plain}
\newtheorem{thm}{\protect\theoremname}[section]
\theoremstyle{definition}
\newtheorem{defn}[thm]{\protect\definitionname}
\theoremstyle{plain}
\newtheorem{prop}[thm]{\protect\propositionname}
\theoremstyle{plain}
\newtheorem{cor}[thm]{\protect\corollaryname}
\theoremstyle{remark}
\newtheorem{rem}[thm]{\protect\remarkname}
\theoremstyle{plain}
\newtheorem{lem}[thm]{\protect\lemmaname}
\theoremstyle{definition}
\theoremstyle{plain}
\newtheorem*{thm*}{\protect\theoremname}
\newtheorem*{lem*}{\protect\lemmaname}
\renewcommand{\liminf}{\varliminf}
\renewcommand{\limsup}{\varlimsup}
\newcommand{\cN}{\mathcal{N}}
\newcommand{\cF}{\mathcal{F}}
\newcommand{\cL}{\mathcal{L}}
\newcommand{\cP}{\mathcal{P}}
\newcommand{\bS}{\mathbb{S}}
\newcommand{\R}{\mathbb{R}}
\newcommand{\prob}{\mathbb{P}}
\newcommand{\E}{\mathbb{E}}
\newcommand{\eps}{\epsilon}
\newcommand{\indicator}[1]{\mathbf{1}_{#1}}
\newcommand{\eqdist}{\stackrel{(d)}{=}}
\newcommand{\tensor}{\otimes}
\newcommand{\abs}[1]{\lvert#1\rvert}
\newcommand{\norm}[1]{\lvert\lvert#1\rvert\rvert}
\providecommand{\corollaryname}{Corollary}
\providecommand{\definitionname}{Definition}
\providecommand{\examplename}{Example}
\providecommand{\lemmaname}{Lemma}
\providecommand{\propositionname}{Proposition}
\providecommand{\remarkname}{Remark}
\providecommand{\theoremname}{Theorem}
\numberwithin{equation}{section}
\begin{document}
\title{
Online stochastic gradient descent on non-convex losses from high-dimensional inference
}

\author{Gerard Ben Arous}
\address{G.\ Ben Arous\hfill\break
Courant Institute \\ New York University\\
New York, NY, USA.}
\email{benarous@cims.nyu.edu}

\author{Reza Gheissari}
\address{R.\ Gheissari\hfill\break
Departments of Statistics and EECS \\ University of California, Berkeley\\
Berkeley, CA, USA.}
\email{gheissari@berkeley.edu}

\author{Aukosh Jagannath}
\address{A.\ Jagannath\hfill\break
Departments of Statistics and Actuarial Science and Applied Mathematics \\ University of Waterloo\\
Waterloo, ON, Canada.} 
\email{a.jagannath@uwaterloo.ca}

\begin{abstract}
Stochastic gradient descent (SGD) is a popular algorithm for optimization problems arising in high-dimensional inference tasks. Here one produces an estimator of an unknown parameter from independent samples of data by iteratively optimizing a loss function. 
This loss function is random and often non-convex. We study the performance of the simplest version of SGD, namely online SGD, from a random start in the setting where the parameter space is high-dimensional. 

 We develop nearly sharp thresholds for the number of samples needed for consistent estimation as one varies the dimension. Our thresholds depend only on an intrinsic property of the population loss which we call the information exponent. In particular, our results do not assume uniform control on the loss itself, such as convexity or uniform derivative bounds. The thresholds we obtain are polynomial in the dimension and the precise exponent depends explicitly on the information exponent. As a consequence of our results, we find that except for the simplest tasks, almost all of the data is used simply in the initial search phase to obtain non-trivial correlation with the ground truth.  Upon attaining non-trivial correlation, the descent is rapid and exhibits law of large numbers type behavior. 
 
  We illustrate our approach by applying it to a wide set of inference tasks such as phase retrieval, and  parameter estimation for generalized linear models, online PCA, and spiked tensor models, as well as to supervised learning for single-layer networks with general activation functions.
\end{abstract}

\maketitle

\vspace{-.75cm}
 \section{Introduction}

Stochastic gradient descent (SGD)  and its many variants are the algorithms of choice for many hard optimization problems encountered in machine learning and data science.
Since its introduction in the 1950s \cite{RobMon51}, SGD has been abundantly studied. While first analyzed in fixed dimensions \cite{mcleish,BMP90,BKM95,duflo96, Bottou99,Benaim99}, its analysis in high-dimensional settings has recently become the subject of intense interest, both from a theoretical point of view (see, e.g.,~ \cite{NeedellSrebroWard14,escaping-saddles-COLT,online-ICA-NIPS,WaLu16,WaMaLu16,mandt2017stochastic,HLPR-COLT,TanVershynin}) and a practical one (see, e.g.,~\cite{lecun1998gradient,bishop,goodfellow2016deep}).

The evolution of SGD is often heuristically viewed as having two phases \cite{bottou2003stochastic,BottouLeCun04,mandt2017stochastic}: an initial ``search'' phase and a final ``descent'' phase. In the search phase, one often thinks of the algorithm as wandering in a non-convex landscape. In the descent phase, however, one views the algorithm as being in an effective trust region and descending quickly to a local minimum. 
In this latter phase, one expects the algorithm to be a good approximation to the gradient descent for the population loss, whereas in the former the quality of this approximation should be quite low.  

In the fixed dimension setting one can avoid an analysis of the search phase by neglecting an initial burn-in period, and assuming that the algorithm starts in the descent phase. This reasoning is sometimes used \cite{Bottou99} as a motivation for convexity or quasi-convexity assumptions in the analysis of stochastic gradient descent for which there now is a large literature \cite{Bottou99,BottouLeCun04,NeedellSrebroWard14,NeedellWard17,HLPR-COLT,dieuleveut2020}. Furthermore, such a burn-in time perspective is (in a sense) implicit in  approaches based on the ODE method of \cite{Ljung77} (see, e.g.,  the notion of asymptotic pseudotrajectories in~\cite{Benaim99}).

In the high-dimensional setting, however, it is less clear that one can ignore the burn-in period, as the dependence of its length on the dimension is poorly understood. 
Furthermore, in many problems of interest, 
random initializations are strongly concentrated in a neighborhood of an uninformative critical point of the population loss. Nevertheless, there is a wealth of numerical experiments showing that SGD may perform well in these regimes. This suggests that in high dimensions, the algorithm is able to recover from a random start on computationally feasible timescales, even in regimes where the behavior from a random start is quite different from that started in a trust region.

There has been a tremendous effort in recent years to
understand these issues and to develop general frameworks for 
understanding the convergence of SGD for non-convex loss functions in the
high-dimensional setting.
While many results still require convexity or quasi-convexity, several only require uniform control on 
derivatives of the empirical risk---such as $L$-smoothness (namely a uniform control on the Lipschitz constant of the gradient), and possibly  
uniform Lipschitzness---on a bounded domain.\footnote{When working in unbounded domains it is common to add a ``growth at infinity'' assumption.} 
These latter approaches typically study Monte Carlo analogues of SGD or develop a stochastic differential equation (SDE) approximation to it and control the rate of convergence of the corresponding processes to its invariant measure. 
Here one can obtain bounds that depend polynomially on the dimension and exponentially on quantities like $L$ and the radius of the domain $R$.  
See for example \cite{pmlr-v65-raginsky17a,pmlr-v65-zhang17b,cheng2018sharp,cheng2019stochastic,ma2019sampling} for a representative (but non-exhaustive) collection of works in this direction.\footnote{We note here that 
	some of these bounds depend polynomially on inverses of spectral quantities, such as the spectral gap of the Langevin operator or Cheeger constants. In general settings, these quantities often grow exponentially in the dimension.} 

In many basic problems of high-dimensional statistics, however, the assumptions of dimensionless bounds on both the domain and the derivative are unrealistic,
even in average case, due to the concentration of measure phenomenon.
For illustrative purposes, consider the simplest example of linear regression with Gaussian covariates. Here a simple calculation shows that the usual empirical risk---the least squares risk---is $L$-smooth and $K$-Lipschitz on the unit ball for $L$ and $K$ diverging linearly in $N$. 
Of  course it is possible to re-scale the loss so that $L=O(1)$,
but such a re-scaling will dramatically change
the invariant measure of natural Monte Carlo or SDE approximations and render it uninformative; the aforementioned bounds on the convergence rate are then no longer relevant for the estimation task. See Section~\ref{subsec:linear-regression} for more discussion on this, where we also consider the case of non-Gaussian covariates and errors.

Given the discussion above, we are led to the following questions which motivate our work here:
\begin{enumerate}
	\item Can one prove sample complexity bounds for SGD that are polynomial in the dimension for natural estimation tasks?
	\item For such tasks, what fraction of data is used/time is spent in the  search phase as opposed to the descent phase?
	\item How do these answers change as one varies the loss (or activation function)? Can this be answered using quantities that depend only on the \emph{population} loss?
\end{enumerate}

To understand these questions, we restrict our study to the simple but important high-dimensional setting of rank-one parameter estimation, though  we believe that it can be extended to more general settings.
This setting covers many important and classical problems  such as:
\begin{itemize}
	\item supervised learning for a single-layer network, and phase retrieval: \S\ref{subsec:supervised-learning}
	\item generalized linear models (GLMs) (e.g., linear and logistic regression): \S\ref{subsec:GLM}--\ref{subsec:linear-regression} 
	\item online PCA and spiked matrix and tensor models: \S\ref{subsec:online-PCA}--\ref{subsec:tensor-pca}
	\item two-component mixtures of Gaussians: \S\ref{subsec:mixture-of-Gaussians}. 
\end{itemize}
We treat all of these examples in detail in Section~\ref{sec:examples}.
For these examples, dimension-free smoothness assumptions do not apply just as in the case of linear regression described above. 
Furthermore for most of them, the loss function is non-convex and  
one expects exponentially many critical points, especially in the high entropy region \cite{BMMN17,BBCR18,MBAB19}. In spite of these issues, we prove that online SGD succeeds at these estimation tasks with polynomially many samples in the dimension.

Our main contribution is a classification of these rank-one parameter estimation problems on the sphere. This classification is defined solely by a geometric quantity,
called the \emph{information exponent} (see Definition~\ref{def:inf-exponent}), which captures the non-linearity of the \emph{population} loss near the region of maximal entropy (here, the equator of the sphere). 
More precisely, the information exponent is the degree of the first non-zero term in the Taylor expansion of the population loss about the equator. We study the dependence of the amount of data 
needed (i.e., the \emph{sample complexity}) for recovery of the parameter using online SGD on the information exponent. 
We prove thresholds for the sample complexity for online SGD to exit the search phase  which are linear, quasi-linear, and polynomial in the dimension depending on whether the information exponent is $1$, $2$, or at least $3$ respectively (Theorem~\ref{thm:main-1}). Furthermore, we prove sharpness of these thresholds up to a logarithmic factor in the dimension, showing that the three different regimes in this classification are indeed distinct (Theorem~\ref{thm:refutation-1}). 
Finally, we show that once the algorithm is in the descent phase, there is a law of large numbers for the trajectory of the distance to the ground truth about that for gradient descent on the population loss (Theorem ~\ref{thm:lln}). In particular, the final descent phase is at most linear in time for all such estimation problems and one asymptotically recovers the parameter.

As a consequence of our classification, we find that when the information exponent is at least $2$, essentially all of the data is used in the initial ``search phase'': the ratio of the amount of data used in the descent phase (linear in the dimension) to 
the amount used in the search phase (quasilinear or polynomial in the dimension) vanishes as the dimension tends to infinity. 
Put simply, the main difficulty in these problems is leaving the high entropy region of the initialization; once one has left this region, the algorithm's performance is fast in \emph{all} of these problems. This matches the heuristic picture described above. For an illustration of this discussion, see Figures~\ref{fig:search-phase}--\ref{fig:lln} for numerical experiments in the supervised learning setting.

Our classification yields nearly tight sample complexity bounds for the performance of SGD from a random start in a broad range of classification tasks. In particular, the setting $k\leq 2$ covers a broad range of estimation tasks for which the problem of sharp sample complexity bounds have received a tremendous amount of attention such as phase retrieval, supervised learning with commonly used activation functions such as ReLu and sigmoid, Gaussian mixture models, and spiked matrix models. The regime $k\geq 3$ has, to our knowledge, received less attention in the literature with one notable exception, namely spiked tensor models. That being said, we find fascinating phenomena in these problems. For example, in the supervised learning setting, we find that minor modifications of the activation function can dramatically change the sample complexity needed for consistent estimation with SGD. See, e.g., Fig.~\ref{fig:k=1-2-3}.

Importantly, our approach does not require almost sure or high probability assumptions on the geometry of the loss landscape, such as convexity or strictness and separation of saddle points. Furthermore, we make no assumptions on the Hessian of the loss. Indeed, in many of our applications the Hessian of the loss  where the algorithm is initialized will have many positive and negative eigenvalues simultaneously.
Instead we make a scaling assumption on
the sample-wise error for the loss. More precisely, we assume a scaling in $N$ for the low 
moments of the norm of its gradient and the variance of the directional derivative in the direction of the parameter to be inferred. 
We note that the assumption of $L$-smoothness falls into our setting, but our assumption is less restrictive. For a precise definition of our assumption and a comparison see Definition~\ref{def:naturally-scaling} below.
We expect that 
the classification we introduce has broader implications for efficient estimation thresholds than the setting we consider here,
such as finite rank estimation, or offline algorithms with possibilities of reuse, batching, and overparametrization. Furthermore, while we restrict to the sphere,  we expect a similar classification to hold in full space with e.g., an $\ell^2$ penalty, but leave this to future investigation.

\subsection{Formalizing ``search'' vs.\ ``descent'': weak and strong recovery}\label{subsec:weak-recovery}

One of our goals in this work is to understand the dimension dependence of the relative proportion of time spent by SGD in the search phase as opposed to the descent phase. As such, we need a formal definition of the search phase. 
To this end, we focus on the simple setting where the population loss is a (possibly non-linear) function of the distance to the parameter. Furthermore, to make matters particularly transparent, and for simplicity of working in a bounded domain, we will assume that the norm of the parameter is known.
Note that in some settings, fixing the norm amounts to assuming a fixed variance \cite{johnstone2000distribution,sur2019modern}.

More precisely, suppose that we are given a parametric family of distributions, $(P_x)_{x \in \bS^{N-1}}$,
of $\R^D$-valued random variables, parameterized by the unit sphere, $\bS^{N-1}$, 
and $M = \alpha N$ i.i.d.\ samples $(Y^\ell)$ from one of these distributions, $\prob_N = P_{\theta_N}$,
which we call the \emph{data distribution}. Our goal is to estimate $\theta_N$ given these samples, via online SGD with loss function $\cL_N:\mathbb{S}^{N-1}\times\R^D\to\R$.
We study here the case where the \emph{population loss} is of the form
\begin{equation}
\Phi_N(x):=\mathbb E_N[\cL_N]= \phi(m_N(x))\qquad\mbox{where}\qquad  m_N(x)=x\cdot \theta_{N}\,,\label{eq:Phi-form}
\end{equation}
for some $\phi: [-1,1]\to \mathbb R$ (here $\cdot$ denotes the Euclidean inner product on $\mathbb{R}^{N}$). 
We call $m_N(x)$ the \emph{correlation} of $x$ with $\theta_N$. 
We often also refer to $m_N$ as the latitude, and call the set $m_N(x)\approx 0$ the \emph{equator} of the sphere.  

In order to formalize the notion of exiting the ``search phase'', 
we recall here the notion of ``weak recovery",
i.e., achieving macroscopic correlation with $\theta_N$.
We say that a sequence of estimators $\hat \theta_N\in \mathbb{S}^{N-1}$ \emph{weakly recovers} the parameter $\theta_N$
if for some $\eta>0$, 
\[
\lim_{N\to\infty} P\Big(m_N(\hat \theta_N ) \geq \eta\Big) = 1\,.
\]
{As $\|m_N\|_{\infty}\le 1$, this definition is equivalent to the existence of $\eta>0$ such that $$\liminf_{N\to \infty}\mathbb E[m_N(\hat\theta_N)] \ge \eta\,;$$
	this latter formulation was used in, e.g.,~\cite{mondelli2018fundamental}}.
To understand the scaling here, i.e.,
$\hat \theta_N \cdot \theta_N =\Theta(1)$, recall the basic fact from high-dimensional probability \cite{Vershynin}, 
that if $\hat\theta_N$ were drawn uniformly at random, then  $\hat\theta_N\cdot\theta_N\simeq N^{-1/2}$ and that the probability  of weak recovery with a random choice is exponentially small in $N$. 
In the context we consider here,
attaining weak recovery corresponds to exiting the search phase. 
On the other hand, our final goal is to understand  \emph{consistent estimation} or \emph{strong recovery} which in this setting amounts to showing that 
$m_N(\hat\theta_N) \to 1$
in probability {(or equivalently in $L^p$-norm for $p\geq 1$).}

\subsection{Algorithm and assumptions}
As our parameter space is spherical, we will consider a spherical  
version of online SGD which is defined as follows.
Let $X_{t}$ denote the output of the algorithm at time $t$, and let $\delta>0$
denote a step size parameter.
The sequence of outputs of the algorithm are then 
given by the following procedure:
\begin{equation}
\begin{cases}
X_{0}= & x_{0}\\
\tilde{X}_{t}= & X_{t-1} - \frac{\delta}{N}\nabla\cL_N(X_{t-1};Y^{t})\\
X_{t}= & \frac{\tilde{X}_{t}}{\norm{\tilde{X}_{t}}}
\end{cases}\,,\label{eq:Algorithm-def}
\end{equation}
where the initial point $x_{0}$ is possibly random, $x_{0}\sim\mu\in\mathcal{M}_{1}(\mathbb{S}^{N-1})$
and where $\nabla$ denotes the spherical gradient, {i.e., for a function $f: \mathbb S^{N-1} \to \mathbb R$, 
	\begin{align*}
	\nabla f  = Df - \frac{\partial f}{\partial r} \frac{\partial}{\partial r}\,,
	\end{align*}
	where $D$ is the derivative in $\mathbb R^N$ and $\frac{\partial}{\partial r}$ is the partial derivative in the radial direction, again in $\mathbb R^N$.} 
In the online setting, we terminate the algorithm
after it has run for $M$ steps (though in principle one could terminate earlier).
We take, as our estimator, the output of this algorithm. 
In order for this algorithm to be well-defined, we assume that the loss is almost surely differentiable in the parameter for all $x\in\bS^{N-1}$.

As we are studying a first-order method, it is also natural to expect that the output of gradient flow is a Fisher consistent estimator for all initial data with positive correlation, meaning that it is consistent when evaluated on the \emph{population} loss. To this end, we say that \textbf{Assumption A} holds if $\phi$ is differentiable and $\phi'$ is strictly negative on $(0,1)$. Observe that Assumption A holds if and only if gradient flow for the population loss eventually produces a consistent estimator, when started anywhere on the upper half-sphere $\{x:m_N(x)>0\}$\footnote{{Observe that the gradient flow on $\Phi$ reduces to its projection on $m_N$, which solves an autonomous  ODE; Assumption~A holds if and only if the window $(0,1)$ is in the absorbing set of a minimum of $\phi$ at $m_N = 1$.}}. The reason we restrict this assumption to $(0,1)$ and the upper half-sphere is to include problems where the parameter is only recovered up to a sign due to an inherent symmetry in the task. We emphasize here that Assumption A is a property only of the population loss, and  does not imply convexity of the loss $\cL_N$. {(We note however, that this is an assumption on the (unknown) data distribution, and cannot be empirically verified.)}

In order to investigate the performance of SGD in a regime that captures a broad range of high-dimensional inference tasks, we need to choose a scaling for the fluctuations of the loss. These fluctuations are captured by the \emph{sample-wise error}, defined by
\[
H_{N}^{\ell}(x) : =\cL_N(x;Y^{\ell})-\Phi_N(x)\,.
\]
We seek a scaling regime which does not suffer from the issues described in the introduction. To this end, 
we work under the following assumption which is satisfied by the loss functions of many natural high-dimensional problems.

\begin{defn}\label{def:naturally-scaling}
	For a sequence of data distributions and losses $(\mathbb P_N,\cL_N)$, we say that 
	\textbf{Assumption~B} holds if there exists $C_{1},\iota>0$ such that the
	following two moment bounds hold 
	for all~$N$: 
	\begin{enumerate}[(a)]
		\item We have that
		\begin{equation}
		\sup_{x\in\mathbb{S}^{N-1}}\mathbb{E}\big[(\nabla H_N(x)\cdot \theta_N)^{2}\big]\leq  C_1\,,\label{eq:condition-1}
		\end{equation}
		\item and that, 
		\begin{equation}
		\sup_{x\in\mathbb{S}^{N-1}}\E[\norm{\nabla H_N(x)}^{4+\iota}]<C_{1}N^{\frac{4+\iota}{2}}.\label{eq:condition-2}
		\end{equation}
	\end{enumerate}
\end{defn}

This assumption captures the scaling regimes commonly used for high-dimensional analyses of statistical problems throughout the literature, see, e.g., \cite{johnstone2000distribution,wainwright09,MR14,CandesLS14}.
For an in-depth discussion, see \prettyref{sec:examples}, where we show that a broad class of statistical models satisfy Assumption B. 
Observe that the scaling relation between (a) and (b) is tight when $\nabla H_N$ is an i.i.d.\ sub-Gaussian vector.  
On the other hand, note that if $\cL_N$ is $L$-smooth on the unit sphere for some fixed $L=O(1)$, then Assumption B holds,
with an $O(1)$ bound instead of an $O(N^{2+\iota/2})$ bound in \eqref{eq:condition-2}. In particular, Assumption B applies more generally than $O(1)$-smoothness.

\subsection{Main results}
In this paper, we show that a key quantity governing the performance of online SGD
is the following, which we call the information exponent for a population loss.
\begin{defn}\label{def:inf-exponent}
	We say a population loss $\Phi_N$ has \textbf{information
		exponent $k$} if $\phi\in C^{k+1}([-1,1])$ and there exist
	$C,c>0$ such that 
	\begin{equation}
	\begin{cases}
	\frac{d^{\ell}\phi}{dm^{\ell}}(0)=0 & 1\leq\ell<k\\
	\frac{d^{k}\phi}{dm^{k}}(0)\leq-c<0\\
	\norm{\frac{d^{k+1}\phi}{dm^{k+1}}(m)}_\infty \leq C
	\end{cases}\,.\label{eq:pop-loss-assumption}
	\end{equation}
\end{defn}
\noindent We compute the information exponent for a broad class of examples in Section~\ref{sec:examples}.
(If the first non-zero derivative is instead positive, then Assumption A cannot hold, as $\phi'(\varepsilon)$ will be positive for $\varepsilon>0$ sufficiently small.)

Our first result upper bounds the sample complexity for
consistent estimation. In the sequel, let 
\begin{equation}
\alpha_c(N,k) =
\begin{cases}
1 & k = 1 \\
\log N & k = 2 \\
N^{k-2} & k \geq 3
\end{cases}
\end{equation}
and say that a sequence $x_n\ll y_n$ if $x_n/y_n\to0$.
For concreteness, we state our result in the case of random initialization, namely we take $\mu_N$ to be the uniform measure conditioned on the upper half sphere $\{m(x)\geq0\}$. (This conditioning is without loss of generality up to a probability $1/2$ event and is introduced to avoid obvious symmetry issues: see Remark~\ref{rem:starting-in-upper-half-sphere}.)   
We then have the following.
\begin{thm}\label{thm:main-1}
	Suppose that Assumptions A and B hold and that the population loss has information exponent $k$. Let $M=\alpha_N N$ with $\alpha_N$ growing at most polynomially in $N$. 
	If $(\alpha_N,\delta_N)$ satisfy
	$\alpha_N^{-1}\ll \delta_N \ll \alpha_N^{-1/2}$ and 
	\begin{itemize}
		\item \emph{($k=1$:)} $\alpha_N \gg \alpha_c(N,1)$
		\item \emph{($k=2$:)} $\alpha_N \gg \alpha_c(N,2)\cdot\log N$
		\item \emph{($k\geq3$:)} $\alpha_N\gg\alpha_c(N,k)\cdot (\log N)^2$
	\end{itemize}
	then 
	online SGD with step size parameter $\delta_N$ started from $X_0\sim\mu_N$, will have
	\[
	m_N(X_M)\to 1\,,\qquad \mbox{in probability, {and in $L^p$ for every $p\ge 1$}}\,.
	\]
\end{thm}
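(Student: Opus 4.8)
I would track only the scalar $m_t:=m_N(X_t)=X_t\cdot\theta_N$ and show it reaches, and then stays near, $1$. Write $D_t:=\nabla H_N^t(X_{t-1})\cdot\theta_N$ and $G_t:=\norm{\nabla\cL_N(X_{t-1};Y^t)}^2$. Using \eqref{eq:Algorithm-def}, the identity $\nabla\Phi_N(x)\cdot\theta_N=\phi'(m_N(x))(1-m_N(x)^2)$, and orthogonality of the spherical gradient to $X_{t-1}$, one obtains the exact recursion
\begin{equation}
m_t=\frac{m_{t-1}-\tfrac{\delta_N}{N}\phi'(m_{t-1})(1-m_{t-1}^2)-\tfrac{\delta_N}{N}D_t}{\bigl(1+\tfrac{\delta_N^2}{N^2}G_t\bigr)^{1/2}}\,.\label{eq:m-recursion-sketch}
\end{equation}
Since $\E[\cL_N(x;Y^t)]=\Phi_N(x)$ and $Y^t$ is independent of $\cF_{t-1}$, we have $\E[D_t\mid\cF_{t-1}]=0$, $\E[D_t^2\mid\cF_{t-1}]\le C_1$ by Assumption~B(a), and $\E[G_t\mid\cF_{t-1}]=O(N)$ by Assumption~B(b) together with $\norm{\nabla\Phi_N}\le C$. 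Expanding \eqref{eq:m-recursion-sketch}, $m_t=m_{t-1}+\tfrac{\delta_N}{N}b(m_{t-1})-\tfrac{\delta_N}{N}D_t+E_t$, where $b(m):=-\phi'(m)(1-m^2)$ is $\ge 0$ on $(0,1)$ by Assumption~A and $|E_t|\lesssim\tfrac{\delta_N^2}{N^2}(G_t+D_t^2)$. Since $E_t$ is to leading order the \emph{multiplicative} correction $-\tfrac{\delta_N^2}{2N^2}m_{t-1}G_t$, I would handle it by factoring out the cumulative shrinkage $\Pi_t:=\prod_{s\le t}(1+\tfrac{\delta_N^2}{N^2}G_s)^{-1/2}$, which stays $\ge 1-o(1)$ for all $t\le M$ because $\E\sum_{s\le M}\tfrac{\delta_N^2}{N^2}G_s=O(\alpha_N\delta_N^2)\to 0$ by $\delta_N\ll\alpha_N^{-1/2}$.

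\textbf{Search phase.} Fix a small $\eta>0$ and let $\tau:=\inf\{t:m_t\ge\eta\}$. By \eqref{eq:pop-loss-assumption}, after shrinking $\eta$ one has $b(m)\ge c'm^{k-1}$ on $(0,\eta]$ (and $b(m)\ge c'>0$ if $k=1$). Compare $m_t$ to the solution $y(\cdot)$ of $\dot y=c'\tfrac{\delta_N}{N}y^{k-1}$, $y(0)=m_0/4$, which reaches $\eta$ by a time $T^\ast$ that, using $m_0\asymp N^{-1/2}$, is up to constants $N/\delta_N$ for $k=1$, $(N\log N)/\delta_N$ for $k=2$, and $N^{k/2}/\delta_N$ for $k\ge 3$; the stated conditions on $(\alpha_N,\delta_N)$ are exactly what makes $T^\ast\le M$ (e.g.\ for $k\ge 3$ this amounts to $\alpha_N\delta_N\gtrsim N^{(k-2)/2}$, forcing $\alpha_N\gg N^{k-2}$). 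The noise $S_t:=-\tfrac{\delta_N}{N}\sum_{s\le t}D_s$ is a martingale, so Doob's $L^2$ inequality gives $\prob(\sup_{t\le T^\ast}|S_t|\ge m_0/4)\le 16C_1\delta_N^2T^\ast/(N^2m_0^2)\asymp C_1\delta_N^2T^\ast/N$, which $\to 0$: for $k=1$ because $\delta_N\to 0$, and for $k=2$ (resp.\ $k\ge 3$) because the extra $\log N$ (resp.\ $(\log N)^2$) in the lower bound on $\alpha_N$ forces $\delta_N^2T^\ast/N\to 0$ once $\delta_N\ll\alpha_N^{-1/2}$. On the intersection of this event with $\{\Pi_{T^\ast}\ge 1/2\}$, a step-by-step comparison using monotonicity of $m\mapsto m^{k-1}$ on $(0,\infty)$ (with the $1/4$--$1/2$ buffer absorbing the Euler discretization) shows $m_t\ge y(t)>0$ for all $t\le\tau\wedge T^\ast$; hence $m$ never returns below $m_0/4$, is never repelled back across $0$, and $y(T^\ast)\ge\eta$ forces $\tau\le T^\ast\le M$. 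For $k\ge 3$ it is cleanest to run the comparison for $w_t:=m_t^{-(k-2)}$, which, while $m_t\ge m_0/4$, is an approximate supermartingale with drift $\le -c''\delta_N/N$, the quadratic-variation and remainder corrections being lower order by the same parameter estimates. Thus $\prob(m_\tau\ge\eta,\ \tau\le T^\ast)\to 1$.

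\textbf{Descent phase and random start.} On $[\eta,1-\varepsilon]$, continuity and Assumption~A give $b\ge\lambda_\varepsilon>0$, so from $m_\tau\ge\eta$ the process gains $\ge\lambda_\varepsilon\delta_N/N$ per step minus $O(\delta_N/N)$ fluctuations, reaching $1-\varepsilon$ within $O(N/\delta_N)$ further steps, over which the accumulated noise is $O(\sqrt{\delta_N/N})=o(1)$, negligible against the $\Theta(1)$ progress; since $\delta_N\gg\alpha_N^{-1}$ this fits inside $M-\tau$. As $b>0$ on $[1-\varepsilon,1)$, $m_t$ cannot fall back below $1-\varepsilon$, so $\prob(m_M>1-\varepsilon)\to 1$ for every $\varepsilon>0$, i.e.\ $m_N(X_M)\to 1$ in probability. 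For $X_0\sim\mu_N$ one uses $\sqrt N\,m_0\Rightarrow|Z|$, $Z\sim\mathcal N(0,1)$: for each fixed $\varepsilon_0>0$, $m_0\ge\varepsilon_0/\sqrt N$ with probability tending to at least $1-p(\varepsilon_0)$ where $p(\varepsilon_0)\to 0$ as $\varepsilon_0\downarrow 0$; running the above on this event (a smaller $m_0$ inflates $T^\ast$ by only a fixed constant factor, absorbed by the polylog slack) and then letting $\varepsilon_0\downarrow 0$ finishes the proof.

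\textbf{Main obstacle.} The heart of the argument is the search phase. Near the equator the drift $b(m)\asymp m^{k-1}$ is, per step, far smaller than the martingale noise, so one must show \emph{simultaneously} that the accumulated drift still carries $m$ from $m_0\asymp N^{-1/2}$ up to a constant within $M$ steps, and that the accumulated noise — whose variance budget over the escape time $T^\ast$ is $\asymp\delta_N^2T^\ast/N$, to be weighed against $m_0^2\asymp 1/N$ — never exceeds $m_0$, so that $m$ is never carried across $0$ into the basin of the opposite pole. Balancing these two demands is precisely what pins down the thresholds $\alpha_c(N,k)$ and the two-sided window for $(\alpha_N,\delta_N)$, and the powers of $\log N$ are exactly the slack needed to upgrade the second-moment estimates to the required high-probability statements.
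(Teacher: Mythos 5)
Your overall architecture tracks the paper closely: reduce to the latitude $m_t$, derive the exact recursion~\eqref{eq:m-recursion-sketch} (which is the paper's~\eqref{eq:m-t-finite-difference}), split the increment into population drift, directional-error martingale, and a radial correction from the projection step, control the martingale by Doob's $L^2$ inequality, run a discrete Gr\"onwall/Bihari--LaSalle comparison in the search phase (your $w_t=m_t^{-(k-2)}$ supermartingale for $k\ge3$ is exactly the paper's Bihari--LaSalle transform), and finish with a descent-phase law of large numbers. Splitting the time budget so the descent phase demonstrably fits inside $M-\tau$ (the paper runs the weak-recovery argument to time $M/2$) is an easy fix to your ``fits inside $M-\tau$'' remark.

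There is however a real gap at the step where you dismiss the radial correction $E_t$. Your claim is that one may ``factor out the cumulative shrinkage $\Pi_t$, which stays $\ge 1-o(1)$,'' and that this reduces matters to the linearized recursion. But that loss is only $1-O_P(\alpha\delta^2)$, i.e.\ $o(1)$ and not $o(m_0)=o(N^{-1/2})$, and when you actually unroll the recursion you find $m_t\prod_{s\le t} r_s = m_0 + \sum_j \Delta_j\,\omega_j$ with predictable weights $\omega_j = \prod_{s<j} r_s\in[1,1+O(\alpha\delta^2)]$. The martingale you control is the \emph{unweighted} $S_t$, whereas the one that enters here is the \emph{weighted} $\sum_j \omega_j\,\tfrac{\delta}{N}D_j$. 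Bounding their difference by the worst case $\sup_j(\omega_j-1)\cdot\sum_j|\tfrac{\delta}{N}D_j|$ gives $O_P(\alpha^2\delta^3)$, which for $k\ge3$ with $\delta\asymp\alpha_N^{-1/2}$ is at least $N^{(k-2)/2}\gg m_0$; so the argument as you have written it does not close. Two repairs are available, and you chose neither explicitly. (i) The paper's route (Proposition~\ref{prop:main-difference-inequalities} and Sections~\ref{subsec:second-order-correction}--\ref{subsec:controlling-the-drift}): compare the per-step correction $\asymp\delta^2 L_t m_{t-1}/N$ \emph{pointwise} to the per-step drift $\asymp\delta a_k m_{t-1}^{k-1}/N$; on the stopping domain $m_{t-1}\gtrsim N^{-1/2}$, the ratio is $\lesssim \delta L_t N^{(k-2)/2} \lesssim L_t/(\bar L\log N)$ once $\delta\le\bar\delta_N(k)\asymp N^{-(k-2)/2}/\log N$, and after truncating $L_t$ at $\hat L\asymp N^{1/2-\iota/4}$ the residual is absorbed into a constant fraction of the drift via Freedman's inequality, with the rare large-$L_t$ tail handled by Markov. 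This per-step absorption is where one of the $\log N$ factors is actually spent. (ii) Or make your own route precise: observe the weights $\omega_j$ are $\cF_{j-1}$-predictable, truncate $\omega_j\wedge(1+\epsilon_N)$, control the weighted martingale $\sum(\omega_j-1)\tfrac{\delta}{N}D_j$ by Doob using its conditional variance $\lesssim\epsilon_N^2\delta^2 T/N^2$, and check this is $o(m_0)$; with the theorem's log-factor slack this works, but it is a nontrivial additional argument that your proposal skips. As stated, ``stays $\ge 1-o(1)$'' is not a sufficient substitute, and this is the technical heart of the search-phase analysis.

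One smaller point: for $k\ge3$ your $T^\ast\le M$ requirement, $\alpha\delta\gtrsim N^{(k-2)/2}$, does not follow from $\alpha^{-1}\ll\delta\ll\alpha^{-1/2}$ and $\alpha\gg N^{k-2}(\log N)^2$ alone ($\delta$ could be only slightly above $\alpha^{-1}$). You, like the paper's proof of Proposition~\ref{prop:attain-weak-recovery}, are implicitly taking $\delta$ close to the $\alpha^{-1/2}$ ceiling; this is a reading of the hypotheses rather than a defect specific to your write-up.
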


Our second result is the corresponding lower bound on the sample complexity required for exiting the search phase with a given information exponent. 
\begin{thm}\label{thm:refutation-1}
	Suppose that Assumptions A and B hold and that the population loss
	has information exponent $k\geq1$. 
	If $(\alpha_N,\delta_N)$ are such that
	\begin{itemize}
		\item \emph{\textbf{($k=1,2$):}} $\alpha_N  \ll \alpha_c(N,k)$ and $\delta_N=O(1)$
		\item \emph{\textbf{($k\geq 3$):}} $\alpha_N \ll \alpha_c(N,k)$ and  $\delta_N=O(\alpha_N^{-1/2})$,
	\end{itemize}
	then the online SGD with step size parameter $\delta_N$, started from $X_0 \sim \mu_N$, will have
	\begin{align*}
	\sup_{t\leq M} |m_N(X_t)| \to 0\,,\qquad \mbox{in probability, {and in $L^p$ for every $p\ge 1$}}\,.
	\end{align*}
\end{thm}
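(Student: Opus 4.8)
The plan is to reduce everything to the scalar process $m_t:=m_N(X_t)=X_t\cdot\theta_N$ and to show that, with probability tending to one, it never leaves a shrinking neighbourhood of the equator. Because the spherical gradient is tangent to $\mathbb S^{N-1}$, \eqref{eq:Algorithm-def} gives $\|\tilde X_t\|^2=1+\tfrac{\delta_N^2}{N^2}\|\nabla\cL_N(X_{t-1};Y^t)\|^2$ and hence the exact one-step recursion
\[
m_t=\frac{m_{t-1}-\tfrac{\delta_N}{N}\,\nabla\cL_N(X_{t-1};Y^t)\cdot\theta_N}{\bigl(1+\tfrac{\delta_N^2}{N^2}\|\nabla\cL_N(X_{t-1};Y^t)\|^2\bigr)^{1/2}}.
\]
Writing $\nabla\cL_N=\nabla\Phi_N+\nabla H_N$ and using $\nabla\Phi_N(x)\cdot\theta_N=\phi'(m_N(x))(1-m_N(x)^2)$, one takes the conditional expectation given $\cF_{t-1}:=\sigma(Y^1,\dots,Y^{t-1})$, under which $Y^t$ is a fresh sample and $\mathbb E[\nabla H_N(x;Y^t)\mid\cF_{t-1}]=0$ for each fixed $x$, to obtain a decomposition
\[
m_t-m_{t-1}=-\tfrac{\delta_N}{N}\phi'(m_{t-1})(1-m_{t-1}^2)+F_{t-1}+\Delta\mathcal M_t,
\]
with $\Delta\mathcal M_t$ a martingale increment and $F_{t-1}$ the normalisation correction. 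Four facts drive the rest. First, by Definition~\ref{def:inf-exponent}, $\phi\in C^{k+1}$ with vanishing first $k-1$ derivatives at $0$, so $|\phi'(m)|\le C''|m|^{k-1}$ for $|m|$ below a fixed constant, whence the population drift has magnitude $\le\tfrac{C''\delta_N}{N}|m_{t-1}|^{k-1}$; this is where the information exponent enters. Second, Assumption~B(a) controls the \emph{directional} derivative, $\mathbb E[(\nabla H_N(x)\cdot\theta_N)^2]\le C_1$, giving $\mathbb E[\Delta\mathcal M_t^2\mid\cF_{t-1}]\lesssim\delta_N^2/N^2$. Third, Assumption~B(b) controls the \emph{full} gradient norm, $\mathbb E[\|\nabla\cL_N(x)\|^2]\le C_2N$ by Jensen, which bounds $F_{t-1}$ and, via Markov and a union bound over the $M\le N^{k-1}$ steps, makes it unlikely that $x_t:=\tfrac{\delta_N^2}{N^2}\|\nabla\cL_N(X_{t-1};Y^t)\|^2$ ever exceeds $\tfrac12$ — this rules out an atypically large gradient step, which could otherwise carry $m$ to $\pm1$ in a single move. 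Fourth, since the denominator is $\ge 1$, normalisation can only shrink $|m_t|$; in particular $F_{t-1}$ always points back toward the equator and may be dropped when bounding $|m_t|$ from above.

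It is cleanest to track $w_t:=m_t^2$. Discarding the $\ge 1$ denominator (which only shrinks $w_t$) and expanding the remaining quadratic yields the clean inequality
\[
\mathbb E[w_t\mid\cF_{t-1}]\le w_{t-1}+\tfrac{2C''\delta_N}{N}\,w_{t-1}^{k/2}+\tfrac{C_3\delta_N^2}{N^2},
\]
whose martingale part $\mathcal N_t$ has conditional variance $\lesssim\delta_N^2 w_{t-1}/N^2+\delta_N^4/N^2$ on the good event above. Fix a scale $\mu_N\to0$ (specified below), let $\tau:=\inf\{t:|m_t|\ge\mu_N\}$, and set $w_t^\ast:=\max_{s\le t\wedge\tau}w_s$. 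Summing on the good event,
\[
w_t^\ast\le\underbrace{w_0+\max_{s\le t}\mathcal N_s+\tfrac{C_3\delta_N^2 M}{N^2}+(\text{summable remainders})}_{=:A_t}\ +\ \tfrac{2C''\delta_N}{N}\sum_{s\le t\wedge\tau}(w_{s-1}^\ast)^{k/2}.
\]
Doob's $L^2$ maximal inequality (with stopped quadratic variation $\lesssim\delta_N^2 M\mu_N^2/N^2+\delta_N^4 M/N^2$) together with the spherical-concentration bound $\mathbb P(|m_0|\ge\mu_N)\to0$ shows that $A:=A_M$ is much smaller than $\mu_N^2$ with probability tending to one; and a discrete Gr\"onwall inequality — equivalently, comparison with the scalar ODE $\dot y=\tfrac{2C''\delta_N}{N}y^{k/2}$ started from $A$ — bounds $w_M^\ast$ by $A$ times the drift amplification factor over $M$ steps. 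On this event $w_M^\ast<\mu_N^2$, so $\tau>M$ and $\sup_{t\le M}|m_t|\le\mu_N\to0$; since the exceptional events have vanishing probability, the theorem follows.

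The content of the hypotheses $\alpha_N\ll\alpha_c(N,k)$ and the step-size constraint is precisely that this amplification factor is controlled and that $\mu_N$ can be chosen appropriately, and this is the trichotomy in the statement. For $k=1$ the population drift in $m$ is essentially constant, so it accumulates to $O(\delta_N\alpha_N)=o(1)$ because $\alpha_N\to0$ and $\delta_N=O(1)$; one takes $\mu_N$ slightly above this. For $k=2$ the drift is proportional to $m$, producing a geometric amplification $e^{\Theta(\delta_N\alpha_N)}$, which is $N^{o(1)}$ exactly because $\delta_N\alpha_N\ll\log N$ (as $\alpha_N\ll\log N$ and $\delta_N=O(1)$); one takes $\mu_N=N^{o(1)}N^{-1/2}$ to absorb it. For $k\ge3$ the drift is the genuinely nonlinear $\asymp\tfrac{\delta_N}{N}m^{k-1}$: writing $\mu_N=\omega_N N^{-1/2}$, the ODE comparison keeps $w_M^\ast=O(1)\cdot A\ll\mu_N^2$ exactly when $\delta_N\alpha_N\omega_N^{k-2}\ll N^{(k-2)/2}$, which — since $\delta_N=O(\alpha_N^{-1/2})$ forces $\delta_N\alpha_N=O(\alpha_N^{1/2})$ — follows from $\alpha_N\ll N^{k-2}$ once $\omega_N$ is taken to grow slowly enough (for instance $\omega_N=(N^{k-2}/\alpha_N)^{1/(4(k-2))}$, which still tends to infinity while $\mu_N\to0$).

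The only genuinely delicate point is this calibration of the intermediate scale $\mu_N$: it must be large enough that both the random start and the accumulated martingale fluctuations lie below it with high probability (forcing it to exceed the base scale $N^{-1/2}$ by a factor tending to infinity), yet small enough that the drift the trajectory self-generates while below $\mu_N$ is negligible compared to $\mu_N^2$ (forcing that factor to grow sub-critically); the stated bound on $\alpha_N$ is exactly what makes these two requirements compatible, with the three regimes corresponding to non-amplifying, geometrically amplifying, and polynomially amplifying drift near the equator. The remaining work is routine but must be done with care: uniform control of the normalisation and Taylor-remainder terms, keeping the roles of $\nabla\cL_N\cdot\theta_N$ (of size $O(1)$ by B(a)) and of $\|\nabla\cL_N\|$ (of size $O(\sqrt N)$ by B(b)) separate, and using only the $(4+\iota)$-th moment of B(b) by confining the analysis to the good event $\{x_t\le\tfrac12\}$.
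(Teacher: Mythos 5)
Your proposal is correct but takes a genuinely different route from the paper's. You track the squared overlap $w_t=m_t^2$ and form a one-step conditional-expectation recursion for it, stopping when $|m_t|$ reaches a \emph{shrinking} level $\mu_N=\omega_N N^{-1/2}\to0$, and conclude via Doob's $L^2$ inequality applied to the stopped nonlinear martingale $\mathcal N_t=\sum_{s\le t}(w_s-\mathbb E[w_s\mid\cF_{s-1}])$ plus a discrete Gr\"onwall/Bihari--LaSalle comparison. The paper instead works directly with $m_t$, discards the $r_t\ge1$ normalisation to obtain a one-sided difference inequality valid while $m_t>0$, splits the trajectory into positive/negative excursions via stopping times $\tau_{2i-1},\tau_{2i}$, controls the re-entry value $m_{\tau_{2i}}\le d_N N^{-1/2}$ by a crude one-step moment bound, and on each positive excursion uses Doob on the \emph{linear} directional-error martingale $\frac{\delta}{N}\sum_j\nabla H^{j+1}(X_j)\cdot e_1$ followed by Gr\"onwall/Bihari--LaSalle with a \emph{fixed} target threshold $\eta$. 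The relative merits are worth noting: your $w_t$ formulation is sign-agnostic, so it automatically yields the stated two-sided conclusion $\sup_t|m(X_t)|\to0$ without the excursion bookkeeping (the paper's argument explicitly bounds $\sup_t m(X_t)$ only, and the negative direction is handled implicitly by symmetry for $k$ even and by the attracting drift for $k$ odd); on the other hand, because your martingale increment contains the quadratic piece $\tfrac{\delta^2}{N^2}(g_t^2-\mathbb E[g_t^2\mid\cF_{t-1}])$ with $g_t=\nabla\cL\cdot\theta_N$, its conditional variance requires fourth moments of the gradient, which you obtain only through the norm bound $\mathbb E\|\nabla H\|^{4+\iota}\lesssim N^{(4+\iota)/2}$ of Assumption~B(b), whereas the paper's linear martingale uses only the directional second-moment bound B(a). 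Your calibration of $\mu_N$ against the three amplification regimes is the right place to encode the hypotheses on $(\alpha_N,\delta_N)$, and your sketch of the constraints (including the $\delta^4\alpha N/\omega_N^4$ contribution from the quadratic martingale) closes correctly in each of the three cases; the one additional housekeeping step you should make explicit when writing this up is that the boundary overshoot at $\tau$ is controlled precisely because the stopped quadratic variation bound applies through the increment at time $\tau$ itself, which is what makes ``$w_M^\ast<\mu_N^2$ implies $\tau>M$'' legitimate.
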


Let us pause to comment on the interpretation of these results.
The first result states that the sample complexity of consistent estimation for a problem with finite information exponent is always at most polynomial, and provides a precise scaling for this polynomial, $\alpha_c(N,k)$, as a function of the information exponent $k$. The second result says that the thresholds $\alpha_c(N,k)$ are optimal up to $O((\log N)^2)$. We expect, in fact, that the thresholds $\alpha_c(N,k)$---without additional logarithmic factors---are sharp; see Section~\ref{subsec:discussion-results} for more on this.  Observe that the second result in particular implies the 
algorithm can \emph{only} exit the search phase when the number of samples is at least $\alpha_c(N,k) N$. {Finally, notice that while the recovery result Theorem~\ref{thm:main-1} specified a window of feasible learning rates $\delta_N$ for these guarantees to apply, the refutation result covers a much wider range of $\delta_N$.} 

{Our arguments will also show that the ratio of the number of samples used in the descent phase to the number used in the search phase is $O(\alpha_c(N,k)^{-1})$ which is vanishing for $k\geq2$.
	More precisely, this observation follows from Theorem~\ref{thm:refutation-1} together with the following. Let $\tau_{\eta}^+$ denote the first $t$ such that $m_N(X_t)>\eta$ and let $\tau_{1-\eta}^+$ denote the first $t$ such that $m_N(X_t) >1-\eta$.}

\begin{thm}\label{thm:main-2}
	Suppose that Assumptions A and B hold and that the population loss has information exponent $k\geq2$. Let $M=\alpha_N N$ with $(\alpha_N,\delta_N)$ as in Theorem~\ref{thm:main-1}. For any $\eta>0$ there is a constant $C= C(k,\eta)>0$ such that $\tau_\eta^+ \gg \alpha_c(N,k)$
	and $|\tau_{1-\eta}^+ -\tau_\eta^+|\leq CN$ with probability $1-o(1)$. Furthermore, $X_t>1-2\eta$ for all $ \tau_{1-\eta}^+\leq t\leq M$ with probability $1-o(1)$.
\end{thm}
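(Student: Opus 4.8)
The plan is to assemble the three assertions from ingredients that are largely in hand: the first is the exit‑time lower bound underlying Theorem~\ref{thm:refutation-1}, the second is the descent estimate of Theorem~\ref{thm:lln}, and the third is a short confinement argument. I would begin by writing $m_t=m_N(X_t)$ and recording the one‑step evolution coming from \eqref{eq:Algorithm-def}: since the spherical gradient is orthogonal to $X_{t-1}$ one has $\lVert\tilde X_t\rVert^2=1+\frac{\delta_N^2}{N^2}\lVert\nabla\cL_N(X_{t-1};Y^t)\rVert^2$, hence
\[
m_t-m_{t-1}=\frac{\delta_N}{N}\,\lvert\phi'(m_{t-1})\rvert\,(1-m_{t-1}^2)\;+\;\xi_t\;-\;R_t,
\]
where $\xi_t=-\frac{\delta_N}{N}\nabla H_N(X_{t-1};Y^t)\cdot\theta_N$ is a martingale increment of conditional variance at most $\frac{\delta_N^2}{N^2}C_1$ (Assumption~B(a)), the population drift is $\ge 0$ (Assumption~A), and $R_t$ is the normalisation correction, $\frac{\delta_N^2}{2N^2}m_{t-1}\lVert\nabla\cL_N\rVert^2$ plus lower‑order terms, nonnegative to leading order. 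The first thing to establish is that the stochastic part is negligible over the \emph{whole} run of $M=\alpha_N N$ steps: because $\delta_N\ll\alpha_N^{-1/2}$ we have $\sum_{t\le M}\E[\xi_t^2]\le C_1\alpha_N\delta_N^2/N=o(1/N)$, so Doob's $L^2$ inequality gives $\max_{t\le M}\lvert\sum_{s\le t}\xi_s\rvert=o_{\prob}(N^{-1/2})$; similarly $\sum_{t\le M}\E[\lvert R_t\rvert]\lesssim\alpha_N\delta_N^2=o(1)$ (using $\E\lVert\nabla\cL_N\rVert^2\lesssim N$ from Assumption~B(b)), so $\sum_{t\le M}\lvert R_t\rvert=o_{\prob}(1)$; and a union bound with Assumption~B(b) gives $\max_{t\le M}\lvert\xi_t\rvert=o_{\prob}(1)$. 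These three facts drive all of what follows.

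For Step~1 (the search phase is long) I would fix $\eta$ below the radius on which the information‑exponent bounds give $\lvert\phi'(m)\rvert(1-m^2)\le C_2m^{k-1}$, recall that $0\le m_0\le N^{-1/2}\log N$ with probability $1-o(1)$ by concentration on the sphere, and then observe that up to $\tau_\eta^+$ the drift of $m_t$ is at most $C_2\frac{\delta_N}{N}m_{t-1}^{k-1}$, while $-R_t$ is (to leading order) downward and the martingale increments accumulate to $o_{\prob}(N^{-1/2})$. A comparison argument — cleanly, via the change of variables $m\mapsto m^{2-k}$ for $k\ge3$ (and $m\mapsto\log m$ for $k=2$), which linearises the drift — then shows $m_{t\wedge\tau_\eta^+}$ stays below $\eta$ for a number of steps of order $\frac{N^{k/2}}{\delta_N(\log N)^{k-2}}$, respectively $\frac{N}{\delta_N}\log N$, and the hypotheses on $(\alpha_N,\delta_N)$ from Theorem~\ref{thm:main-1} make both of these $\gg\alpha_c(N,k)$. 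This is precisely the exit‑time estimate proved for Theorem~\ref{thm:refutation-1}: there the assumption $\alpha_N\ll\alpha_c(N,k)$ was used only to keep the run length below this exit time, so the estimate itself is $M$‑independent and carries over unchanged. Hence $\tau_\eta^+\gg\alpha_c(N,k)$ with probability $1-o(1)$.

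For Step~2 (the descent phase is short) I would use that on $\{\max_t\lvert\xi_t\rvert=o(1)\}$ one has $m_{\tau_\eta^+}\in[\eta,2\eta)$, and that Assumption~A together with $\phi'\in C^{0}$ (valid since $\phi\in C^{k+1}$) yields a uniform restoring drift $\lvert\phi'(m)\rvert(1-m^2)\ge b_\eta>0$ on $[\eta/2,1-\eta]$. Summing the decomposition from $\tau_\eta^+$, as long as $m_t$ stays in $[\eta/2,1-\eta]$ it gains at least $b_\eta\frac{\delta_N}{N}$ per step minus the globally‑$o_{\prob}(1)$ error, so $m_t$ cannot remain below $1-\eta$ longer than the time this drift needs to exceed $1$; by Theorem~\ref{thm:lln}, which compares $m_t$ to the population gradient flow and shows the descent phase to be at most linear in time, this is $\le C(k,\eta)N$, i.e.\ $\tau_{1-\eta}^+-\tau_\eta^+\le CN$. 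For Step~3 (confinement) I would work on the event $G=\{\,2\max_{u\le M}\lvert\sum_{s\le u}\xi_s\rvert+\sum_{u\le M}\lvert R_u\rvert<\eta\,\}$, of probability $1-o(1)$, and argue by contradiction: if $m_{t^\ast}\le 1-2\eta$ for some $t^\ast\in(\tau_{1-\eta}^+,M]$, let $s^\ast\ge\tau_{1-\eta}^+$ be the last time before $t^\ast$ with $m_{s^\ast}\ge 1-\eta$; then $m_t\in(0,1)$ for $s^\ast\le t<t^\ast$, so the drift is $\ge 0$ there and summing the decomposition over $(s^\ast,t^\ast]$ gives
\[
-\eta\;\ge\;m_{t^\ast}-m_{s^\ast}\;\ge\;\sum_{s^\ast<t\le t^\ast}\xi_t-\sum_{s^\ast<t\le t^\ast}R_t\;\ge\;-2\max_{u\le M}\Big\lvert\sum_{s\le u}\xi_s\Big\rvert-\sum_{u\le M}\lvert R_u\rvert\;>\;-\eta,
\]
a contradiction; so on $G$ one has $m_t>1-2\eta$ throughout $[\tau_{1-\eta}^+,M]$.

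The step I expect to be the real work is Step~1: the noise $\xi_t$, while negligible in absolute magnitude, is of the \emph{same order} as the tiny quantity $m_t\asymp N^{-1/2}$ that is being tracked, over an escape window of polynomial length, so the comparison with the deterministic flow must be done with care — and this is exactly where the hypothesis $\delta_N=O(\alpha_N^{-1/2})$ is indispensable, since it forces the accumulated noise to be $o(N^{-1/2})$ and thus not to perturb the drift‑driven exit time. As that estimate is already available from the proof of Theorem~\ref{thm:refutation-1}, the genuinely new content here is to note its $M$‑independence and glue it to the comparatively routine Steps~2 and~3.
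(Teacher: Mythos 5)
The paper never writes out an explicit proof of this theorem: it is left as an implicit consequence of the surrounding machinery from Sections~5--7. Your assembly is essentially the natural one and the one the authors appear to intend: the lower bound on $\tau_\eta^+$ is the excursion-based exit-time estimate inside the proof of Theorem~\ref{thm:refutation-1}, the descent bound comes from the LLN of Theorem~\ref{thm:lln} combined with Assumption~A, and the confinement claim is a stopping-time argument using positivity of the population drift. Your Step~3 is clean and correct.

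Two points deserve sharpening. First, your claim that the exit-time estimate ``is $M$-independent and carries over unchanged'' is not quite accurate: in the proof of Theorem~\ref{thm:refutation-1} the Doob and Markov bounds are taken over all $t\leq M$, and those bounds do degrade as $M$ grows. What actually rescues the transfer is that Theorem~\ref{thm:main-1}'s hypotheses still give $\alpha_N\delta_N^2=o(1)$, so the probability of the good martingale/remainder events remains $1-o(1)$ even at the larger $M$; you then need to separately check that the per-excursion escape time $\tilde t$ (which scales like $\delta_N^{-1}\log N\cdot N$ for $k=2$ and $d_N^{-\eps}\delta_N^{-1}N^{1+(k-2)/2}$ for $k\geq3$, with $d_N$ growing slowly) still exceeds $\alpha_c(N,k)$ when $\delta_N\ll\alpha_N^{-1/2}$ and $\alpha_N\gg\alpha_c(N,k)$ up to logs. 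This check goes through, but it is a check, not a free transfer. Second, in Step~2 your own restoring-drift calculation yields a crossing time of order $N/\delta_N$, not $N$: each step gains $\asymp b_\eta\delta_N/N$, so traversing an $O(1)$ band costs $O(N/\delta_N)$ steps. Since $\delta_N=o(1)$ in the theorem's regime, $N/\delta_N$ is superlinear in $N$. The theorem's stated $CN$ bound should be read modulo this $\delta_N^{-1}$ scaling (the same $\delta_N^{-1}$ appears in the search time, so the ratio is still $O(1/\alpha_c(N,k))$, which is the claim that actually matters); be explicit that your argument produces $CN/\delta_N$ rather than writing $CN$ as though it literally followed.
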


{In words Theorem~\ref{thm:main-2} says that most of the data is used in the search phase (i.e., to attain some non-trivial correlation), and that from there descent to essentially full correlation is rapid, and takes $O(N)$ samples independently of the class of the problem.}

\begin{rem}\label{rem:k=1}
	In the case that $k=1$, we show consistent estimation and its refutation only in the regimes $\alpha\to\infty$ and $\alpha\to0$ respectively. That this is optimal can be seen in the simple example of estimating the mean of an i.i.d. Gaussian vector, where consistent estimation is information theoretically impossible with $\alpha = O(1)$.\footnote{Suppose that we are given $M=\alpha N$ samples of an $N$-dimensional Gaussian vector with law $\cN(v,Id)$, where $v$ is a unit vector and $\alpha$ is a fixed constant. It is easy to see that for large $N$, the sample average only achieves a (normalized) inner product of $v\cdot\frac{\hat v}{\norm{v}}\to\sqrt{\alpha(1-\alpha)^{-1}}<1$ when $\alpha=O(1)$. Furthermore, by the Cramer-Rao bound, this is tight for unbiased estimators satisfying a second moment constraint.}
	If one only considers \emph{weak} recovery, one can sharpen the thresholds in $\alpha$ to the $O(1)$ scale: see Theorem~\ref{thm:k=1}.
\end{rem}

\begin{rem}\label{rem:phi-N-dependence}
	All of these results hold in the broader setting that $\phi=\phi_N$ varies in $N$ provided the following generalizations of Assumption A and Definition~\ref{def:inf-exponent} hold. We take Assumption A to be that $\phi'_N$ are equi-continuous and uniformly negative on $(0,1)$ and Definition~\ref{def:inf-exponent} to be that~\eqref{eq:pop-loss-assumption} holds uniformly over the sequence. 
\end{rem}

\subsection{Discussion of the main results}\label{subsec:discussion-results}
Let us now discuss the intuition behind the information exponent and Theorems~\ref{thm:main-1}--\ref{thm:refutation-1}. 
Together, Theorems \ref{thm:main-1}-\ref{thm:refutation-1}
show that the information exponent governs, in a sharp sense, the performance of online SGD
when started from a uniform at random initializations. (In fact, as we will see in Theorems \ref{thm:main-theorem}-\ref{thm:lln}, this is a more general phenomenon for all initializations starting near the equator.) 

To understand where these thresholds come from, consider the following simplification of the recovery problem given by 
\begin{equation}\label{eq:population-dynamics}
m_t = m_{t-1} -\frac{\delta}{N} \phi'(m_{t-1})\norm{\nabla m_{t-1}}^2 \approx m_{t-1} + \frac{\delta}{N} c m_{t-1}^{k-1}\,, 
\end{equation}
for $m_{t-1}$ small and some $c>0$.
This amounts to gradient descent for the  population loss (omitting the projection as its effects are second order for small $\delta$, see Sections~\ref{subsec:controlling-radial-effects}--\ref{subsec:second-order-correction}.)  This corresponds to the ``best case scenario" since the observed losses will be corrupted versions of the population loss. One would expect this corruption to only increase the difficulty of recovery.

Analyzing the finite difference equation of~\eqref{eq:population-dynamics}, one finds that if the initial latitude is positive but microscopic, $m_0\asymp N^{-\zeta}$ for any $\zeta>0$, there are three regimes with distinct behaviors: 
\begin{itemize}
	\item $k < 2:$ the time for $m_t$ to weakly recover ($m_t \geq \eta$) is linear, i.e, order $\delta^{-1} N$.
	\item $k = 2:$ the time for $m_t$ to weakly recover is quasi-linear, i.e., order $\delta^{-1} N\log N$.
	\item $k> 2:$ the time for $m_t$ to weakly recover is polynomial:  $\delta^{-1}N^{1+c_\zeta}$ for $c_\zeta = (k-2)\zeta >0$. 
\end{itemize}
In the online setting, the number of time steps is equal to the number of samples used. 
Consequently, \emph{no matter what $\zeta\in (0,1)$ is}, there is a transition between linear sample complexity ($\alpha = O(1)$) and polynomial sample complexity ($\alpha = N^\zeta$ for some $\zeta>0$) regimes for the gradient descent on the population loss 
as one varies the information exponent,  through the critical $k_c=2$. The precise thresholds obtained in our results
correspond to the choice of $\zeta=1/2$, which is the scaling for uninformative initializations in high dimensions, e.g., the uniform measure on $\mathbb{S}^{N-1}$. 

When one considers the true online SGD, there is an effect due to the sample-wise error for the loss, $H_N$, whereby to first approximation, 
\begin{align*}
m_t \approx m_{t-1}+ \frac{\delta}{N} a_k m_{t-1}^{k-1} - \frac{\delta}{N} \nabla H_N^t (X_{t-1}) \cdot \theta_N\,.
\end{align*}
Due to the independence of $\nabla H_N^{t}$ and $X_{t-1}$, as we sum the contributions of the third term in time, we obtain a martingale which we call the \emph{directional error martingale},
\begin{equation}\label{eq:DE-martingale}
M_t = \frac{\delta}{ N}\sum_{j=1}^{t} \nabla H_N^j(X_{j-1})\cdot \theta_N\,.
\end{equation}
By  Doob's inequality, its cumulative effect can be seen to typically be of order $\delta \sqrt{T}/N$. In order for this term's contribution to be negligible for time scales on the order of $M$, and allow for recovery, we ask that it is comparable to $m_0$, dictating that $\alpha \delta^2=O(1)$. This relative scaling of $\alpha$ and $\delta$ is therefore fundamental to our arguments.
Indeed if $\alpha \delta^2$ were diverging, then on timescales that are of order $\alpha N$, the cumulative effect of projections becomes a dominant effect potentially drowning out the drift induced by the signal. We remark here that for the refutation result of Theorem~\ref{thm:refutation-1} when $k\geq 3$, if one only want to assume that $\delta =O(1)$, our arguments would show a refutation result for all $\alpha \leq N^{(k-2)/2}$, implying that for any $O(1)$ choice of $\delta$, the $k\ge 3$ regime still requires polynomial sample complexity.

We end this discussion by briefly comparing our approach to three related approaches that have been used to investigate similar questions in recent years.  An in-depth problem-by-problem discussion of the related literature will appear in Section~\ref{sec:examples} below. The classical approach to such problems is the ``ODE method'' which dates back at least to the work of  \cite{Ljung77}  (see~\cite{Benaim99} for a textbook introduction). Here one proves convergence of the trajectory with small step-size to the solution of the population dynamics. While this approach is most similar to our approach, it is designed for fixed dimensions. Indeed, in the scaling regime studied here, namely that corresponding to the initial search phase, one cannot neglect the effect of the directional error martingale, as its increments are \emph{larger} than those of the drift. 
Another approach to estimation problems via gradient-based algorithms is to study a diffusion approximation to this problem. Diffusion approximations to stochastic approximation algorithms date at least back to the work of  \cite{mcleish} in finite dimensions; more recently they have been studied in high-dimensions for online algorithms, e.g., \cite{online-ICA-NIPS,TanVershynin}. In our setting, rather than setting up a functional law of large numbers or central limit theorem for $m_t$, the precise nature and analyzability of which we expect depends on, e.g., the choice of activation function for GLMs, we use a system of difference inequalities similar to \cite{BGJ18a,BGJ18b}.

On the other hand, several statistical physics motivated approaches have recently been introduced to study such questions in both online \cite{WaLu16,WaMaLu16} and offline \cite{big-bad-minima,marvels-pitfalls} settings. These results develop a scaling limit, in the regime where one first sends $N\to\infty$ then $T\to\infty$. In these settings, however, the solution to the corresponding limiting evolution equation admits a trivial zero solution if $m_0 =0$ and a meaningful solution if $m_0>0$. This limiting system is then analyzed for its behavior and recovery thresholds, for various values of $m_0>0$ small, but uniformly bounded away from zero. As a consequence of our work, we find that if one can produce such initial data then the problem is always solvable with linear sample complexity,  and satisfies a scaling limit to the trajectory of the population dynamics. 
On the other hand, for uninformed initializations, where $m_0=o(1)$ except with exponentially small probability, the bulk of the data is used just to reach order 1 latitudes.

\begin{rem}\label{rem:starting-in-upper-half-sphere}
	Here and in the following, we have restricted attention to initializations supported on the upper half-sphere. 
	We do this to focus on the key issues and deal with general losses and data distributions  in a unified manner with minimal assumptions.
	For even information exponents, this restriction is without loss of generality by symmetry. For odd information exponents, when started from the lower half-sphere, the dynamics would rapidly approach the equator. We also restrict our attention to starts which have initial correlation on the usual CLT scale ($m(X_0)\sim N^{-1/2}$) which will hold for the uniform at random start. We expect that one can obtain similar results for all possible initializations if one imposes an anti-concentration assumption on the directional error martingale. Without such an  assumption, an initialization on the equator $m_0 = 0$ can be trapped for all time.
\end{rem}

\section{Applications to some important inference problems }\label{sec:examples}
In this section, we illustrate how our methods can be used to quantify recovery thresholds for online SGD in  inference tasks 
arising from various broad parametric families commonly studied 
in high-dimensional statistics, machine learning, and signal processing: supervised learning for single-layer networks with Gaussian features,  generalized linear models, linear regression, online PCA, spiked matrix and tensor models,
and Gaussian mixture models.  Each of these has a vast literature to which we cannot hope to do justice; instead we focus on describing how each satisfy the criteria of Theorems~\ref{thm:main-1}--\ref{thm:main-2}, emphasizing how the different information exponent regimes appear in variations on these problems, and relating the recovery thresholds we obtain for online SGD to past work on related algorithmic thresholds for these parameter estimation problems in the high-dimensional regime. {In many of our examples, we work only under mild moment assumptions on the data. As our goal here is to demonstrate the applicability of our classification, we do no try to optimize the number of moments assumed.} For ease of notation, we henceforth suppress the dependence of quantities on $N$ when it is clear from context.  

\subsection{Supervised learning for single-layer networks}\label{subsec:supervised-learning}
Consider the following model of supervised learning with a single-layer network.
Let $v_0\in\mathbb{S}^{N-1}$
be a fixed unit vector and suppose that we are given some (possibly non-linear) \emph{activation function} $f:\mathbb R \to \mathbb R$, some \emph{feature vectors} $(a^\ell)_{\ell = 1,...,M}$, and with these, $M$ noisy \emph{responses}
of the form
\begin{equation}\label{eq:glm}
y^{\ell}=f(a^{\ell}\cdot v_0) {+ \epsilon^\ell}\,.
\end{equation}
where $(\epsilon^\ell)^\ell$ are additive errors. 
Our goal is to estimate $v_0$ given this data. Our approach is by minimizing the least squares error. 

This model and special cases thereof have been studied under many different names
by a broad range of communities.
It is sometimes called a teacher-student network, a single-index model, or seen as a special class of generalized linear models (see, e.g.,~\cite{hastie2009elements,bishop,BKMMZ19}). This model has received a tremendous amount of attention in recent years,
largely in the regime that the features are taken to be i.i.d.\ standard Gaussians. Here various properties have been studied such as information theoretic thresholds \cite{BKMMZ19,mondelli2018fundamental}, the geometry of its landscape \cite{MBAB19,SQW18}, as well
as performance of various gradient-type and spectral methods \cite{lu2017phase,mondelli2018fundamental,LALspectral18}.

To place this in the framework of Theorem~\ref{thm:main-1}, we consider as data the pairs $Y^\ell = (y^\ell,a^\ell)$ for $\ell = 1,...,M$.
Our goal is then to optimize a quadratic loss $\mathcal L(\cdot ; Y)$
of the form 
\[
\cL(x;Y)=\cL(x;(y,a))=\Big(y-f\Big(a\cdot x)\Big)\Big)^{2}\,.
\]	
Note that we may write the population loss as
\begin{equation}\label{eq:GLM-pop-loss-1}
\Phi(x)=\E\Big[\Big(f\Big(a\cdot x)\Big)-f\Big(a\cdot v_0\Big)\Big)^{2}\Big]{ + \mathbb E[\epsilon^2]}\,.
\end{equation}
For general activation functions $f$ and distributions over the features $(a^\ell)$, one could proceed to calculate the information exponents by Taylor expansion.  

Let us focus our discussion on the most studied regime, namely where $(a^\ell)$ are i.i.d.\ standard Gaussian vectors in $\mathbb R^N$; {for the $(\epsilon^\ell)$ we only assume they are i.i.d.\ mean zero with finite $4+\delta$-th moment for some $\delta>0$}.
Here we find an explicit representation for the population loss which allows us to compute the information exponent for activation functions of at most exponential growth. With this we find that Theorems~\ref{thm:main-1}-\ref{thm:main-2}
apply to all such activation functions and, in particular, we find a wealth of interesting phenomena. 

Recall that the Hermite polynomials, which we denote by $(h_k(x))_{k=0}^\infty$, are the (normalized) orthogonal polynomials
of the Gaussian distribution $\varphi(dx)\propto\exp(- x^2/2)dx$.
Define the $k$-th Hermite coefficient for an activation function $f$ by
\begin{align}\label{eq:hermite-decomposition}
u_k(f) = \langle f,h_k \rangle_{L^2(\varphi)} = \frac{1}{\sqrt{2\pi}} \int_{-\infty}^{\infty} f(z)h_k(z) e^{ - z^2/2} dz\,.
\end{align}
As long as $f'$ is of at most polynomial growth---i.e., there exist $A, B\geq 0$ and integer $q\geq 0$ such that $|f'(x)|\leq A|x|^q+B$ for all $x$---the population loss is differentiable and the above exists. The population loss then has the following exact form which we believe is of independent interest. 

\begin{prop}\label{prop:GLM-pop}
	Suppose that the features $(a^\ell)$ are i.i.d.\ standard Gaussian vectors, {and the errors $(\epsilon^\ell)$ are i.i.d.\ mean zero, variance $C_\epsilon$ and with finite $4+\delta$-th moment for some $\delta>0$}.
	Suppose that the activation function, $f$, is differentiable {a.e.} and that $f'$ has at most polynomial growth. Then 
	\begin{equation}
	\Phi(x)=\phi_f(m(x))\qquad \mbox{where}\qquad \phi_f(m): = 2\sum_{j=0}^{\infty}(u_{j}(f))^{2}(1-m^{j}) + C_\epsilon \,,\label{eq:GLM-pop}
	\end{equation}
	and where $u_j$ are as in~\eqref{eq:hermite-decomposition}.
	Furthermore, Assumptions A and B hold.
\end{prop}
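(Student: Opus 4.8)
The plan is to (i) establish the representation $\Phi=\phi_f\circ m$ and identify $\phi_f$ by Hermite analysis, (ii) read off Assumption A from the resulting power series, and (iii) verify the two moment bounds of Assumption B by quarantining the part of the gradient that grows with the dimension. First I would note that for every $x\in\bS^{N-1}$ the pair $(a\cdot x,\,a\cdot v_0)$ is a centered bivariate Gaussian with unit variances and covariance $x\cdot v_0=m(x)$, so its law, and therefore $\Phi(x)=\E[(f(a\cdot x)-f(a\cdot v_0))^2]$, depends on $x$ only through $m(x)$; this gives $\Phi=\phi_f\circ m$ for a single function $\phi_f$ on $[-1,1]$. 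To identify it, observe that polynomial growth of $f'$ forces $f$ to have at most polynomial growth, hence $f\in L^2(\varphi)$ and $f=\sum_{j\ge0}u_j(f)h_j$ in $L^2(\varphi)$. Using the classical Hermite identity — if $(Z_1,Z_2)$ is centered Gaussian with unit variances and correlation $\rho$ then $\E[h_i(Z_1)h_j(Z_2)]$ equals $\rho^i$ when $i=j$ and $0$ otherwise — applied with $\rho=m$, and expanding the square, one gets
\[
\Phi(x)=\E[f(a\cdot x)^2]+\E[f(a\cdot v_0)^2]-2\,\E[f(a\cdot x)f(a\cdot v_0)]=2\sum_{j\ge0}u_j(f)^2-2\sum_{j\ge0}u_j(f)^2m^j,
\]
which is \eqref{eq:GLM-pop}; the only delicate point is the term-by-term evaluation of the cross term, justified by $L^2(\varphi)$-convergence of the partial sums of $f$ and Cauchy--Schwarz.

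For Assumption A I would differentiate the series: since $\sum_j u_j(f)^2<\infty$, the series $\sum_{j\ge1}j\,u_j(f)^2m^{j-1}$ converges locally uniformly on $(-1,1)$, so $\phi_f$ is differentiable there with $\phi_f'(m)=-2\sum_{j\ge1}j\,u_j(f)^2m^{j-1}$. On $(0,1)$ every summand is nonpositive and the $j$-th is strictly negative whenever $u_j(f)\neq0$; assuming $f$ is non-constant (otherwise $v_0$ is not identifiable) some $u_j(f)\neq0$ with $j\ge1$, so $\phi_f'<0$ on $(0,1)$, i.e.\ Assumption A holds. I would also record here that polynomial growth of $f'$ gives $f'\in L^2(\varphi)$, hence $\sum_j j\,u_j(f)^2<\infty$, so in fact $\phi_f\in C^1([-1,1])$ with $\norm{\phi_f'}_\infty<\infty$ on $[-1,1]$; this bound is needed below.

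For Assumption B I would compute the spherical gradient. With $P^\perp_x:=I-xx^{\top}$, one has $\nabla H_N(x)=P^\perp_x\big({-}2(f(a\cdot v_0)-f(a\cdot x))f'(a\cdot x)\,a\big)-\phi_f'(m)\,(v_0-mx)$, and $\norm{v_0-mx}=\sqrt{1-m^2}$. For \eqref{eq:condition-2}: since $P^\perp_x$ is a contraction and $\abs{\phi_f'(m)}\sqrt{1-m^2}\le\norm{\phi_f'}_\infty$, it suffices to bound $\E\big[\big(\abs{f(a\cdot v_0)-f(a\cdot x)}\,\abs{f'(a\cdot x)}\,\norm{a}\big)^{4+\iota}\big]$. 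The structural point is that the first two factors are functions of the one-dimensional standard Gaussians $a\cdot v_0,\,a\cdot x$ alone — whose laws do not depend on $x$ — while all dimensional growth lives in $\norm{a}$, with $\norm{a}^2\sim\chi^2_N$. Applying H\"older's inequality with an exponent $r>1$ close to $1$ on $\norm{a}^{4+\iota}$, the $f$- and $f'$-factors contribute dimension-free constants (all polynomial moments of $f(Z),f'(Z)$ with $Z\sim N(0,1)$ being finite) while $\E[\norm{a}^{(4+\iota)r}]^{1/r}=O(N^{(4+\iota)/2})$, giving \eqref{eq:condition-2} uniformly in $x$. For \eqref{eq:condition-1} I would use $\nabla H_N(x)\cdot v_0=(v_0-mx)\cdot(\nabla_x\cL-\nabla_x\Phi)$ together with $(v_0-mx)\cdot a=a\cdot v_0-m\,(a\cdot x)\sim N(0,1-m^2)$, so that
\[
\nabla H_N(x)\cdot v_0=-2\big(f(a\cdot v_0)-f(a\cdot x)\big)f'(a\cdot x)\,\big(a\cdot v_0-m\,a\cdot x\big)-\phi_f'(m)(1-m^2)\,;
\]
apart from the bounded deterministic term, this depends only on the two-dimensional Gaussian $(a\cdot v_0,a\cdot x)$, with \emph{no} $\norm{a}$-dependence, so its second moment is dimension-free and bounded uniformly over $m\in[-1,1]$ by Cauchy--Schwarz, finiteness of polynomial Gaussian moments of $f,f'$, and $\E[(a\cdot v_0-m\,a\cdot x)^4]=3(1-m^2)^2\le3$; this gives \eqref{eq:condition-1}.

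I expect the only genuinely delicate step to be \eqref{eq:condition-2}: one must carefully separate the factor $\norm{a}\asymp\sqrt N$ from the factors that depend only on the low-dimensional projections $a\cdot x,\,a\cdot v_0$, so that the power of $N$ entering is exactly $(4+\iota)/2$ and nothing extra leaks in through the (weak) dependence between $\norm{a}$ and those projections — the H\"older split with exponent close to $1$ on $\norm{a}^{4+\iota}$ is what achieves this. The complementary fact that $\nabla H_N\cdot v_0$ carries no $\norm{a}$ at all is precisely why \eqref{eq:condition-1} holds with an $O(1)$ rather than $O(N)$ bound, matching the scaling demanded in Definition~\ref{def:naturally-scaling}.
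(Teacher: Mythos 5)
Your proof is correct and follows essentially the same route as the paper's: you compute $\phi_f$ via the bivariate Hermite orthogonality identity (equivalent to the paper's noise-operator formulation $T_mh_k=m^kh_k$), read Assumption~A off the resulting power series, and verify Assumption~B by separating the $\|a\|$ factor from the low-dimensional projections $(a\cdot x,\,a\cdot v_0)$ via H\"older, exactly as the paper does for $\E\abs{\nabla\cL}^q$ and $\E\abs{\nabla\cL\cdot e_1}^2$. The only incidental differences are cosmetic: you keep track of the full directional factor $(a\cdot v_0-m\,a\cdot x)\sim N(0,1-m^2)$ in \eqref{eq:condition-1} (the paper writes $\abs{a_1}^2$ there, a small but harmless imprecision), and you explicitly flag the non-constancy of $f$ needed for $\phi_f'<0$ on $(0,1)$, which the paper leaves implicit.
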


With Proposition~\ref{prop:GLM-pop}, it is easy to compute the information exponents corresponding to general activation functions.
The following result is an immediate consequence of \prettyref{prop:GLM-pop}.
\begin{cor}\label{cor:glm-ie}
	Suppose that the features $(a^\ell)$ are i.i.d.\ standard Gaussian vectors, {and the errors $(\epsilon^\ell)$ are i.i.d.\ mean zero with finite $4+\delta$-th moment for some $\delta>0$}. 
	Suppose that the activation function, $f$, is differentiable a.e. and that $f'$ has at most polynomial growth. 
	Then, 
	\begin{enumerate}
		\item The information exponent of $f$ is $1$ if and only if $u_1(f) \neq 0$.
		\item The information exponent of $f$ is $2$ if and only if $u_1(f) = 0$ and $u_2(f) \neq 0$
		\item The information exponent of $f$ is at least $3$ if and only if $u_1(f)=u_2(f) =0$.
	\end{enumerate}
\end{cor}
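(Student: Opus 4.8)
The plan is to read the information exponent directly off the closed form of $\phi_f$ supplied by \prettyref{prop:GLM-pop}. Writing $\phi_f(m)=2\sum_{j\ge 0}u_j(f)^2(1-m^j)$, the $j=0$ summand vanishes identically, while every $j\ge 1$ summand contributes the pure monomial $-2u_j(f)^2 m^j$; so all that is needed is the Taylor expansion of this series at $m=0$.

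First I would differentiate the series term by term. For $|m|<1$ this is legitimate because $\sum_j u_j(f)^2<\infty$ (since $f\in L^2(\varphi)$), and for each fixed order $\ell$ the coefficients $\tfrac{j!}{(j-\ell)!}\,|m|^{j-\ell}$ remain bounded in $j$. The extension of the relevant derivatives up to $m=\pm 1$, together with the bound on the $(k+1)$st derivative demanded by Definition~\ref{def:inf-exponent}, is the one genuinely technical ingredient; it follows from the decay of $(u_j(f))_j$ forced by the polynomial-growth hypothesis on $f'$ (via the Hermite identity $u_{j-1}(f')=\sqrt{j}\,u_j(f)$), and is established together with \prettyref{prop:GLM-pop}. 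Since $\frac{d^\ell}{dm^\ell}m^j$ evaluated at $m=0$ is nonzero only when $j=\ell$, term-by-term differentiation yields
\[
\frac{d^\ell\phi_f}{dm^\ell}(0)=-2\,\ell!\,u_\ell(f)^2,\qquad \ell\ge 1.
\]

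The corollary is then immediate. By Definition~\ref{def:inf-exponent} the information exponent is the smallest $\ell\ge 1$ with $\frac{d^\ell\phi_f}{dm^\ell}(0)\neq 0$, which by the displayed identity is the smallest $\ell\ge 1$ with $u_\ell(f)\neq 0$; moreover for that $\ell$ the derivative equals $-2\,\ell!\,u_\ell(f)^2<0$, so the strict negativity required of $\frac{d^k\phi}{dm^k}(0)$ holds automatically and there is no positive-leading-coefficient case to exclude. Specializing, the information exponent is $1$ iff $u_1(f)\neq 0$; it is $2$ iff $u_1(f)=0$ and $u_2(f)\neq 0$; and it is at least $3$ iff $u_1(f)=u_2(f)=0$. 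The only point at which anything is actually at stake is the regularity bookkeeping of the second paragraph; identifying the exponent is a one-line differentiation of the formula of \prettyref{prop:GLM-pop}.
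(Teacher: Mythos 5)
Your proof matches the paper's own treatment: the paper declares the corollary to be ``an immediate consequence'' of \prettyref{prop:GLM-pop} and gives no further argument, precisely because one reads $\frac{d^\ell\phi_f}{dm^\ell}(0)=-2\,\ell!\,u_\ell(f)^2$ directly off $\phi_f(m)=2\sum_{j}u_j(f)^2(1-m^j)$, exactly as you do. Your two side remarks---that the leading Taylor coefficient is automatically strictly negative, so the sign requirement in Definition~\ref{def:inf-exponent} never needs separate checking, and that the only genuine work is the $C^{k+1}([-1,1])$ regularity which the paper folds into the proof of \prettyref{prop:GLM-pop}---are both accurate and consistent with the paper's account.
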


\noindent We prove Proposition~\ref{prop:GLM-pop} and Corollary~\ref{cor:glm-ie} in \prettyref{app:Generalized-linear-models}.

Let us now turn to some concrete examples of activation functions and their classification. 
Many commonly used activation functions have
corresponding population loss with information exponent $1$, for example:
\begin{itemize}
	\item Adaline: $f(x)=x$
	\item Sigmoid: $f(x)=\frac{e^{x}}{1+e^{x}}$
	\item ReLu: $f(x)=x\vee 0$.
\end{itemize}

\noindent The problems of smooth and non-convex phase retrieval ($f(x) = x^2$ and $f(x) = |x|$ respectively) are examples of models whose population loss has information exponent 2: see Section~\ref{subsec:phase-retrieval} below for a discussion.
More generally, we note that the Hermite polynomial of degree $k$ gives a simple example of an activation with information exponent $k$.

\begin{figure}[t]\label{fig:many-runs}
	\begin{subfigure}{.48\textwidth}
		\centering
		\begin{tikzpicture}
		\node at (0,0) {\includegraphics[width=\linewidth]{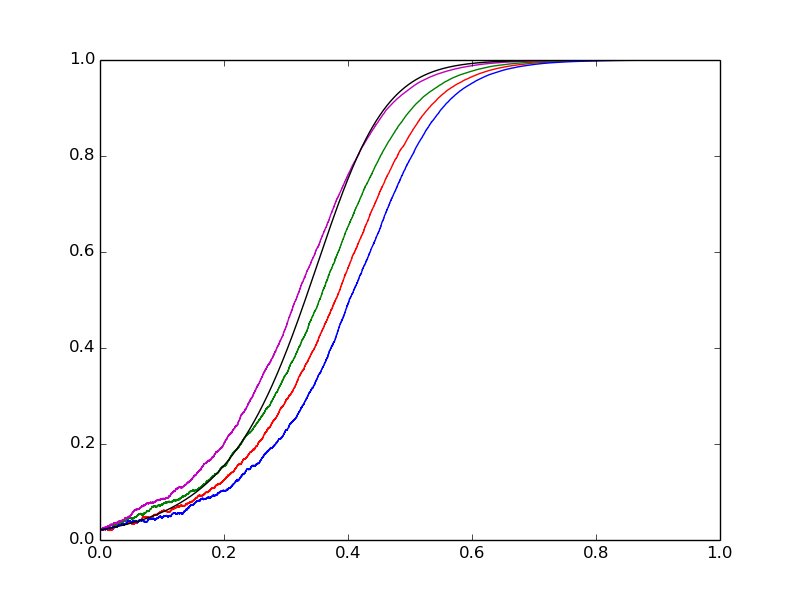}};
		\node at (-2.7,2.1) {$m$};
		\node at (2.7,-2.1) {$t/M$};
		\end{tikzpicture}
		\caption{$f(x) = x^2$, with $N=3000$ and $\alpha=100$.}
	\end{subfigure}
	\begin{subfigure}{.48\textwidth}
		\centering
		\begin{tikzpicture}
		\node at (0,0)
		{\includegraphics[width=1.04\linewidth]{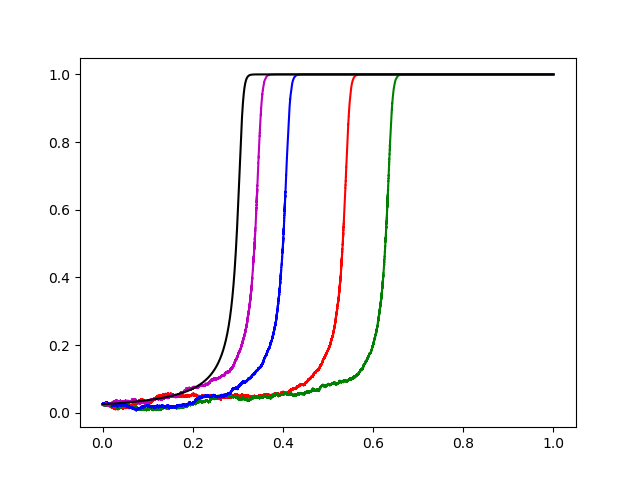}};
		\node at (-2.7,2.1) {$m$};
		\node at (2.85,-2.1) {$t/M$};
		\end{tikzpicture}
		\caption{$f(x)= x^3 - 3x$, $N=3000$, and $\alpha=30,000$.}
	\end{subfigure}
	\caption{(Random starts) In colors, 4-runs of single-layer supervised learning with activation functions $f(x) = x^2$ (phase retrieval) and $x^3 - 3x$ (3rd Hermite polynomial), initialized uniformly at random; in black, the corresponding population dynamics. Almost all of the data is used to attain a macroscopic correlation (the search phase); the trajectory through this phase does not concentrate (c.f.\ Fig.~\ref{fig:lln}).}\label{fig:search-phase}
\end{figure}

Let us now turn to discussing the implications of our results and in particular, the answers to the last two motivating questions from the introduction in this setting.

Our first observation is that
seemingly innocuous changes to an activation function can yield dramatic changes to the sample complexity of the problem.  For example, Adaline $f(x)=x$ and its cubic analogue $f(x)=x^3$ both have an information exponent of $1$; however, a linear combination of the two, namely $f(x)=x^3-3x$
has an exponent of $3$. Thus while the first two require linearly many samples to recover the unknown vector, the third requires $\Omega(N^{3/2}\log N)$ many samples just to exit the search phase. As an illustration of this, see Fig.~\ref{fig:k=1-2-3} where we perform supervised learning via SGD with the same pattern vectors and same unknown $v_0$ but different activation functions, namely $f(x)=x^3$ and $f(x)=x^3 - 3x$.

On the other hand, note that in the descent phase of the algorithm, where the algorithm exhibits a law of large numbers by Theorem~\ref{thm:lln}, one finds that the performance no longer depends on the activation function in a serious way. Thus from a ``warm start'', the choice of activation is less important. See Figs.~\ref{fig:search-phase}--\ref{fig:lln} for the example of the performance of phase retrieval, $f(x)=x^2$, and  $f(x)=x^3-3x$ with warm and random starts.  Let us emphasize here, however, that as soon as the information exponent is at least $2$, we know by Theorem~\ref{thm:main-2}, that almost all of the run-time is in the search phase.

\begin{rem}[Mis-specification]
	As another example of how the loss can dramatically affect performance, consider the problem of model mis-specification. Here we attempt to fit to the data using a possibly incorrect activation $g$. That is, our loss is $\cL=(y-g(a\cdot x))^2$, but our data is $y=f(a\cdot x)$.  One can compute the information exponent by noting that the loss satisfies $$\phi(m)=-2\sum_k u_k(f)u_k(g)m^k+\norm{f}^2+\norm{g}^2\,.$$
	Here we find various phenomena (no recovery, only weak recovery, or strong recovery) depending on the alignment of the Hermite coefficients of $f$ and~$g$. 
	
	For concreteness, let us consider the case of $$f(x)=u_{2}h_{2}(x)+u_{4}h_{4}(x) \qquad \mbox{and}\qquad g(x)=v_{1}h_{1}(x)+v_{2}h_{2}(x)+v_{4}h_{4}(x)\,.$$ In this case if we let $a=2u_{2}v_{2}$ and $b=2u_{4}v_{4}$, we have $\phi(m)=-am^{2}-bm^{4} + C$. Here we have the following phenomenology. 
	
	{
		\smallskip\noindent
		\textbf{Strong recovery.} Suppose that, $a>0$, and $a>-2b$, then this model satisfies Assumptions $A$ and $B$ and has information exponent $2$. Thus strong recovery has quasilinear sample complexity by Theorem~\ref{thm:main-1}. On the other hand, if either $f$ or $g$, has vanishing second Hermite coefficient, so that $a=0$, while $b>0$, the model has information exponent $4$ and the sample complexity is now cubic (up to log factors).
		
		\smallskip\noindent
		\textbf{Weak recovery.} Consider for concreteness the case that, $a=1$ and $b=-1$. Then Assumption B still holds. On the other hand, Assumption A only holds in a neighborhood of the origin. In particular, the global minimum of $\phi$ on $[0,1]$ is attained at $1/\sqrt{2}$. Thus by Theorem~\ref{thm:main-theorem} and a minor modification of  Theorem~\ref{thm:lln}, we see that SGD will weakly recover on quasi-linear time scales, will rapidly descent to latitude $m \approx 1/\sqrt{2}$, but then remain there on polynomial timescales, i.e., it will weakly recover but not strongly recover~$v_{0}$. 
		
		\smallskip\noindent
		\textbf{No recovery.} Suppose that and $a<0$. This would correspond to mis-specifying the sign of the coefficients of $f$. In this case $\phi=0$ is a local minimum of $\phi$. As such, a modification of the arguments of Theorem~\ref{thm:refutation-1} shows that SGD will not attain a macroscopic correlation in polynomial time. This is to be expected as the inference procedure we are applying is no longer Fisher consistent. On the other hand, consider the more extreme case of $v_1 = u_{2}=1$ and $u_{4}=v_{2}=v_{4}=0$. Here we try to fit the quadratic transformation of the data, $f(x)=x^{2}-1$, with a linear one, $g(x)=x$. In this case, a=b=0 so that $\phi(m)=0$, i.e., the population loss is constant and equal to 0. In this case even the population loss is completely uninformative and the algorithm will simply wander around the sphere, never attaining non-trivial correlation.}
\end{rem}

\begin{figure}[t]
	\begin{subfigure}{.48\textwidth}
		\centering
		\begin{tikzpicture}
		\node at (0,0) {
			\includegraphics[width=\linewidth]{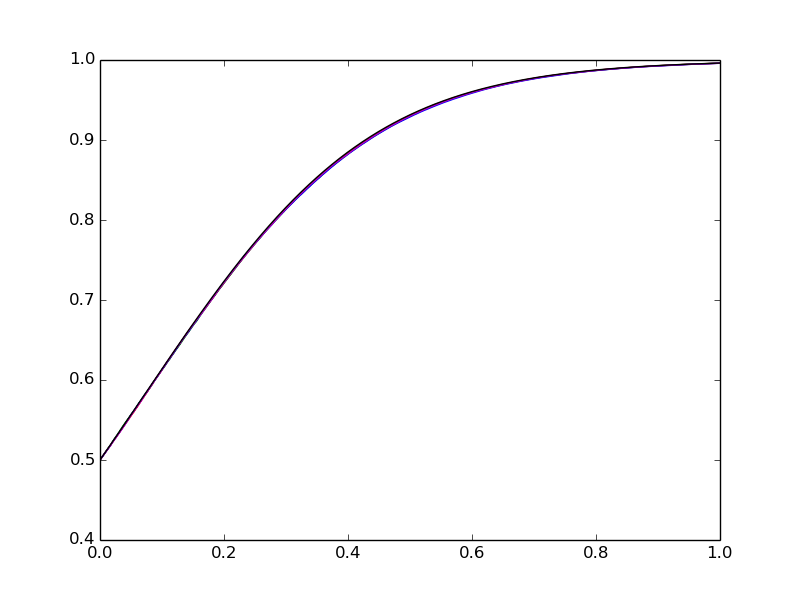}};
		\node at (-2.7,2.1) {$m$};
		\node at (2.7,-2.1) {$t/M$};
		\end{tikzpicture}
		\caption{$f(x) = x^2$, with $N=3000$ and $\alpha=500$.}
	\end{subfigure}
	\begin{subfigure}{.48\textwidth}
		\centering
		\begin{tikzpicture}
		\node at (0,0) {
			\includegraphics[width=\linewidth]{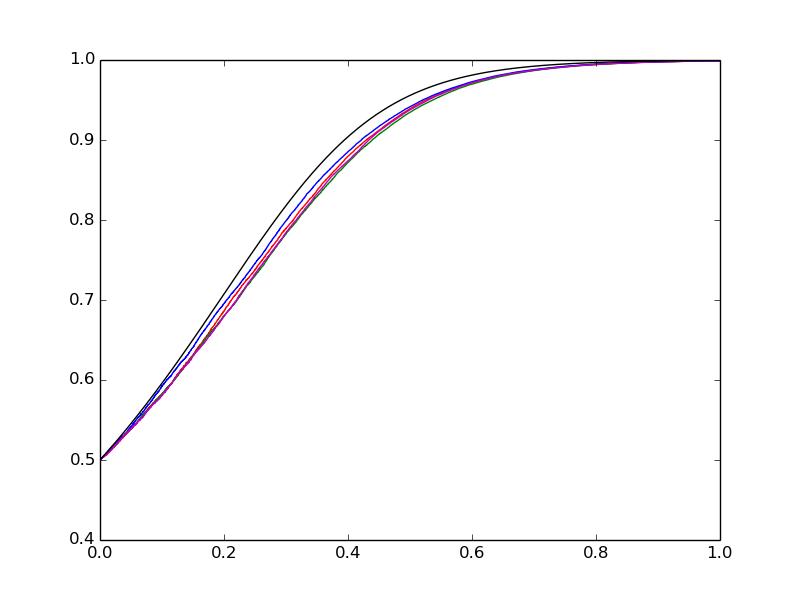}};
		\node at (-2.7,2.1) {$m$};
		\node at (2.7,-2.1) {$t/M$};
		\end{tikzpicture}
		\caption{$f(x)= x^3 - 3x$, $N=3000$, and $\alpha=500$.}
	\end{subfigure}
	\caption{(Warm start) In color, 4 runs of single-layer supervised learning, initialized from $m_0 = 0.5$, together with the corresponding population dynamics. Initialized here, and thus in the descent phase, the SGD rapidly converges (with linearly many data samples) to the ground truth, independently of the information exponent. Moreover, the trajectories 
		are well-concentrated about the population dynamics' trajectory. }\label{fig:lln}
\end{figure}

\subsubsection{Phase retrieval}\label{subsec:phase-retrieval}
One class of models of the form of~\eqref{eq:glm} that has received a tremendous amount of attention in recent 
years is smooth and ``non-smooth" phase retrieval.
These correspond to the cases $f(x) = x^2$ and $f(x)=\abs{x}$ respectively. Algorithmic recovery results with different algorithms, initializations, and variants of phase retrieval have been established in a wide array of related settings:~\cite{CCFM,CandesLS14,LeSuNIPS16,lu2017phase,TanVershyninKaczmarz,ALBZZ19,JeongGunturk,SQW18}.

For our results, in both cases we have that $u_{0},u_{2}>0$ and $u_{1}=0$ so that their
information exponents are both $k=2$.
As a consequence, Theorem~\ref{thm:main-theorem} shows that if $\alpha/(\log N)^{2}\to\infty$,
then online SGD with step size $\delta\sim\log N/\alpha^{2}$
will solve the weak recovery problem started from the uniform measure
conditioned on the upper-half sphere. Going further, by symmetry of
$f$, the same result holds started from the uniform measure on $\mathbb{S}^{N-1}$
itself, if we wish to weakly recover $v_{0}$ only up to a net sign.

In recent independent work, ~\cite{TanVershynin} 
have obtained sharper results for non-smooth phase retrieval.
There they show that a similar recovery result holds as soon as $\alpha$ is order $\log N$, and uniformly over all possible initializations, in particular those with $m(x) = 0$ (c.f.,~Remark~\ref{rem:starting-in-upper-half-sphere})
Furthermore, their work applies to online learning in ``full space'',
i.e., without restricting to the sphere. Some of the techniques there are similar 
in spirit to ours, but they involve a careful analysis of a 2D dynamical system which 
ends up being exactly solvable due to the choice of activation function, whereas we reduce to differential inequalities to handle general choices of activation function.

\begin{figure}[t]
	\centering
	\begin{tikzpicture}
	\node at (0,0) {
		\includegraphics[width=.65\linewidth]{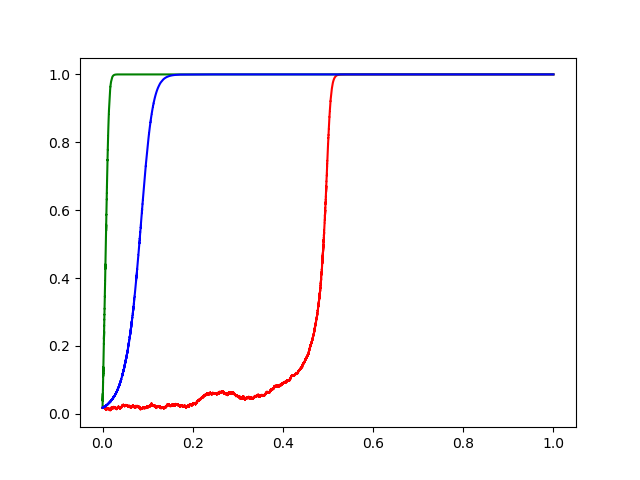}};
	\node at (-4.3,3.2) {$m$};
	\node at (4.75,-3.4) {$t/N$};
	\end{tikzpicture}
	\caption{SGD trajectories (at $N = 3000$ and $\alpha = 30,000$) from a  uniform at random start using the same data, but different activation functions $f(x) = x^3$ (green), $f(x) = x^2$ (blue) and $f(x) = x^3 - 3x$ (red), with information exponents $1,2,3$ respectively. The choice of activation function (changing the information exponent) can dramatically change the timescale for consistent estimation.}\label{fig:k=1-2-3}
\end{figure}

\subsection{Generalized linear models}\label{subsec:GLM}
Generalized linear models (GLM's)
are a commonly used family of statistical models which unify a broad class of regression tasks such as linear, logistic, and Poisson regression.
For a textbook introduction, see \cite{mccullagh,bishop}.  We follow the presentation of \cite{mccullagh}. (For ease of exposition, we focus only on the case of canonical link functions and  constant dispersion.) 

Given  an observation $y\in \R$ and a covariate vector $a\in \R^N$,
a generalized linear model consists of three components:
\begin{enumerate}
	\item The \emph{random component}: Conditionally on the 
	covariate $a$, the observation $y$ is distributed according to an element of the exponential family
	with canonical parameter~$\theta$, 
	\begin{equation}\label{eq:GLM-1}
	p_\theta(y) = \exp\Big( \frac{y \theta - b(\theta)}{d}- c(y)\Big)
	\end{equation}
	for some real-valued functions $b,c:\R\to\R$ and  some constant $d$.
	\item The \emph{systematic component}: The covariate $a$ and the unknown parameter $x\in \R^N$
	form a linear predictor, $\eta = a \cdot x.$
	\item The \emph{link}  between the random and the systematic components is given by 
	an \emph{activation function}, $f:\R\to\R$, which is monotone increasing
	and invertible,
	and satisfies
	\[
	f(\eta) = \mathbb E[y\mid a].
	\]
\end{enumerate}
For clarity, we work here in the case of a \emph{canonical link}, where we assume that the 
canonical parameter and the linear predictor are equal, i.e., $\theta =\eta$. 
In this case, as by definition $b$ is the cumulant generating function of $y\vert a$,
we have that $f(\eta) = b'(\eta)$  The goal is then to infer the parameter $x$ 
given the observation $y$.
Note that standard regression tasks can be mapped to this setting by choosing $b,c$ and $d$ appropriately. For example,  (in these examples we take $d=1$.)
\begin{itemize}
	\item Linear regression (linear activation  function $f(x)=x$ and $c(t)=b(t)=-t^2/2$.)
	\item Logistic regression (sigmoid activation function $f(x)=(1+e^{-x}))^{-1}$,  and $b(t)=\log 1+e^t$, $c(t)=0$.)
	\item Poisson regression (exponential activation function $f(x)=e^x$, and $b(t) = t$, $c(t)=- \log t!$)
\end{itemize}

Consider a random instance of this estimation problem. 
Let $v_0\in \bS^{N-1}$ be a fixed, unknown parameter.
Suppose that we are given $M$ i.i.d.\ covariate vectors $(a^\ell)_{\ell=1}^M$  and $M$ corresponding observations $(y^\ell)$.
Our goal is to estimate $v_0$ given $(y^\ell)$ and $(a^\ell)$. 

To place this in the framework of our results, we augment the observations by the covariates and consider the pairs 
$Y^\ell = (y^\ell,a^\ell)$. The canonical approach to these problems is maximum likelihood estimation, which in our setting
amounts to minimizing the loss
\[
\cL(x;Y) = \cL(x;y,a)= - y \,a\cdot x +b(a\cdot x) .
\]
For general activation functions $f$ and distributions over the covariates $(a^\ell)$, one proceeds to calculate the information exponents by Taylor expanding the population loss $\Phi(x) =\E\Phi(x;Y)$.

{To make things concrete, let us focus on the setting where the covariates are i.i.d.\ Gaussian vectors in $\mathbb R^N$. Here GLM's have been found to have a rich phenomenology in high dimensions, see, e.g., ~\cite{sur2019modern} for the case of logisitic regression.}
Here for a function $f$,  let $u_1(f) = \E [Z f(Z)]$ where $Z \sim a \cdot e_1$. We say that a function $f$ is of at most exponential growth if there are constants $A,B>0$ such that $\abs{f(x)}\leq A\exp(B\abs{x})$. 
We then have the following.
\begin{prop}\label{prop:GLM-1-prop} Let $f$ be an invertible, strictly increasing activation function of at most exponential growth for a GLM with standard Gaussian covariates $(a^\ell)_{\ell=1}^{M}$. Suppose that $f$
	is differentiable a.e. and that the exponential family \eqref{eq:GLM-1} has finite $8+\iota$'th moment for some $\iota>0$. Then the population loss, $\Phi(x)$, satisfies
	\begin{equation}\label{eq:GLM-rep}
	\Phi(x) = - u_1(f) m + C
	\end{equation}
	for some constant $C\in \R$. Furthermore, $u_1(f)>0$ so that the information exponent is $k=1$ and Assumptions A and B hold. 
\end{prop}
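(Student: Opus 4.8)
The plan is to obtain the closed form \eqref{eq:GLM-rep} by a short direct computation exploiting the canonical link and the Gaussianity of the covariates, and then to read off the information exponent and Assumption~A from it; Assumption~B will be handled separately by a routine moment estimate on the spherical gradient. For Step~1, write $Y=(y,a)$ and condition on $a$. By the defining property of the canonical link, $\E[y\mid a]=f(a\cdot v_0)=b'(a\cdot v_0)$, so
\[
\Phi(x)=\E\big[-y\,(a\cdot x)+b(a\cdot x)\big]=-\E\big[b'(a\cdot v_0)\,(a\cdot x)\big]+\E\big[b(a\cdot x)\big].
\]
Since $x,v_0\in\bS^{N-1}$, the pair $(a\cdot x,a\cdot v_0)$ is centered Gaussian with unit marginal variances and covariance $m=m(x)$, so one may write $a\cdot x=m\,(a\cdot v_0)+\sqrt{1-m^{2}}\,\xi$ with $\xi\sim\cN(0,1)$ independent of $a\cdot v_0$. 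The $\xi$-term then drops out of the first expectation, giving $\E[b'(a\cdot v_0)(a\cdot x)]=m\,\E[Z\,b'(Z)]=m\,u_1(f)$ with $Z\sim\cN(0,1)$, while $\E[b(a\cdot x)]=\E[b(Z)]=:C$ does not depend on $x$. Hence $\Phi(x)=-u_1(f)\,m+C$. These expectations converge because $f=b'$ has at most exponential growth, hence so does $b$, and standard Gaussians have finite exponential moments; together with the finite-moment hypothesis on the exponential family this also justifies Fubini.

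For Step~2, since $f$ is increasing and invertible it is strictly increasing, whence
\[
u_1(f)=\E[Z f(Z)]=\int_0^{\infty}z\,\big(f(z)-f(-z)\big)\,\varphi(z)\,dz>0
\]
(equivalently $u_1(f)=\E[f'(Z)]>0$ by Gaussian integration by parts when $f$ is differentiable). Thus $\phi_f(m):=-u_1(f)m+C$ is smooth with $\phi_f'(m)=-u_1(f)<0$ on $(-1,1)$, which is exactly Assumption~A, and $\phi_f'(0)=-u_1(f)<0$ with $\phi_f''\equiv0$, so the information exponent is $k=1$.

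For Step~3 (Assumption~B), the spherical gradients are $\nabla\cL_N(x;y,a)=(b'(a\cdot x)-y)(a-(a\cdot x)x)$ and $\nabla\Phi_N(x)=-u_1(f)(v_0-mx)$, and $\nabla H_N(x)$ is their difference. For \eqref{eq:condition-1} I would observe that $\nabla H_N(x)\cdot v_0$ depends on $a$ only through the two standard Gaussians $a\cdot x$ and $a\cdot v_0$ and on $y$, all of which have moments bounded uniformly in $N$, so its second moment is $O(1)$. For \eqref{eq:condition-2} I would bound $\norm{\nabla\cL_N(x;y,a)}\le|b'(a\cdot x)-y|\,\norm{a}$, use that $\norm{a}^{2}=(a\cdot x)^{2}+\norm{P_{x^\perp}a}^{2}$ with $\norm{P_{x^\perp}a}^{2}\sim\chi^{2}_{N-1}$ independent of $a\cdot x$, and combine $\E\norm{a}^{s}=O(N^{s/2})$ with the finiteness of the $(8+\iota)$-th moment of $|b'(a\cdot x)-y|$ (from the exponential growth of $b'$ and the moment hypothesis) via Hölder's inequality with exponents $p=(8+\iota)/(4+\iota)$ and $q=(8+\iota)/4$. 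This produces precisely $\E\norm{\nabla\cL_N(x;Y)}^{4+\iota}=O(N^{(4+\iota)/2})$, and since $\norm{\nabla\Phi_N(x)}=O(1)$, subtracting it changes nothing.

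The main obstacle is not Step~1, which is clean, but \eqref{eq:condition-2}: there one must respect the dependence between $|b'(a\cdot x)-y|$ and $\norm{a}$. The key is to split $a$ into its components along $x$ and along $x^{\perp}$, note that $y$ enters only through the single linear functional $a\cdot v_0$, and balance the Hölder exponents against the exponential-growth and $(8+\iota)$-moment assumptions so that the chi-squared factor contributes exactly $N^{(4+\iota)/2}$ with no leftover power of $N$ — this is precisely what pins down the required moment order in the hypothesis.
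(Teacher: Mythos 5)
Your proof is correct and follows essentially the same route as the paper: both establish the closed form $\Phi(x)=-u_1(f)m+C$ by exploiting the canonical link and Gaussian rotational invariance (you via the explicit decomposition $a\cdot x = m(a\cdot v_0)+\sqrt{1-m^2}\xi$, the paper via the noise operator acting on $h_1$), and both verify Assumption~B by splitting off the bounded $\nabla\Phi$ contribution and bounding $\E\|\nabla\cL\|^{4+\iota}$ using $\|a\|$-moment control. Your two small refinements — proving $u_1(f)>0$ directly from strict monotonicity without invoking integration by parts on an a.e.-differentiable $f$, and using Hölder with exponents $(8+\iota)/(4+\iota)$ and $(8+\iota)/4$ instead of Cauchy--Schwarz so the $(8+\iota)$-moment hypothesis is used sharply — are welcome tightenings of the same argument.
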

\noindent By Proposition~\ref{prop:GLM-1-prop}, maximum likelihood estimation for generalized linear models with invertible, increasing activations as above
will always have information exponent $k=1$. The proof of Proposition~\ref{prop:GLM-1-prop} is deferred to Appendix~\ref{app:GLM}. 

\subsection{{Linear regression with random covariates}}\label{subsec:linear-regression}
Let us return to the example of linear regression with random covariates discussed in the introduction.
Suppose that we are given i.i.d. data points of the form $\{(y^\ell,a^\ell)\}_{\ell=1}^M$,  
where $y^\ell = a^\ell\cdot v+\epsilon^\ell$, for some unknown unit vector $v$ and additive noise $\epsilon^\ell$. 
Consider the squared-error loss $\cL(x;y^\ell, a^\ell)=(y^\ell-a^\ell\cdot x)^2$. 
In this case, provided the covariates and errors are centered and have finite second moment, the population loss is of the form
\begin{equation}\label{eq:lin-reg-phi}
\Phi(x) = \norm{x-v}^2+1 = 2-2 m(x)+ C,
\end{equation}
for $x$ on the unit sphere.  
{
	
	\begin{prop}\label{prop:linear-regression-prop}
		Consider linear regression with centered i.i.d.\ covariates $(a^\ell)_{\ell}$  whose entries have finite $10$-th moment and centered i.i.d.\ errors $(\epsilon^\ell)_{\ell}$ with finite 5-th moment.  Then Assumptions A and B hold and the population loss has information exponent $1$. 
\end{prop}}
{Proposition~\ref{prop:linear-regression-prop} is proved in Appendix~\ref{app:linear-regression}. We emphasize here that we do not assume that the covariate vectors have independent entries, nor do we assume Gaussianity of either the covariates or the errors.}

{
	\begin{rem}\label{rem:linear-regression-scaling}
		We now return to the issue raised in the introduction regarding the scaling of the risk and its derivatives. 
		For simplicity let us focus on the case where the designs and noise are i.i.d. standard Gaussian.
		The canonical empirical risk in this setting is the least squares risk $\hat R(x)= \norm{Y-Ax}_2^2$. 
		To apply the SDE approximations referenced above  to the SGD we consider, one should work with this risk.
		By direct differentiation, we see that the operator norm of $\nabla^2 \hat R$ scales like that of $AA^T$ which, 
		recalling classical properties of Wishart matricies~\cite{AGZ10}, scales like $M$ so that the empirical risk is $L$-smooth for $L$ that is of order $M$. A similar argument applies to the Lipschitz constant. On the other hand, one might rescale and consider instead  $\tilde{R}(x)=\frac{1}{M}\norm{Y-Ax}_2^2$. 
		The same reasoning, however, shows that $\|\tilde{R}\|_\infty\leq C$ with high probability. Thus the resulting invariant measure $\pi(dx)\propto e^{\tilde R(x)}dx$ is a uniformly bounded tilt of the uniform measure ($c dx \leq d\pi \leq C dx$ for some $C,c>0$ independent of $N$) and yields an exponentially small mass for any $\epsilon$-neighborhood of $v$ for $\eps<\pi/2$ by concentration of measure.
\end{rem}}

{
	\subsection{Online PCA}\label{subsec:online-PCA}
	From a statistical perspective, a classical application of online SGD to a matrix model is
	online PCA (sometimes called streaming PCA) whose analysis goes back to the work of~\cite{KRASULINA1969189} and ~\cite{OjaKarhunen}. The long history and vast literature on this problem makes it impossible to provide anywhere near a complete discussion of previous work, but we show that the problem fits into our classification. 
	Here one is given i.i.d.\
	samples $(Y^{\ell})$ from some distribution in an online fashion and the goal is to find
	the principal directions. Our results immediately apply in the commonly studied rank 1 setting where we assume that there is only
	one principal direction: namely, suppose that $(Y^{\ell})$ are i.i.d.\ centered
	random vectors in $\R^{N}$ with covariance $\E[YY^{T}]=S$, where
	\[
	S=I+\lambda v_0 v_0^{T}
	\]
	where $v$ is a fixed unit vector, $\lambda> 0$ is a fixed signal-to-noise
	ratio, and one iteratively optimizes the loss 
	\[
	\cL(x;Y)=-(x,Y Y^{T}x)
	\]
	over $\mathbb S^{N-1}$. We view this as a stochastic approximation to $- (x,\E[YY^{T}]x)$. 
	In the case that $Y^{\ell}\sim N(0,I+\lambda v_0 v_0^{T})$, the offline
	version of this problem immediately connects to the study of spiked
	Wishart matrices investigated in many works. 
	
	Observe that if we let $S=I+\lambda v_0v_0^{T}$, we have 
	\begin{equation}
	\Phi(x)=-(x,\E [Y Y^{T}]x)=-(x,Sx)=-\lambda(v_0,x)^{2}-1,\label{eq:online-pca-phi}
	\end{equation}
	since $x$ is a unit vector. Evidently this problem has information
	exponent $2$ so that the thresholds match those in the spiked matrix
	models of Section~\ref{subsec:tensor-pca}. In this setting, we have the following
	which holds for general $(Y^\ell)$ under mild moment assumptions. 
	\begin{prop}
		\label{prop:online-pca} Suppose that $\lambda>0$ is fixed and that $(Y^\ell)$ 
		are i.i.d.\ centered random vectors whose entries have bounded $10$-th moment: 
		$\sup_{i}\E [Y_{i}^{10}]<C$ for some $C>0$.
		Then Assumptions A and $B$ hold, and the model has information exponent $2$. 
	\end{prop}
	\noindent	The proof  of Proposition~\ref{prop:online-pca} is elementary and can be found in Appendix~\ref{app:streaming-pca}. 
}

\subsection{Spiked matrix and tensor models}\label{subsec:tensor-pca}
Another important class of examples are the spiked matrix and tensor models (also referred to as tensor PCA) \cite{johnstone2000distribution,Peche06,MR14}. 
Here we are given $M=\alpha N$ i.i.d.\ observations $(Y^\ell)$ of a rank 1 $p$-tensor on $\R^N$ corrupted by additive noise: 
\begin{equation}\label{eq:tensor-pca}
Y^\ell = J^\ell + \lambda v_0^{\tensor p}
\end{equation}
where $(J^{\ell})$ are i.i.d.\ copies of $p$-tensors with i.i.d.\ entries of mean zero and variance one, and $\lambda$ is a signal-to-noise parameter.  The goal in these problems is to infer the vector $v_0\in \mathbb R^N : \|v_0\|=1$. 
A standard approach to inferring $v_0$ is by optimizing the
following $\ell^2$ loss: 
\begin{equation}\label{eq:tensor-pca-loss}
\mathcal{L}(x; Y) : = \big(Y,x^{\tensor p}\big)=  (J,x^{\otimes p})+\lambda\big(x\cdot v_0\big)^{p}.
\end{equation}
When $J$ is Gaussian, and we restrict to $\mathbb{S}^{N-1}$, this is simply maximum likelihood estimation. When
$J$ is non-Gaussian, this can be thought of as computing the
best rank 1 approximation to $J$.

We obtain the following result regarding the information exponent of these models whose proof is deferred to Appendix~\ref{app:tensor-pca}.

\begin{prop}\label{prop:tensor-pca-information-exponent}
	Consider the spiked tensor model where $\lambda = 1$ and $J$ is an i.i.d.\ $p$-tensor with loss \eqref{eq:tensor-pca-loss}.  
	Suppose that entries of $J$ have mean zero and finite $6$th moment.
	Then Assumptions A and B hold and the population loss has information exponent $p$.
\end{prop}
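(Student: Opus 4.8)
The plan is to compute the population loss in closed form and then verify, in that order, Assumption A, the value of the information exponent, and Assumption B. Since the entries of $J$ are i.i.d.\ with mean zero, every monomial coefficient $\E[J_{i_1\cdots i_p}]$ vanishes, so $\E[(J,x^{\otimes p})]=0$; with $\lambda=1$ this gives $\Phi(x)=\E[\cL(x;Y)]=(x\cdot v_0)^p=m(x)^p$, i.e.\ $\phi(m)=-m^p$ after taking the sign consistent with a descent algorithm (online SGD runs on $-(Y,x^{\otimes p})$, equivalently one maximizes the correlation in \eqref{eq:tensor-pca-loss}). From $\phi(m)=-m^p$ the first two claims are immediate: $\phi$ is a polynomial, hence lies in $C^{p+1}([-1,1])$; its derivatives at the equator satisfy $\phi^{(\ell)}(0)=0$ for $\ell<p$, $\phi^{(p)}(0)=-p!<0$, and $\phi^{(p+1)}\equiv 0$, so the information exponent is exactly $p$ in the sense of Definition~\ref{def:inf-exponent}; and $\phi'(m)=-p\,m^{p-1}<0$ for $m\in(0,1)$, so Assumption A holds.

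The substance of the proof is Assumption B. I would first record that the sample-wise error depends only on the noise, $H_N(x)=\cL(x;Y)-\Phi(x)=-(J,x^{\otimes p})$, extended to $\R^N$ by the same polynomial formula, and that its spherical gradient is $P_x\nabla_{\mathrm{eucl}}H_N(x)$ with $P_x=I-xx^\top$ the tangential projection; here $\nabla_{\mathrm{eucl}}H_N(x)=-\sum_{j=1}^p T^{(j)}(x)$ is a sum of $p$ partial contractions, where $(T^{(1)}(x))_i=\sum_{i_2,\dots,i_p}J_{i\,i_2\cdots i_p}\,x_{i_2}\cdots x_{i_p}$ and the other $T^{(j)}$ have the free index in slot $j$. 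For \eqref{eq:condition-1} the key observation is that for any deterministic $u$, each $T^{(j)}(x)\cdot u$ is a linear functional $(J,C^{(j)}_u)$ of $J$ whose coefficient tensor ($u$ in slot $j$, $x$ in the other $p-1$ slots) has Frobenius norm $\|u\|\,\|x\|^{p-1}=\|u\|$; since $\E[J_aJ_b]=\delta_{ab}$ this gives $\E[(T^{(j)}(x)\cdot u)^2]=\|u\|^2$, hence by Cauchy--Schwarz $\E[(\nabla H_N(x)\cdot u)^2]\le p^2\|u\|^2$ for $u\in x^\perp$. Taking $u=P_x\theta_N$, for which $\|P_x\theta_N\|^2=1-m(x)^2\le1$, bounds the left-hand side of \eqref{eq:condition-1} by $p^2$, uniformly in $x$.

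For \eqref{eq:condition-2} I would take $\iota=2$, so that the exponent $4+\iota=6$ matches exactly the assumed finite sixth moment of the entries of $J$. By the triangle and power-mean inequalities, $\|\nabla H_N(x)\|^6\le\|\nabla_{\mathrm{eucl}}H_N(x)\|^6\le p^5\sum_{j}\|T^{(j)}(x)\|^6$, so it suffices to bound $\E[\|T^{(1)}(x)\|^6]$ (the remaining $T^{(j)}$ being handled identically, by permutation symmetry of the i.i.d.\ entries). Writing $Z_i=(T^{(1)}(x))_i$, the crucial structural point is that, since the entries of $J$ are i.i.d., the index sets $\{(i,i_2,\dots,i_p):i_2,\dots,i_p\in[N]\}$ are disjoint across $i$, so $Z_1,\dots,Z_N$ are independent; moreover each $Z_i$ is a weighted sum of i.i.d.\ mean-zero unit-variance entries whose weight vector $(x_{i_2}\cdots x_{i_p})$ has unit $\ell^2$ norm (and $\ell^q$ norm at most $1$ for $q\ge2$). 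Rosenthal's inequality then yields $\E[Z_i^2]=1$ and $\E[Z_i^4],\E[Z_i^6]\le C$, with $C$ depending only on the sixth moment of $J$ and uniform in $x$. Expanding $\E[\|T^{(1)}(x)\|^6]=\sum_{i,j,k}\E[Z_i^2Z_j^2Z_k^2]$ and separating the $\le N^3$ all-distinct terms (each equal to $1$), the $O(N^2)$ terms with exactly one coincidence (each $\le\E[Z^4]\E[Z^2]$), and the $O(N)$ fully coincident terms (each $\le\E[Z^6]$) gives $\E[\|T^{(1)}(x)\|^6]\le C'N^3$, whence $\E[\|\nabla H_N(x)\|^6]\le C''N^3=C''N^{(4+\iota)/2}$ after enlarging the constant; this is \eqref{eq:condition-2}.

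I expect the sixth-moment estimate \eqref{eq:condition-2} to be the only genuinely non-trivial ingredient: it is where the hypothesis of a finite sixth moment on the entries of $J$ is consumed, via Rosenthal's inequality used to keep $\E[Z_i^6]$ bounded uniformly over $x$, and where the i.i.d.\ (rather than merely symmetric) structure of $J$ is exploited so that the $Z_i$ are independent and the moment expansion collapses to exactly the power $N^3$. If one wished to allow a symmetric $J$ instead, each entry would lie in at most $p$ slices and a short additional bookkeeping step would recover the same scaling. The computation of $\Phi$, the identification of the information exponent, the verification of Assumption A, and of \eqref{eq:condition-1}, are all short and direct.
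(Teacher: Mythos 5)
Your proposal is correct, and since the paper explicitly leaves this proposition's proof to the reader as ``an explicit calculation,'' your argument supplies exactly the missing details: computing $\Phi(x)=-m^p$ (after the sign convention clarification), reading off the information exponent and Assumption A directly, and verifying Assumption B via the decomposition of $\nabla_{\mathrm{eucl}}H_N$ into the $p$ partial contractions $T^{(j)}$, the tensor isometry $\E[(J,T)^2]=\|T\|_F^2$ for condition (a), and Rosenthal's inequality plus independence of the slices $(T^{(1)}(x))_i$ for the sixth-moment bound in condition (b). The only thing I would flag for completeness is to state explicitly that condition (a) uses $\nabla H_N(x)\cdot\theta_N=\nabla_{\mathrm{eucl}}H_N(x)\cdot P_x\theta_N$ with $\|P_x\theta_N\|\le1$, and that the constant in Rosenthal depends only on $\E|J_{i_1\cdots i_p}|^6$, which is exactly what the sixth-moment hypothesis provides; both are implicit in what you wrote.
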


From this we see that the spiked matrxi model falls in our quasi-linear class,
whereas the spiked tensor model is in the polynomial class.
We now discuss the relation between our sample complexity thresholds for online SGD and those found for other algorithms in preceding work.  
For the sake of that comparison, we recall here that
the information theoretic threshold for this estimation problem, with the above scaling is at $\lambda \sqrt \alpha$  of order one~\cite{Peche06,mont15,BMPW16,LMLKZ17,BMMN17,BBCR18,JLM20}. 

\subsubsection{Spiked tensor models}
The tensor case has recently seen a surge of interest as it is expected to be an example of an inference problem which has a diverging statistical-to-algorithmic gap.
Strong evidence for this comes from the analysis of spectral and sum-of-squares-type methods where sharp thresholds of the form $\lambda\sqrt{\alpha}\sim N^{\frac{k-2}{4}}$ have been obtained for efficient estimation \cite{MR14,kim2017community,hopkins2015tensor,Kikuchi-tensor-PCA,hopkins2016fast}. 

On the other hand, it was conjectured \cite{MR14} that power iteration and approximate message passing should have a threshold of $\lambda\sqrt\alpha = O(N^{\frac{k-2}{2}})$. It has been speculated that the latter threshold may be common to first-order methods without any global or spectral initialization step.

In~\cite{BGJ18a}, efficient recovery was proved for $\lambda \sqrt \alpha$ growing faster than $N^{(k-2)/2}$ for gradient descent and Langevin dynamics, using a differential inequalities based argument originating from~\cite{BGJ18a} and evidence  was provided for hardness when it is smaller.  (See also~\cite{passed-and-spurious,big-bad-minima,marvels-pitfalls,iron-out} for related, non-rigorous analyses.)   
The threshold we find of $\alpha \sim N^{k-2}$ in Theorem~\ref{thm:main-1}, exactly matches this, and shows that the
online SGD attains the (conjecturally) optimal thresholds for first order methods. {Note, however, that the above works required the random tensor $J$ to be Gaussian, whereas Proposition~\ref{prop:tensor-pca-information-exponent} covers non-Gaussian $J$ as well.}

\begin{rem}\label{rem:spiked-matrix}
	Similarly to the online PCA example, when $p=2$ our results imply a threshold for online SGD of at least $\alpha \sim \log N$ and at most $\alpha \sim (\log N)^2$ for spiked matrix models of the form~\eqref{eq:tensor-pca}. The reader may note that one expects to be able to solve spiked matrix estimation tasks with $\alpha = O(1)$, e.g., in the offline setting using gradient descent. 
	The $\log N$ factor in our results is due to the on-line nature of this setting and should be compared to the run-time of 
	gradient descent in the offline setting with a random start, which will indeed take $\log N$ time just to weakly recover. 
\end{rem}

\subsubsection{Spiked tensor-tensor and matrix-tensor models: min-stability of information exponents}\label{subsec:tensor-tensor-models}
Recently, there have been several results regarding the so-called spiked matrix-tensor and spiked
tensor-tensor models, where one is given a pair of tensors of the
form $$Y=\left(J+\lambda v_{0}^{\tensor p},\tilde{J}+\lambda v_{0}^{\tensor k}\right)\,,$$
where $J$ and $\tilde{J}$ are independent $p$ and $k$-tensors respectively. Here one is interested
in inferring $v_{0}$ via the sum of the losses. Evidently these problems
will have information exponent $\min\{p,k\}$. 

In the case $p>k=2$, scaling limits (as $N\to\infty $ and then  $m_0\downarrow 0^+$) of approximate message passing through state evolution equations and gradient descent through the Crisanti--Horner--Sommers--Cugliandolo--Kurchan equations have been studied in~\cite{big-bad-minima,passed-and-spurious, marvels-pitfalls}. 
In our results, we investigate the regime $m_0\sim N^{-1/2}$ corresponding to a random start, and find that online SGD recovers for $\alpha \gtrsim (\log N)^2 $ and fails for $\alpha = o(\log N)$. 
We expect that $\alpha = \Theta(\log N)$ is in fact the true threshold, since (offline) gradient decent requires $N \log N$
time to optimize the population loss from a random start.

\subsection{Two-component mixture of Gaussians: an easy-to-critical transition}\label{subsec:mixture-of-Gaussians}

Consider the case of maximum likelihood estimation for a mixture of
Gaussians with two components. We assume for simplicity that
the variances are identical and that the mixture weights are known. We consider 
the ``spherical'' case and assume that the clusters
are antipodal, i.e., 
\begin{equation}\label{eq:Gaussian-mixture}
Y\sim p \cN(v_0,Id)+(1-p)\cN(-v_0,Id)\qquad \mbox{for}\qquad v_0\in \mathbb R^N: \norm{v_0}=1\,.
\end{equation}
Without loss of generality, take $v_0=e_{1}.$ The log-likelihood
in this case is of the form 
\[
\tilde{f}(x,Y)= \log\Bigg(p\cdot \exp\Big(-\frac{1}{2}\norm{Y-x}^{2}\Big)+(1-p)\cdot\exp\Big(-\frac{1}{2}\norm{Y+x}^2\Big)\Bigg).
\]
To place this in to our framework, it will be helpful to re-parameterize the mixture weights as $p \propto e^{h}$  for some $h\in\R$.
In this setting, we see that maximum likelihood estimation is equivalent to minimizing the loss 
\[
\cL(x;Y) = -\log\cosh(Y\cdot x + h).
\]
\begin{prop}\label{prop:mixture-model}
	Consider a two-component Gaussian mixture model as in \eqref{eq:Gaussian-mixture}.
	If we take as loss the (negated) log-likelihood, then Assumptions A and B hold.
	Furthermore, if $p\neq 1/2$, it has information exponent $1$, and if $p= 1/2$, it has information exponent $2$. 
\end{prop}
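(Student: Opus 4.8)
The plan is to reduce the whole statement to a single one-dimensional function. Writing a sample of~\eqref{eq:Gaussian-mixture} as $Y=\sigma v_0+g$ with $g\sim\cN(0,I_N)$ and $\sigma\in\{\pm1\}$ independent, $\prob(\sigma=1)=p$, one has for every unit vector $x$ that $Y\cdot x=\sigma\,m(x)+g\cdot x$ with $g\cdot x\sim\cN(0,1)$ independent of $\sigma$. Setting $\psi(a):=\E_{Z\sim\cN(0,1)}[\log\cosh(a+Z)]$, this gives the exact form
\[
\Phi(x)=-\E[\log\cosh(Y\cdot x+h)]=-p\,\psi(m(x)+h)-(1-p)\,\psi(-m(x)+h)=:\phi(m(x))\,.
\]
Since $\log\cosh$ is smooth with all derivatives of order $\geq1$ bounded (they are polynomials in $\tanh$ and $\sech^2$), I can differentiate under the expectation to obtain $\psi\in C^\infty$, with $\psi'(a)=\E[\tanh(a+Z)]$ bounded, odd (by $Z\eqdist-Z$), $\psi'(0)=0$, and $\psi''(a)=\E[\sech^2(a+Z)]>0$, so $\psi'$ is strictly increasing; moreover $\norm{\psi^{(n)}}_\infty<\infty$ for every $n\geq1$. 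Consequently $\phi\in C^\infty([-1,1])$ with bounded derivatives of all orders $\geq1$, and because nothing above depends on $N$, the regularity clauses of Definition~\ref{def:inf-exponent} and the uniformity of the constants are automatic.

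Next I would read off the information exponent. From the formula, $\phi'(0)=(1-2p)\psi'(h)$, and since $p\propto e^h$ one has $p\gtrless1/2\iff h\gtrless0$; hence when $p\neq1/2$ exactly one of $1-2p$ and $\psi'(h)$ is positive, so $\phi'(0)<0$ and the information exponent is $1$. When $p=1/2$ we have $h=0$, so $\phi(m)=-\frac12(\psi(m)+\psi(-m))$ is even, giving $\phi'(0)=0$ and $\phi''(0)=-\psi''(0)=-\E[\sech^2 Z]<0$, i.e.\ information exponent $2$. For Assumption~A I would use oddness to write $\phi'(m)=-\big(p\,\psi'(m+h)+(1-p)\,\psi'(m-h)\big)$ and show the bracket is positive on $(0,1)$: for $h=0$ it equals $\psi'(m)>0$; for $h>0$ one has $\psi'(m+h)>0$, which settles $m\geq h$ immediately, while for $m<h$ one combines $p>1-p$ with $\psi'(m+h)>\psi'(h-m)=-\psi'(m-h)>0$; the case $h<0$ is symmetric.

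Finally, Assumption~B. The Euclidean gradient of $\cL(x;Y)$ is $-\tanh(Y\cdot x+h)\,Y$, so the spherical gradient is $\nabla\cL(x;Y)=-\tanh(Y\cdot x+h)\big(Y-(x\cdot Y)x\big)$. Hence $\norm{\nabla\cL(x;Y)}\leq\norm{Y}\leq 1+\norm{g}$ and $\norm{\nabla\Phi(x)}\leq\E\norm{Y}\leq 1+\sqrt N$, so $\norm{\nabla H_N(x)}\leq 2+\norm{g}+\sqrt N$ and, using $\norm{g}^2\sim\chi^2_N$, $\E\norm{\nabla H_N(x)}^{4+\iota}=O(N^{(4+\iota)/2})$ for any fixed $\iota>0$, which is~\eqref{eq:condition-2}. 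For~\eqref{eq:condition-1}, since $\E[\nabla H_N(x)\cdot v_0]=0$ it suffices to bound $\E[(\nabla\cL(x;Y)\cdot v_0)^2]$; writing $\nabla\cL(x;Y)\cdot v_0=-\tanh(Y\cdot x+h)\big(\sigma(1-m^2)+g\cdot v_0-m\,(g\cdot x)\big)$, the parenthesis has absolute value at most $1+\abs{g\cdot v_0}+\abs{g\cdot x}$, and $g\cdot v_0$, $g\cdot x$ are standard Gaussians, so this expectation is bounded by an absolute constant, uniformly in $x$ and $N$.

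The only genuinely delicate step is Assumption~A: the information exponent is pinned down by a single Taylor coefficient at $m=0$, whereas strict negativity of $\phi'$ on the whole of $(0,1)$ requires controlling the sign of $\psi'$ at the \emph{shifted} arguments $m\pm h$, and it is precisely the monotonicity and oddness of $\psi'$ that make the short casework go through. Everything else is a routine, dimension-free computation, which is also why the constants in Definition~\ref{def:inf-exponent} and Definition~\ref{def:naturally-scaling} come out uniform in $N$ for free.
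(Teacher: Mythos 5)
Your proof is correct, and the reduction to a one-dimensional function of $m$ is essentially the same as the paper's: conditioning the expectation on the sign variable $\sigma$ (the paper's $\eps$) gives the paper's $\phi(m)=-\E\big[\log\cosh(Z_1+\eps m+h)\big]$, which is exactly your $-p\,\psi(m+h)-(1-p)\,\psi(-m+h)$. The Taylor coefficients at $m=0$ are computed the same way in both.

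Where you diverge is in verifying Assumption~A on the full interval $(0,1)$. You argue by three-way casework on the sign of $h$ and the relative size of $m$ and $|h|$, using oddness and monotonicity of $\psi'$. The paper's route is shorter and worth internalizing: it observes that
\[
\phi''(m) \;=\; -\E\big[\operatorname{sech}^2(Z_1+\eps m+h)\big] \;<\;0 \qquad\text{for every } m\in[-1,1],
\]
so $\phi'$ is strictly decreasing; since $\phi'(0)=(1-2p)\psi'(h)\le 0$ (with equality only when $p=1/2$), it follows immediately that $\phi'(m)<0$ for all $m\in(0,1)$. In your parametrization this is simply $\phi''(m)=-p\,\psi''(m+h)-(1-p)\,\psi''(m-h)<0$, which makes your casework unnecessary. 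This global-concavity observation is the same device that pins down the information exponent ($\phi''(0)<0$), so invoking it once does double duty.

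On the other hand, your verification of Assumption~B is a genuine addition: the paper's appendix proof of this proposition only carries out the $\phi'$, $\phi''$ computations and does not spell out the moment bounds \eqref{eq:condition-1}--\eqref{eq:condition-2}. Your argument — using $|\tanh|\le 1$ to bound $\norm{\nabla\cL}$ by $\norm{Y}$, then $\norm{Y}^2\le 2(1+\norm{g}^2)$ with $\norm{g}^2\sim\chi^2_N$ for \eqref{eq:condition-2}, and the one-dimensional reduction $\nabla\cL\cdot v_0=-\tanh(Y\cdot x+h)\big(\sigma(1-m^2)+g\cdot v_0-m\,(g\cdot x)\big)$ for \eqref{eq:condition-1} — is exactly what one would want and is dimension-free as required.

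One small presentational note: you bound $\norm{\nabla\Phi(x)}\le\E\norm{Y}\le 1+\sqrt N$, which is crude compared with what you already know, namely $\nabla\Phi=\phi'(m)\nabla m$ so $\norm{\nabla\Phi}\le\sup|\phi'|=O(1)$; either is sufficient for the $O(N^{(4+\iota)/2})$ bound, but the $O(1)$ version is tidier and matches the general structure the paper uses in the other examples.
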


The proof of Proposition~\ref{prop:mixture-model} is deferred to Appendix~\ref{app:mixture-model}.

While there is a huge literature in the Gaussian mixture model setting, we mention a few recent results related to our work. 
As a consequence of these results we obtain an $O(N \log^2 N)$ sample complexity upper bound for online SGD. 
It is known that, from the perspective of learning the density in TV distance, this is optimal, see \cite{KMV10,near-optimal-NIPS,nearly-tight-NIPS}. {With different assumptions on the sample complexity,~\cite{MBM18} used a critical point analysis to understand the behavior of true gradient descent.}

\section{Analysis of two stages of performance}
We now turn the proofs of our results.  We begin by stating our main technical results 
for the search and descent stages of performance of SGD, 
together giving Theorem~\ref{thm:main-1}. These two results show that dynamics attains order one correlation and exits the search phase on a timescale of $\tilde O(\alpha_c(N,k)N )$ and, from there, well-approximates the population dynamics until it quickly attains $1-o(1)$ correlation.  
Without loss of generality
and for notational convenience we will take $\theta_N =  e_1$ for the remainder of this paper, where $e_1$ denotes the usual
Euclidean basis vector.

\subsection{Search phase}
Our main results for the search phase shows that the timescale for attaining weak recovery when started at any point with $m_0= \Omega( N^{-1/2})$ is of order $\tilde O(\alpha_c(N,k)N)$. 	For this weak recovery result, we actually do not need the full strength of Assumption A. To that end, for $0<\varrho\leq1$, let us introduce \textbf{Assumption $\text{A}_\varrho$}  that $\phi$ is differentiable, and $\phi'$ is strictly negative on $(0,\varrho)$. Evidently Assumption A implies Assumption $A_\varrho$ for every $\varrho$.

\begin{thm}
	\label{thm:main-theorem} 
	Suppose there exists $\varrho>0$ such that Assumptions $\text{A}_{\varrho}$ and B hold and that the population loss has information exponent $k$. Let $(\alpha_N,\delta_N)$ be in Theorem~\ref{thm:main-1}. 
	Then there exists $\eta>0$
	such that if $X_{t}  = X_{t}^{N,\delta}$ is the online
	SGD with step size $\delta$, we have for every $\gamma>0$, 
	\begin{align*}
	\lim_{N\to\infty}\,\inf_{x_{0}:m(x_{0})\geq\gamma/\sqrt{N}}\,\mathbb{P}_{x_{0}}\Big(\tau_\eta^+ < \alpha N \Big)= & 1\,.
	\end{align*}
	where, we recall, $\tau_\eta^+$ is the stopping time $\inf\{t: m_t > \eta\}$. 
\end{thm}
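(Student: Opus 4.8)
The plan is to reduce everything to a one-dimensional stochastic recursion for the latitude $m_t=X_t\cdot e_1$ and then to compare it, dyadically in the value of $m_t$, with the population dynamics~\eqref{eq:population-dynamics}. Writing $\tilde X_t=X_{t-1}-\tfrac\delta N\nabla\cL_N(X_{t-1};Y^t)$ and using that the spherical gradient is tangent to $\bS^{N-1}$---so that $\tilde X_t\cdot X_{t-1}=1$ and hence $\norm{\tilde X_t}^2=1+\tfrac{\delta^2}{N^2}\norm{\nabla\cL_N(X_{t-1};Y^t)}^2$---together with $\nabla\cL_N=\nabla\Phi_N+\nabla H_N^t$ and $\nabla\Phi_N(x)\cdot e_1=\phi'(m(x))(1-m(x)^2)$, one obtains
\[
m_t=\Big(m_{t-1}-\tfrac\delta N\phi'(m_{t-1})(1-m_{t-1}^2)-\tfrac\delta N\,\nabla H_N^t(X_{t-1})\cdot e_1\Big)\Big(1+\tfrac{\delta^2}{N^2}\norm{\nabla\cL_N(X_{t-1};Y^t)}^2\Big)^{-1/2}.
\]
This isolates the three mechanisms that drive the comparison: (i) a nonnegative drift on $(0,\eta)$ which, by Definition~\ref{def:inf-exponent} and Taylor's theorem, satisfies $-\phi'(m)(1-m^2)\ge c'\,m^{k-1}$ on $(0,m^\ast)$ for some $m^\ast,c'>0$ (and, by Assumption~A, is bounded below by a positive constant on $[m^\ast,\eta]$, so any fixed $\eta\in(0,1)$ may be targeted); (ii) the martingale increments $\xi_t=\tfrac\delta N\nabla H_N^t(X_{t-1})\cdot e_1$, whose partial sums are exactly the directional error martingale $M_t$ of~\eqref{eq:DE-martingale}, with predictable quadratic variation at most $\tfrac{\delta^2 C_1}{N^2}t$ by Assumption~B(a); and (iii) the normalization factor $\big(1+\tfrac{\delta^2}{N^2}\norm{\nabla\cL_N}^2\big)^{-1/2}$, a \emph{multiplicative} correction towards the equator whose cumulative effect along the run is negligible because $\E\norm{\nabla\cL_N}^2=O(N)$ by Assumption~B(b).

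Next I would fix a good event $\cE=\cE_1\cap\cE_2\cap\cE_3$, where $\cE_1=\{\sup_{t\le M}|M_t|\le\gamma/(2\sqrt N)\}$, which occurs with probability $1-o(1)$ by Doob's $L^2$ maximal inequality and Assumption~B(a) since the failure probability is $O(\alpha_N\delta_N^2)\to0$---this is exactly where the constraint $\delta_N\ll\alpha_N^{-1/2}$ is used; $\cE_2=\{\max_{t\le M}\norm{\nabla H_N^t(X_{t-1})}\le N^{1/2+\kappa}\}$ for a small fixed $\kappa>0$, which occurs with probability $1-o(1)$ by Assumption~B(b), Markov's inequality, and a union bound over the polynomially many steps; and $\cE_3=\{\sum_{t\le M}\tfrac{\delta^2}{N^2}\norm{\nabla\cL_N(X_{t-1};Y^t)}^2=o(1)\}$, again from Assumption~B(b). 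On $\cE$, inserting (i)--(iii) into the recursion and summing (using summation by parts so that the normalization does not inflate the martingale term) shows that, as long as $m_s\in(0,\eta)$ for all $s<t$, one has $m_t\ge(1-o(1))\big(m_0+\sum_{s\le t}D_s\big)-M_t$ with $D_s\ge0$; since $m_0\ge\gamma/\sqrt N$ and $\sup_{t\le M}|M_t|\le\gamma/(2\sqrt N)$, this forces $m_t\ge m_0/4>0$ throughout, so the drift is never switched off.

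The heart of the argument is then a dyadic passage-time estimate. Put $b_i=2^im_0$ and $\tau_i=\inf\{t:m_t\ge b_i\}$, choosing $I$ so that $b_I\asymp\eta$, whence $I=\Theta(\log N)$. On $\cE$, once $m_t$ reaches $b_i$ the drift is at least $c'\tfrac\delta N b_i^{k-1}$ while the martingale cannot drag it below $b_{i-1}$ before it climbs to $b_{i+1}$, so $\tau_{i+1}-\tau_i\le C\tfrac{N}{\delta\,b_i^{k-2}}$. Summing this series, which is geometric in $i$, gives $\tau_\eta^+\le\tau_I\le C\delta^{-1}N$ for $k=1$ (dominated by the largest scale), $\tau_I\le C\delta^{-1}N\log N$ for $k=2$ (all scales comparable), and $\tau_I\le C\delta^{-1}N\,m_0^{-(k-2)}\asymp C\delta^{-1}N^{k/2}$ for $k\ge3$ (dominated by the smallest scale $b_0\asymp N^{-1/2}$). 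Comparing with the hypotheses $\alpha_N^{-1}\ll\delta_N$ and, respectively, $\alpha_N\gg\alpha_c(N,k)$, $\alpha_N\gg\alpha_c(N,k)\log N$, $\alpha_N\gg\alpha_c(N,k)(\log N)^2$, one finds $\tau_\eta^+\ll\alpha_N N=M$ on $\cE$, which is the claim; the lower constraint $\delta_N\gg\alpha_N^{-1}$ is precisely what makes $\delta_N^{-1}$ small enough for the passage time to fit within $M$ steps.

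The step I expect to be the main obstacle is the analysis at the smallest dyadic scale $b_0\asymp N^{-1/2}$: there the per-step drift $\tfrac\delta N b_0^{k-1}$ is far smaller than the per-step noise scale $\tfrac\delta N$, so $m_t$ is noise dominated, and one must show that the directional error martingale---whose \emph{total} fluctuation over the entire run is only $\ll N^{-1/2}$---cannot drag $m_t$ downward faster than the minute drift pushes it up, uniformly over the $\Theta(\log N)$ dyadic excursions; quantitatively this requires the failure probabilities at the successive scales to sum to $o(1)$, which works because each is geometrically smaller than the previous. This is exactly the tension captured by the window $\alpha_N^{-1}\ll\delta_N\ll\alpha_N^{-1/2}$, and it is the origin of the extra $\log N$ and $(\log N)^2$ factors in the sample complexity thresholds.
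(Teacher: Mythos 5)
Your proposal is correct in spirit and takes a genuinely different route from the paper at two key junctures, so let me compare the two and also flag a few imprecisions.

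The paper's route is to first establish (Proposition~\ref{prop:main-difference-inequalities}) a one-sided pathwise difference inequality
$m_{t}\geq\tfrac{m_{0}}{2}+\tfrac{\delta a_{k}}{8N}\sum_{j<t}m_{j}^{k-1}$
valid up to $\tau_{\gamma/(2\sqrt N)}^{-}\wedge\tau_{\eta}^{+}\wedge\bar t$ with probability $1-o(1)$, and then in Section~\ref{sec:weak-recovery} to \emph{solve} this deterministic inequality via the discrete Gr\"onwall inequality ($k=2$) or the discrete Bihari--LaSalle inequality ($k\geq3$). Within Proposition~\ref{prop:main-difference-inequalities}, the normalization term $\delta^{2}L_{t}|m_{t-1}|/N$ is handled by splitting at a truncation level $\hat L = N^{1/2-\iota/4}$: the truncated part is paired against (a $1/\log N$ fraction of) the drift and controlled by Freedman's inequality plus a union bound over $M$ steps (Proposition~\ref{prop:drift-bound}), while the rare large part is bounded by Markov. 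You do something genuinely different at both steps: (i) you absorb the \emph{entire} cumulative normalization effect $\sum_{t\leq M}\tfrac{\delta^{2}}{N^{2}}\|\nabla\cL_{N}\|^{2}$ into a single Markov bound ($\cE_{3}$, feasible since its expectation is $O(\alpha\delta^{2})=o(1)$) and then deduce a $(1-o(1))$-multiplicative loss on $m_{0}$ and on the drift via the product $\prod r_{s}^{-1}$, with summation by parts so that the weighted martingale $\sum w_{s}\xi_{s}$ stays bounded by $(1+o(1))\sup_{s}|M_{s}|$; and (ii) you solve the resulting inequality by dyadic passage times $\tau_{i+1}-\tau_{i}$ at levels $b_{i}=2^{i}m_{0}$ rather than invoking Bihari--LaSalle. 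Both are sound, the second in particular being equivalent to Bihari--LaSalle by a geometric-series estimate; your version is more elementary and arguably more portable if the drift were not of exact power form $m^{k-1}$, at the cost of tracking an extra bookkeeping layer. What the paper's term-by-term comparison buys is that it localizes the radial correction against the drift directly, which is why the factor $\log N$ appears explicitly in the step-size constraint $\bar\delta_{N}(k)$ and in Proposition~\ref{prop:drift-bound}; your approach produces the same constraints simply because $\delta\ll\alpha^{-1/2}$ and $\alpha\gg \alpha_{c}(N,k)(\log N)^{2}$ already force $\delta=o(N^{-(k-2)/2}/\log N)$.

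Two imprecisions. First, your event $\cE_{2}=\{\max_{t\leq M}\|\nabla H_{N}^{t}\|\leq N^{1/2+\kappa}\}$ will \emph{not} hold with high probability for all $k$: Markov gives only $\mathbb P(\|\nabla H\|>N^{1/2+\kappa})\lesssim N^{-(4+\iota)\kappa}$, and a union bound over $M\asymp N^{k-1}$ polylog steps fails once $k$ is large (this is precisely why the paper truncates in expectation rather than unioning over steps). Fortunately your argument does not actually use $\cE_{2}$: the cumulative control $\cE_{3}$ already bounds $\max_{t}\delta^{2}L_{t}/N$, so you should simply drop $\cE_{2}$. Second, your description of the ``main obstacle'' at scale $b_{0}$ is slightly misplaced. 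Once you are on $\cE_{1}\cap\cE_{3}$, there are no per-scale failure probabilities to sum: the difference inequality holds deterministically on that event, and the dyadic estimate is a purely deterministic consequence, since $\sup_{t}|M_{t}|\leq\gamma/(2\sqrt N)\leq m_{0}/2\leq b_{0}/2$ already pins $m_{t}$ above $m_{0}/4$ at all scales. The genuine tension you describe --- that the per-step noise scale $\delta/N$ overwhelms the per-step drift $\delta b_{0}^{k-1}/N$ near the equator --- is real, but it is fully absorbed into the choice of the good event and the constraint $\alpha\delta^{2}=O(1)$, not into a scale-by-scale probability computation. Once these two points are cleaned up, the argument is a legitimate alternative to the paper's Gr\"onwall/Bihari--LaSalle route and yields the same conclusions and thresholds.
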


Before turning to the corresponding refutation theorem, let us pause to comment on the role of initialization in this result.
We note here that this result is uniform over any initial data with $\{m(x)\geq \gamma/\sqrt N\}$ for $\gamma>0$. For $m(x)$ on this scale the sample-wise error for the loss, $H_N$, dominates the population loss substantially. In particular,
both the sample and empirical losses in that region are typically highly non-convex.
This scaling, however, is the natural scaling for initializations in high-dimensional problems as the uniform measure on the upper half-sphere satisfies $m_0$ of order $N^{-1/2}$ with probability $1-o(1)$~\cite{LedouxTalagrand}: in this manner, the assumption on the initialization of Theorem~\ref{thm:main-theorem} is weaker than that of Theorem~\ref{thm:main-1}. For a discussion of initializations having $m_0= o(N^{-1/2})$ c.f.\ Remark~\ref{rem:starting-in-upper-half-sphere}, where we noted that the timescales to recovery would depend on a lower bound on the variance of the directional-error martingale, corresponding to an additional lower-bound assumption in~\eqref{eq:condition-1} of Assumption A. 

In an analogous manner, the initialization in Theorem~\ref{thm:refutation-1} can be boosted to be uniform over initializations having $m_0 = O(N^{-1/2})$; indeed this is the version we will prove.

\subsection{Descent phase}
In the search phase, the recovery follows by showing that $m_t$ obeys a differential inequality comparable to that satisfied by $\bar m_t$, the correlation of the population dynamics: SGD on the population loss, as defined below. This shows that whereas $m_t - \bar m_t$ may be quite large (the directional-error martingale is on the same scale as $m_t$), the timescale of the hitting time $\tau_\eta^+$ is the same as that of the population dynamics. In the descent phase, low-dimensional intuition applies and we establish that $\bar m_t$ is indeed a good approximation to the trajectory of $m_t$, leading to consistent estimation in a further linear time. 

To be more precise, let $\overline X_t$ be the \emph{population dynamics}, i.e., given by the following procedure
\begin{align*}
\overline X_t =  \frac{1}{\|\overline X_{t-1} -  \frac{\delta}{ N} \nabla \Phi(\overline X_{t-1})\|} \Big(\overline X_{t-1} -\frac{\delta}{ N} \nabla \Phi(\overline X_{t-1})\Big)\,.
\end{align*}
Since $\Phi(x) = \phi(m)$, the population dynamics can be reduced to its 1D projection onto the correlation variable.   Under Assumption A, we have that for every $\delta = O(1)$, and initializations $\overline{X}_{0}$ with $\liminf_N m(\overline{X}_0)>0$, 
$$\lim_{T\to\infty}\liminf_{N\to\infty} m(\overline{X}_{T\delta^{-1} N})\to 1\,.$$
We also note that if Assumption A does not  hold, then the above would \emph{not} hold.

\begin{thm}\label{thm:lln}
	Suppose that Assumption A and B hold. Fix any $\eta>0$ and
	let $X_0 = \overline{X}_{0}$ be any point such that $m(\overline{X}_{0})=\eta$.
	For $\alpha_N = \omega(1)$, and every $\alpha \delta^2 =o(1)$, we have 
	\begin{align}\label{eq:lln-fixed-start}
	\sup_{\ell \le M}\big|m(X_{\ell}) - m(\overline{X}_{\ell})\big|\to0 & \qquad\mbox{in }\mathbb P\text{-prob}.
	\end{align}
\end{thm}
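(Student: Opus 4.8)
Write $m_\ell := m(X_\ell)$ and $\overline m_\ell := m(\overline X_\ell)$, and set $D_\ell := m_\ell-\overline m_\ell$, so $D_0 = 0$. The plan is to derive an exact scalar recursion for $m_\ell$ (the same one governs $\overline m_\ell$ with the noise switched off), to show that the noise terms are uniformly negligible in $\prob$-probability, and then to control $D_\ell$ by a discrete Grönwall estimate up to the time the population dynamics reaches a neighborhood of the pole $m=1$, followed by a confinement (trapping) argument there. For Step~1, since the spherical gradient is tangent to $\mathbb S^{N-1}$ at $X_{t-1}$ one has $\norm{\tilde X_t}^2 = 1+s_t$ with $s_t := \frac{\delta^2}{N^2}\norm{\nabla\cL_N(X_{t-1};Y^t)}^2$, and using $\nabla\cL_N = \nabla\Phi+\nabla H_N$ together with $\nabla\Phi(x)\cdot e_1 = \phi'(m(x))(1-m(x)^2)$ we get
\begin{equation}
m_t = m_{t-1} - \frac{\delta}{N}\,\psi(m_{t-1}) + \Delta M_t + R_t,
\end{equation}
where $\psi(m) := \phi'(m)(1-m^2) = \phi'(m)\norm{\nabla m}^2$, $\Delta M_t := -\frac{\delta}{N}\nabla H_N^t(X_{t-1})\cdot e_1$, and $|R_t| = |(\tilde X_t\cdot e_1)((1+s_t)^{-1/2}-1)| \le s_t + s_t^{3/2}$. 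The population dynamics $\overline m_\ell$ satisfies the same recursion with $\Delta M_t=0$ and remainder $\overline R_t$ with $|\overline R_t|\le \frac{\delta^2}{N^2}\norm{\phi'}_\infty^2$.

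\emph{Step 2: the noise is negligible.} Since $Y^t$ is independent of $\mathcal F_{t-1} := \sigma(Y^1,\dots,Y^{t-1})$ and $\E[\nabla H_N(x)] = \nabla\,\E[H_N(x)] = 0$ (the interchange of $\nabla$ and $\E$ being justified by a.s.\ differentiability and the integrability in Assumption~B), the process $M_\ell := \sum_{j\le\ell}\Delta M_j$ is an $(\mathcal F_\ell)$-martingale, and Assumption~B(a) gives $\E[\Delta M_j^2\mid\mathcal F_{j-1}]\le C_1\delta^2/N^2$. By Doob's maximal inequality, $\E[\sup_{\ell\le M}M_\ell^2]\le 4C_1\alpha\delta^2/N = o(1)$. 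As Assumption~B(b) yields $\E\norm{\nabla H_N(x)}^2\lesssim N$ and $\E\norm{\nabla H_N(x)}^3\lesssim N^{3/2}$, summing the remainder bounds gives $\E\sum_{\ell\le M}(|R_\ell|+|\overline R_\ell|)\lesssim \alpha\delta^2 + \alpha\delta^3 N^{-1/2} = o(1)$. Hence $E_\ell := M_\ell + \sum_{j\le\ell}(R_j-\overline R_j)$ satisfies $\sup_{\ell\le M}|E_\ell| = o_{\prob}(1)$.

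\emph{Step 3: comparison and trapping.} Subtracting the two recursions gives $D_\ell = -\frac{\delta}{N}\sum_{j=1}^{\ell}(\psi(m_{j-1})-\psi(\overline m_{j-1})) + E_\ell$. By Assumption~A, $b:=-\psi\ge 0$ on $(0,1]$, so the drift never decreases the correlation; together with Step~2 this traps $m_\ell\in[\eta/2,1]$ for all $\ell\le M$ with probability $1-o(1)$, and on $[\eta/2,1]$ the map $\psi$ is Lipschitz, say with constant $L$. Fix $\eps>0$ and let $T_\eps$ be the first $\ell$ with $\overline m_\ell\ge 1-\eps$, capped at $M$. By the consequence of Assumption~A recalled just above the theorem (and monotonicity of $\overline m_\ell$ for $N$ large), either $T_\eps\le C_\eps\delta^{-1}N$, or $T_\eps=M$ and then necessarily $\alpha\delta\le C_\eps$; in both cases the discrete Grönwall inequality gives $\sup_{\ell\le T_\eps}|D_\ell|\le e^{L\delta T_\eps/N}\sup_{i\le M}|E_i|\le e^{LC_\eps}o_{\prob}(1) = o_{\prob}(1)$. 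If $T_\eps=M$ we are done; otherwise $\overline m_{T_\eps}\ge 1-\eps$ yields $m_{T_\eps}\ge 1-2\eps$ with probability $1-o(1)$, and since $b\ge 0$ on $[1-3\eps,1]$ a stopping-time argument using $\sup_{\ell\le M}|M_\ell| + \sum_{\ell\le M}|R_\ell| = o_{\prob}(1)$ shows $m_\ell\in[1-3\eps,1]$ for all $T_\eps\le\ell\le M$ with probability $1-o(1)$; on that range $\overline m_\ell\in[1-\eps,1]$ as well, so $|D_\ell|\le 3\eps$. Combining the two ranges, $\sup_{\ell\le M}|D_\ell|\le 3\eps$ with probability $1-o(1)$ for every $\eps>0$, which is the claim.

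\emph{Main obstacle.} The one delicate point is that the hypotheses do not bound $\alpha\delta$ (they force only $\alpha\delta = o(\sqrt\alpha)$), so a direct Grönwall bound $e^{L\alpha\delta}\sup_\ell|E_\ell|$ over all $M$ steps is useless when $\alpha\delta\to\infty$. The fix, carried out in Step~3, is to run the comparison only until the population dynamics has essentially reached the pole — a time $T_\eps = O_\eps(\delta^{-1}N)$, which keeps the Grönwall constant $O_\eps(1)$ — and to replace the comparison, for the remaining (possibly super-linear in $\delta^{-1}N$) number of steps, by a trapping estimate near $m=1$, where $b=-\psi$ is nonnegative so that $m_\ell$ cannot escape a small neighborhood of $1$ once it has entered it.
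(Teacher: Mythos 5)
Your proof is correct and takes essentially the same approach as the paper: the one-step decomposition into population drift, directional-error martingale, and second-order projection remainder; the Doob and moment bounds showing both noise terms are uniformly $o_{\mathbb P}(1)$ under $\alpha\delta^2=o(1)$; and, crucially, avoiding a global Gr\"onwall bound by splitting at a time $T_\eps=O_\eps(\delta^{-1}N)$ where the population dynamics is $\eps$-close to the pole, with a confinement argument near $m=1$ for the remaining steps. The only cosmetic difference is that the paper packages Steps 1--2 by introducing explicit linearized processes $\mathsf m_\ell,\bar{\mathsf m}_\ell$ rather than your cumulated error $E_\ell$, but the content is identical.
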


In this way, upon attaining non-trivial correlation with the ground truth, and reaching the descent phase, the SGD behaves as it would in a low-dimensional trust region and the usual ODE method applies. Namely, the high-dimensionality of the landscape is no longer relevant. Moreover, the information exponent no longer plays a central role, and the descent always takes linear time.  

\subsection{Linear sample complexity when $k=1$}Problems in the $k=1$ class behave the same in their search and descent phases, and are therefore easy in the sense that their entire trajectory satisfies the law of large numbers of Theorem~\ref{thm:lln}. As a result, we are able to sharpen the reuslts above in the $k=1$ case to analyze the situations where $\alpha_N$ is of order one (as opposed to diverging/going to zero arbitrarily slowly as was assumed in Theorem~\ref{thm:main-1}), at the cost of only attaining weak recovery. As mentioned in Remark~\ref{rem:k=1}, in the linear sample complexity regime, there are problems for which weak recovery is the best that is information theoretically possible. 

\begin{thm}\label{thm:k=1}
	Suppose that Assumptions A and B hold and that the population loss has information exponent $k=1$. For every $\epsilon>0$, the following holds: 
	\begin{enumerate}[(a)]
		\item There exists $\eta>0$ and $\alpha_0>0$, such that every $\alpha>\alpha_0$, there is a choice of $\delta>0$ such that $X_t = X_t^{N,\delta}$ initialized from the uniform measure on the upper half-sphere $\mu_N$ satisfies
		\begin{align*}
		\liminf_{N\to\infty}\mathbb P_{\mu_N} \big(m(X_M)\geq 1-\eta\big) \geq  1-\epsilon\,;
		\end{align*}
		\item For every $\eta>0$ there exists $\alpha_{0}'>0$ such that for every $\alpha<\alpha_{0}'$, and every $\delta = O(1)$,
		\begin{align*}
		\limsup_{N\to\infty} \mathbb P_{\mu_N} \big( \max_{t\leq M} m(X_t)>\eta\big) \leq \epsilon\,.
		\end{align*}
	\end{enumerate}
\end{thm}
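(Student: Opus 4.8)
The plan is to reduce both parts to the scalar recursion for the latitude $m_t:=m(X_t)=X_t\cdot e_1$. Decomposing $\nabla\cL_N=\nabla\Phi_N+\nabla H_N$, using that the spherical gradient is tangent to the sphere (so $X_{t-1}\cdot\nabla\cL_N(X_{t-1};Y^t)=0$ and $\|\tilde X_t\|^2=1+\tfrac{\delta^2}{N^2}\|\nabla\cL_N(X_{t-1};Y^t)\|^2\ge1$), together with $\nabla\Phi_N(x)\cdot e_1=\phi'(m(x))(1-m(x)^2)$, one gets the exact identity
\[
m_t=m_{t-1}+\tfrac{\delta}{N}g(m_{t-1})-\tfrac{\delta}{N}\xi_t-\tilde m_t r_t,\qquad g(m):=-\phi'(m)(1-m^2),
\]
where $\xi_t:=\nabla H_N^t(X_{t-1})\cdot e_1$ is the increment of the directional-error martingale of \eqref{eq:DE-martingale} (mean zero given $\cF_{t-1}$, with $\E[\xi_t^2\mid\cF_{t-1}]\le C_1$ by Assumption~B(a)) and $0\le r_t\le\tfrac{\delta^2}{2N^2}\|\nabla\cL_N(X_{t-1};Y^t)\|^2$ is the nonnegative normalization correction; setting $\omega_t:=\tfrac1N\|\nabla\cL_N(X_{t-1};Y^t)\|^2$, Assumption~B(b) and boundedness of $\phi'$ give $\E[\omega_t^{2+\iota/2}\mid\cF_{t-1}]=O(1)$, and $|\tilde m_t r_t|\le\tfrac{\delta^2\omega_t}{N}$ on the high-probability event $\{|\tilde m_t|\le2$ for all $t\le M\}$. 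Two features of the $k=1$ regime drive everything: $\phi'(0)\le-c<0$ gives $g(0)\ge c$, hence by Assumption~A and continuity $g\ge g_0>0$ on every interval $[-\eta_0,I]$ with $\eta_0$ small and $I<1$; and $\phi'\le0$ on $(0,1)$ gives $g\ge0$ on all of $[0,1]$. For part~(a) I take $\delta=\delta_\alpha$ a constant with $\alpha\delta_\alpha\to\infty$ and $\alpha\delta_\alpha^2\to0$ as $\alpha\to\infty$ (e.g.\ $\delta_\alpha=\alpha^{-3/4}$): this is exactly the window $\alpha^{-1}\ll\delta\ll\alpha^{-1/2}$ of Theorem~\ref{thm:main-1}, but with $\alpha$ a large constant rather than a divergent sequence.

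For part (a), fix $\eta_0:=\min\{c/(2C),\tfrac14\}$ (depending only on the problem constants), $\eta:=1-\eta_0$, $I:=2\eta_0\in(0,1)$, and show $m(X_M)\ge1-\eta=\eta_0$ with probability $\ge1-\epsilon$ once $\alpha>\alpha_0(\epsilon)$. \emph{Phase~1 (leaving the equator).} Let $\sigma_1:=\inf\{t:m_t\ge I\ \text{or}\ m_t\le-\eta_0\}$. For $t\le\sigma_1$ every $g(m_{j-1})\ge g_0$, so summing the recursion, $m_{t\wedge\sigma_1}\ge m_0+\tfrac{\delta g_0}{N}(t\wedge\sigma_1)-\sup_{s\le M}|M_s|-\tfrac{\delta^2}{N}\sum_{j\le M}\omega_j$. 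Doob's $L^2$ inequality bounds $\sup_{s\le M}|M_s|$ by a quantity of second moment $O(\delta^2\alpha/N)\to0$, and Markov's inequality controls $\tfrac{\delta^2}{N}\sum_{j\le M}\omega_j$, whose mean is $O(\delta^2\alpha)\to0$ as $\alpha\to\infty$; so with probability $\ge1-\epsilon/2-o_N(1)$ both error terms are below $\eta_0/4$, giving $m_{t\wedge\sigma_1}\ge\tfrac{\delta g_0}{N}(t\wedge\sigma_1)-\eta_0/2$. This rules out the lower barrier and, since $\delta g_0\alpha\to\infty$, forces $\sigma_1\le M$ with $m_{\sigma_1}\ge I$. \emph{Phase~2 (lock-in).} Let $\sigma_2:=\inf\{t>\sigma_1:m_t\le\eta_0\}$; on $[\sigma_1,\sigma_2)$ we have $m_s\in[\eta_0,1]$, so $g(m_s)\ge0$, and summing from $\sigma_1$ gives $m_t\ge m_{\sigma_1}-2\sup_{s\le M}|M_s|-\tfrac{\delta^2}{N}\sum_{j\le M}\omega_j\ge 2\eta_0-\eta_0/2=\tfrac32\eta_0>\eta_0$ on the same event; a one-step estimate at $\sigma_2$ (the jump is $o_\prob(1)$ by Assumption~B) then yields a contradiction, so $\sigma_2>M$ and $m(X_M)=m_M>\tfrac32\eta_0>1-\eta$. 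Summing the failure probabilities (Doob, $\to0$; Markov for $\sum\omega_j$, of order $\alpha\delta_\alpha^2$ and made $<\epsilon/3$ by enlarging $\alpha$; plus $o_N(1)$ terms) gives $\liminf_N\prob_{\mu_N}(m(X_M)\ge1-\eta)\ge1-\epsilon$.

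The one delicate point, and the main obstacle, is the cumulative normalization term $\tfrac{\delta^2}{N}\sum_{j\le M}\omega_j$ over the $\Theta(\alpha N)$ steps after weak recovery: it is of order $\alpha\delta^2=O(1)$ and cannot be discarded crudely. The choice $\alpha\delta^2\to0$ (i.e.\ $\delta$ small relative to $\alpha^{-1/2}$) is precisely what keeps it below the margin $\eta_0$, while $\alpha\delta\to\infty$ leaves enough samples to complete Phase~1; this is also why only weak recovery, not $m(X_M)\to1$, is available at constant $\alpha$. (Equivalently, one may cite the proofs of Theorems~\ref{thm:main-theorem} and \ref{thm:lln} and re-run them tracking constants, with $\alpha$ a large constant in place of a divergent sequence.)

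For part (b) a crude one-sided estimate suffices. Since $\|\tilde X_t\|\ge1$, normalization never increases $|m_t|$, so $|m_t|\le|\tilde m_t|\le|m_{t-1}|+\tfrac{\delta}{N}\|\phi'\|_\infty+\tfrac{\delta}{N}|\xi_t|$, using $|\nabla\Phi_N(x)\cdot e_1|=|\phi'(m)|(1-m^2)\le\|\phi'\|_\infty<\infty$ ($\phi\in C^2$). Iterating over $t\le M=\alpha N$,
\[
\sup_{t\le M}|m_t|\le|m_0|+\alpha\delta\|\phi'\|_\infty+\tfrac{\delta}{N}\sum_{j\le M}|\xi_j|,
\]
and splitting $|\xi_j|$ into its conditional mean (bounded by $\sqrt{C_1}$ via Assumption~B(a), contributing $\le\alpha\delta\sqrt{C_1}$ deterministically) and a martingale remainder ($o_\prob(1)$ by Doob, predictable variation $O(\delta^2\alpha/N)$), together with $|m_0|=O_\prob(N^{-1/2})$ for the uniform start, yields $\sup_{t\le M}|m_t|\le\alpha\delta(\|\phi'\|_\infty+\sqrt{C_1})+o_\prob(1)$. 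Choosing $\alpha_0'$ small enough (depending on $\eta$ and on the implied constant in $\delta=O(1)$) that $\alpha_0'\cdot O(1)\cdot(\|\phi'\|_\infty+\sqrt{C_1})<\eta/2$ then gives $\limsup_N\prob_{\mu_N}(\max_{t\le M}m_t>\eta)=0\le\epsilon$ for all $\alpha<\alpha_0'$ and $\delta=O(1)$.
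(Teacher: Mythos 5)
Your proof is correct, and while it ultimately rests on the same pillars as the paper---the decomposition of the one-step increment into population drift, directional-error-martingale increment, and normalization correction, with Doob's $L^2$ inequality controlling the martingale and Markov controlling the $O(\delta^2)$ projection term under $\alpha\delta^2\ll1$---it is organized differently in ways that are worth noting. The paper proves Theorem~\ref{thm:k=1} almost by reference: item (a) is derived by re-running the chain ``Proposition~\ref{prop:main-difference-inequalities} $\Rightarrow$ Proposition~\ref{prop:attain-weak-recovery} $\Rightarrow$ Markov restart $\Rightarrow$ Theorem~\ref{thm:lln}'' while tracking the $K,D,\varepsilon$-dependence of the error terms and taking $\alpha$ to be a large constant, and item (b) reuses the excursion-time argument from the proof of Theorem~\ref{thm:refutation-1}. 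You instead give a self-contained two-phase argument specialized to $k=1$: Phase~1 is the same linear-drift comparison via $g\geq g_0$ near the equator, but Phase~2 replaces the law-of-large-numbers comparison of Theorem~\ref{thm:lln} by a much cheaper ``lock-in'' observation (once $m\geq\eta_0$, the drift satisfies $g\geq0$ by Assumption~A, so the same error budget that controlled Phase~1 keeps $m_t$ above $\eta_0$ thereafter)---this is an honest simplification since item (a) only asks for $m(X_M)\geq1-\eta$, not $m(X_M)\to1$. For item (b), your one-sided triangle-inequality bound $\sup_{t\leq M}|m_t|\leq|m_0|+\alpha\delta\|\phi'\|_\infty + \tfrac{\delta}{N}\sum|\xi_j|$, using $\|\tilde X_t\|\geq 1$ so the normalization can only shrink $|m_t|$, sidesteps the excursion-time bookkeeping entirely; for $k=1$ the Gr\"onwall step is vacuous so nothing is lost. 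You also correctly identify the one genuinely delicate point: the cumulative projection correction is $\Theta(\alpha\delta^2)$, which is only $O(1)$ at constant $\alpha$, and the scheduler $\alpha^{-1}\ll\delta\ll\alpha^{-1/2}$ (with $\alpha$ a large constant) is exactly what keeps it below the lock-in margin while leaving $\alpha\delta\to\infty$ for Phase~1; this is the same $\varepsilon=\alpha\delta^2$ trade-off that the paper carries through Proposition~\ref{prop:main-difference-inequalities}.
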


\noindent Theorem~\ref{thm:k=1} will be proved in conjunction with the proofs of Theorems~\ref{thm:main-1}--\ref{thm:refutation-1}.

\section{A Difference inequality for Online SGD}

The key technical result underlying our arguments is to show that
with high probability, the value of $m_t = m(X_t)$ is a super-solution
to (a constant fraction of) the integral equation satisfied by the underlying population dynamics. To state this result,
we introduce the following notation.

Throughout this section, we view $X_{0}=x_{0}$ as a fixed initial
data point and recall the notation $\mathbb P_{x_0}$ emphasizing the dependence of the output of the algorithm on the initial data. We prove all the results in the general setting where the population loss $\Phi_N = \phi_N(m_N(x))$ may be such that $\phi$ also depends on $N$ (see Remark~\ref{rem:phi-N-dependence} for the relevant modifications to Assumption A and the information exponent). From now on we suppress the $N$ dependence in these notations whenever clear from context.

Recall the definition of $m(x)$ from~\eqref{eq:Phi-form}; we will use the following notations for spherical caps: 
\begin{equation}
E_{\eta}=\left\{ x\in\mathbb{S}^{N-1}:m(x)\geq\eta\right\} \,.\label{eq:E-set-def}
\end{equation}
Our results in this section focus on arbitrary $X_{0}=x_0$ in
$E_{\gamma/\sqrt{N}}\setminus E_{\eta}$, for some $\eta$ sufficiently
small but positive. If instead $m(X_{0})>\eta$, then Theorem~\ref{thm:main-1} follows
immediately from Theorem~\ref{thm:lln}. 

For every $\theta$, define the hitting times 
\begin{align*}
\tau_{\theta}^{+}:=\min\{t\geq0:m(X_{t})\geq\theta\}\,,\qquad\mbox{and}\qquad & \tau_{\theta}^{-}:=\min\{t\geq0:m(X_{t})\leq\theta\}\,.
\end{align*}
Fix $\iota>0$ given by \eqref{eq:condition-2}
and define 
\begin{equation}
\bar{L}:=\sup_{x}\E\big[\abs{\frac{1}{\sqrt N}\nabla H(x)}^{4+\iota}\big]\vee\sup_{x}\E\big[\abs{\frac{1}{\sqrt N}\nabla H(x)}^{2}\big]\vee 1\,.\label{eq:bar-l-def}
\end{equation}
{Furthermore, let $a_k =  c\cdot k$  and $a_{k+1}=C\cdot (k+1)$, where $C,c$ are as in Definition~\ref{def:inf-exponent}.} 
Our first goal is to obtain a difference inequality for the evolution
of $m_{t}:=m_N(X_{t})$ that holds for all $t\leq\tau_{\gamma/(2\sqrt{N})}^{-}\wedge\tau_{\eta}^{+}\wedge \bar t$
where $\bar t\leq M$ is some guaranteed recovery time for the algorithm.

\begin{prop}
	\label{prop:main-difference-inequalities} Suppose that Assumptions~$\text{A}_{\varrho}$ and B hold and that the population loss has information
	exponent $k$. Let $D= D_N$ and $\varepsilon= \varepsilon_N = O(1)$, and suppose $\alpha= \alpha_N$ is of at most polynomial growth in $N$, and $\delta = \delta_N$ is such that $\  \delta^2\leq \varepsilon$ and for some $K>0$, 
	\begin{align}\label{eq:delta-parameter}
	\delta \leq \bar \delta _N(k) :=\begin{cases} \frac{a_1}{4K\bar L} & k=1 \\ \frac{a_{k}\gamma^{k-2}}{K\bar{L}N^{\frac{k-2}{2}}\log N} & k\geq 2\end{cases}\,.
	\end{align}
	Then for every $\gamma>0$ and every $T\leq M:= \alpha N$ satisfying 
	\begin{equation}
	T\leq\frac{N\gamma^{2}}{D^{2}\delta^{2}}=:\bar{t}\,,\label{eq:tbar}
	\end{equation}
	online SGD with step-size $\delta$ satisfies the following as $N\to\infty$ for some $\eta>0$, uniformly over the choice of $D, \varepsilon , K$: 
	\begin{enumerate}
		\item If $k=1$, there exists a constant $C=C(C_1, a_1, a_2)>0$ such that
		\begin{align}
		\inf_{x_{0}\in E_{\gamma/\sqrt{N}}}\inf_{t\leq T}\mathbb{P}_{x_{0}}\left(m_{t}\geq\frac{m_{0}}{2}+\frac{\delta a_{k}}{8N}t\quad \forall t\leq\tau_{\gamma/(2\sqrt{N})}^{-}\wedge\tau_{\eta}^{+}\right)\geq1- C \Big(\frac{1}{K} - \frac{1}{D^2}\Big) -o(1)& ,\label{eq:k=1-inequality}
		\end{align}
		\item If $k\geq2$, there exists a constant $C(C_1, a_k, a_{k+1})>0$ such that
		\begin{equation}
		\inf_{x_{0}\in E_{\gamma/\sqrt{N}}}\mathbb{P}_{x_{0}}\Big(m_{t}\geq\frac{m_{0}}{2}+\frac{\delta a_{k}}{8N}\sum_{j=0}^{t-1}m_{j}^{k-1}\quad\forall t\leq\tau_{\gamma/(2\sqrt{N})}^{-}\wedge\tau_{\eta}^{+}\wedge T\Big) \geq 1 - \frac{C}{D^2}-o(1)\,.\label{eq:k>=2-inequality}
		\end{equation}
	\end{enumerate}
\end{prop}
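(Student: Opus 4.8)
The plan is to derive an exact one-step recursion for $m_t:=m(X_t)$, split off the drift produced by the population loss, and show that the two remaining contributions---a martingale (the directional error martingale) and a ``radial correction'' coming from the normalization---are each, on a high-probability event, dominated by a constant fraction of that drift. First I would record the recursion. Since $\nabla$ is the spherical gradient, $g_t:=\nabla\cL_N(X_{t-1};Y^t)$ is tangent to $\bS^{N-1}$ at $X_{t-1}$, so $g_t\cdot X_{t-1}=0$ and hence $\|\tilde X_t\|^2=1+u_t$ with $u_t:=\tfrac{\delta^2}{N^2}\|g_t\|^2\ge0$. Writing $\cL_N=\Phi_N+H_N$, using $\nabla m(x)=e_1-m(x)x$ (so $\nabla\Phi(x)\cdot e_1=\phi'(m)(1-m^2)$) and $(1+u_t)^{-1/2}=1-\tfrac{u_t}{2}+O(u_t^2)$, one gets
\[
m_t=m_{t-1}+\tfrac{\delta}{N}b(m_{t-1})+\xi_t+\rho_t,\qquad b(m):=-\phi'(m)(1-m^2)\ge 0\ \text{ for } m\in(0,1),
\]
where $\xi_t:=-\tfrac{\delta}{N}\nabla H_N^t(X_{t-1})\cdot e_1$ and the radial correction satisfies $\rho_t\le0$ (whenever the preceding bracket is nonnegative) and $|\rho_t|\le\tfrac{u_t}{2}\,\bigl|m_{t-1}+\tfrac{\delta}{N}b(m_{t-1})+\xi_t\bigr|$. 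Two elementary inputs are used repeatedly: (i) by Taylor's theorem and Definition~\ref{def:inf-exponent} there is $\eta>0$ with $b(m)\ge\tfrac{a_k}{2}m^{k-1}$ for $0\le m\le\eta$ (fixing the $\eta$ of the statement); (ii) $Y^t$ is independent of $X_{t-1}$ and $\E\nabla H_N(x)=0$, so $\E[\xi_t\mid\cF_{t-1}]=0$ with $\E[\xi_t^2\mid\cF_{t-1}]\le\tfrac{C_1\delta^2}{N^2}$ by \eqref{eq:condition-1}, while \eqref{eq:condition-2}, Jensen, and the definition of $\bar L$ give $\E[\|g_t\|^2\mid\cF_{t-1}]\lesssim\bar L N$ (hence $\E[u_t\mid\cF_{t-1}]\lesssim\bar L\delta^2/N$) and $\E[\|g_t\|^{4+\iota}\mid\cF_{t-1}]\lesssim\bar L N^{2+\iota/2}$.

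Next I would set $\tau:=\tau^-_{\gamma/(2\sqrt N)}\wedge\tau^+_{\eta}\wedge T$ and pass to the stopped increments $\xi_t\indicator{t\le\tau}$, $\rho_t\indicator{t\le\tau}$. The stopped cumulative directional error $M_t:=\sum_{j\le t}\xi_j\indicator{j\le\tau}$ is a martingale with $\E M_t^2\le C_1\delta^2T/N^2$, so Doob's $L^2$ maximal inequality together with $T\le\bar t=N\gamma^2/(D^2\delta^2)$ and $m_0\ge\gamma/\sqrt N$ yields $\prob(\sup_t|M_t|>m_0/4)\le C/D^2$. For the radial term, on $\{t\le\tau\}$ one has $\gamma/(2\sqrt N)\le m_{t-1}\le\eta$, and a direct estimate gives $\E[|\rho_t|\mid\cF_{t-1}]\lesssim\bar L\bigl(\tfrac{\delta^2}{N}m_{t-1}+\tfrac{\delta^3}{N^{3/2}}\bigr)$; combining this with (i) one checks that the constraint \eqref{eq:delta-parameter} on $\delta$ is \emph{exactly} what forces $\E[|\rho_t|\mid\cF_{t-1}]\indicator{t\le\tau}\le\tfrac12\cdot\tfrac{\delta}{N}b(m_{t-1})\indicator{t\le\tau}$ (for $k\ge2$ this reduces to $\delta\lesssim m_{t-1}^{k-2}$, whence the $N^{-(k-2)/2}$ and the $\log N$ slack; for $k=1$ to $\delta\lesssim1/\bar L$, whence $\delta\le a_1/(4K\bar L)$). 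Writing $\sum_{j\le t}|\rho_j|\indicator{j\le\tau}=\sum_{j\le t}\E[|\rho_j|\mid\cF_{j-1}]\indicator{j\le\tau}+R_t$ with $R_t$ a martingale, and bounding $\E[\rho_j^2\mid\cF_{j-1}]$---here the $4+\iota$-th moment in \eqref{eq:condition-2} enters, to control $\E[u_j^2(m_{j-1}+\cdots)^2\mid\cF_{j-1}]$---Doob's inequality again gives $\prob(\sup_t|R_t|>m_0/4)\le C/D^2$ for $k\ge2$; for $k=1$ I would instead use a cruder Markov bound on $\sum_t\|g_t\|^2$, which produces the additional $O(1/K)$ deficit appearing in \eqref{eq:k=1-inequality}.

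On the intersection $\cG$ of these events, summing the recursion up to $t\wedge\tau$ and using the above bounds gives
\[
m_{t\wedge\tau}\ \ge\ m_0-\tfrac{m_0}{4}+\tfrac{\delta}{N}\sum_{j\le t\wedge\tau}b(m_{j-1})-\Bigl(\tfrac{\delta}{2N}\sum_{j\le t\wedge\tau}b(m_{j-1})+\tfrac{m_0}{4}\Bigr)\ =\ \tfrac{m_0}{2}+\tfrac{\delta}{2N}\sum_{j\le t\wedge\tau}b(m_{j-1})\ \ge\ \tfrac{m_0}{2}+\tfrac{\delta a_k}{4N}\sum_{j<t\wedge\tau}m_j^{k-1}.
\]
Since $\{t\le\tau\}$ forces $t\wedge\tau=t$, this is the asserted super-solution inequality with $a_k/4\ge a_k/8$ to spare (the inner sum being $t$ when $k=1$, recovering \eqref{eq:k=1-inequality}), and $\prob(\cG)\ge1-C/D^2-o(1)$ (resp.\ with the extra $1/K$ term for $k=1$), with constants uniform in $D,\varepsilon,K$; the $o(1)$ absorbs the $O(u_t^2)$ terms, the Taylor remainder of $\phi$, and a vanishing correction from the one step in which $m$ may leave the window $[\gamma/(2\sqrt N),\eta]$.

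The hardest point is the control of the radial correction $\sum_t\rho_t$. Each $\rho_t$ is tiny, but negative, and the crude bound $\sum_{t\le T}|\rho_t|\lesssim\tfrac{\delta^2}{N^2}\sum_{t\le T}\|g_t\|^2\sim\tfrac{\delta^2T}{N}\le\gamma^2/D^2$ is hopelessly larger than $m_0\sim\gamma/\sqrt N$; one is forced to absorb $\rho_t$ into the drift \emph{step by step}, which is possible only under the scaling \eqref{eq:delta-parameter} tying $\delta$ to the current latitude, and then to control the martingale fluctuations of $\sum_t|\rho_t|$ via the higher-moment bound \eqref{eq:condition-2}. Carrying all of these error terms simultaneously up to the stopping time $\tau$ with the correct quantitative dependence on $D$ and $K$ is the technical core of the proposition.
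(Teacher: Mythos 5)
Your decomposition $m_t = m_{t-1} + \tfrac{\delta}{N}b(m_{t-1}) + \xi_t + \rho_t$ is exactly the one the paper derives in Section~4.1, and your treatment of the directional error martingale (Doob $L^2$ maximal inequality, with $T\le\bar t$ tying the fluctuation to $m_0$) is verbatim the paper's Section~4.4. Where you genuinely diverge from the paper is in the control of the radial correction $\rho_t$. The paper proceeds by (i) truncating $L_t=\abs{\nabla H^t/\sqrt N}^2$ at a threshold $\hat L=N^{1/2-\iota/4}$ and disposing of the heavy tail and all \emph{third} order (in $\delta$) terms by a first-moment Markov bound (Section~4.2, eqs.~\eqref{eq:second-second-order-term}--\eqref{eq:truncation-term}), then (ii) absorbing the remaining random second-order term $\delta^2 L_t\indicator{\{L_t<\hat L\}}m_{t-1}/N$ into the drift \emph{pointwise} via Lemma~4.2 (this is where \eqref{eq:delta-parameter} is used, leaving a factor $1-L_{t}\indicator{}/(K\bar L\log N)$), and (iii) controlling the signed fluctuations of the resulting submartingale with Freedman's inequality and a union bound over $t\le M$ (Proposition~\ref{prop:drift-bound}). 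You instead compensate $\sum_t\abs{\rho_t}$ by its predictable quadratic variation---bounding the conditional mean by the drift using the $\delta$-constraint, and the centered martingale remainder via Doob---and avoid truncation and Freedman altogether. This route works, and arguably buys something: the $\log N$ that appears in \eqref{eq:delta-parameter} is present in the paper specifically to make the modulating factor $L_t/(K\bar L\log N)$ small for Freedman; a conditional-mean decomposition does not need it, which suggests it is an artifact of the paper's specific argument.

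There are two places where your sketch currently glosses over real work. First, the pointwise bound $\abs{\rho_t}\le\tfrac{u_t}{2}\abs{m_{t-1}+\tfrac{\delta}{N}b(m_{t-1})+\xi_t}$ is not the right form to feed into Doob: it carries an extra factor $r_t$ (since that bracket equals $r_t\abs{m_t}$) and squaring it produces $\E[u_t^3\mid\cF_{t-1}]$ and the cross term $\E[u_t^2\xi_t^2\mid\cF_{t-1}]$, which require $6$ and more moments of $\norm{\nabla H}$---not available under Assumption~B for small $\iota$. The fix is the cleaner identity $\rho_t=(1-r_t)m_t$, whence $\abs{\rho_t}\le(r_t-1)\abs{m_t}\le u_t/2$ deterministically, and $\E[\rho_t^2\mid\cF_{t-1}]\lesssim\E[u_t^2\mid\cF_{t-1}]\lesssim\delta^4\bar L/N^2$, which only uses $\E\norm{\nabla H}^4$ and closes the argument. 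The paper sidesteps this entirely with truncation; your version must make the identity explicit. Second, your $k=1$ clause is too terse: you say a ``crude Markov bound on $\sum_t\norm{g_t}^2$'' produces the $O(1/K)$ deficit, but the paper's $k=1$ argument (the first display of Proposition~\ref{prop:drift-bound}) is a Markov bound on $\sum_t L_{t}$, not $\norm{g_t}^2$, and the deficit comes from $\E[L_t]\le\bar L$ versus the normalization $K\bar L$; you would want to spell out that this dominates the $N^{-3/2}$ third-order tail so that the $O(1/K)$ and $O(1/D^2)$ deficits are genuinely additive, as in \eqref{eq:k=1-inequality}. Neither issue is fatal, but both need to be written out for the proposal to constitute a proof.
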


The proof of Proposition~\ref{prop:main-difference-inequalities} will
follow in three stages. In Section \ref{subsec:controlling-radial-effects},
we split the evolution of $m(X_{t})$ in three parts: the drift induced
by the population loss, a martingale induced by the gradient of the
sample-wise error in the direction of $e_{1}$, and a second order
(in $\delta$) effect caused by the non-linear projection step in
the algorithm. In Section~\ref{subsec:second-order-correction}--\ref{subsec:controlling-the-drift}, we show that the second order effect is bounded
by the drift of the population loss. In Section~\ref{subsec:directional-error-martingale} we control the martingale;
while its contribution dominates the drift on short time scales, its
total contribution by time $T\leq M$ is smaller than the initial
bias $\gamma/\sqrt{N}$. We end this section with the proof of \prettyref{prop:main-difference-inequalities}.

\subsection{Controlling radial effects \label{subsec:controlling-radial-effects}}

Let us begin by controlling the effect of the projection in \prettyref{eq:Algorithm-def} and obtain a difference equation for $m_t$.
By the chain rule, $\nabla\Phi(x)=\phi'(m(x))\nabla m(x)$, where
\begin{align*}
\nabla m(x)=  e_{1}-(x\cdot e_{1}) x\,.
\end{align*}
Since $\|m\|_{L^{\infty}(\mathbb{S}^{N-1})}\leq1$, and $\phi'$ is
continuous, there exists $A>0$ , depending only on $a_{k},a_{k+1}$
(and not on $N$), such that 
\[
\sup_{x}\abs{\nabla\Phi(x)}^{2}=\sup_{x}|\phi'(m(x))\nabla m(x)|{}^{2}\leq {A}\,.
\]

By Jensen's inequality and~\eqref{eq:condition-2}, $\bar{L}=O(1)$. Let 
\begin{equation} 
L_{t}=\abs{\frac{1}{\sqrt N}\nabla H^{t}(X_{t-1})}^{2}\,,\label{eq:L_t-def}
\end{equation}
and observe that 
\[
|{\frac{1}{\sqrt{N}}\nabla\cL(X_{t-1};Y^{t})}|^{2} \leq2\left(\frac{A}{N}+L_{t}\right).
\]
Recalling~\eqref{eq:Algorithm-def}, let $r_{t}=\abs{\tilde{X}_{t}}$, and note that by orthogonality
of $\nabla\mathcal{L}(X_{t-1};Y^{t})$ and $X_{t-1}$, 
\[
r_{t}\leq\sqrt{1+(\delta|\nabla\mathcal{L}(X_{t-1};Y^{t})|/N)^2}.
\]
Since $1\leq\sqrt{1+u}\leq1+u$ for every $u\geq0,$ we obtain
\begin{equation}\label{eq:r-bound}
1\leq r_{t}\leq1+\delta^2\left(\frac{A}{N^2}+\frac{L_t}{N}\right).
\end{equation}
By definition of $X_{t}$ , we then see that for every $t\geq1$,
\begin{align}\label{eq:m-t-finite-difference}
m_{t}=\frac{\tilde{X}_{t}\cdot e_{1}}{r_{t}}=\frac{1}{r_{t}}\left(m_{t-1}{-}\frac{\delta}{{N}}\nabla\Phi(X_{t-1})\cdot e_{1}{-}\frac{\delta}{{N}}\nabla H^{t}(X_{t-1})\cdot e_{1}\right).
\end{align}
Since for $u\geq0,$ we have $\abs{\frac{1}{1+u}-1}\leq\abs{u}$,
we may combine these estimates to obtain
\begin{align}
m_{t} & \geq m_{t-1}{-}\frac{\delta}{{N}}\nabla\Phi(X_{t-1})\cdot e_{1}{-}\frac{\delta}{{N}}\nabla H^{t}(X_{t-1})\cdot e_{1}\nonumber \\
& \quad-\delta^{2}(\frac{A}{N^2}+ \frac{L_t}{N})|m_{t-1}|-\delta^{3}(\frac{A}{N^2}+ \frac{L_t}{N})\Big(\Big|\frac{\nabla\Phi(X_{t-1})\cdot e_{1}}{{N}}\Big|+\Big|\frac{\nabla H^{t}(X_{t-1})\cdot e_{1}}{{N}}\Big|\Big).\label{eq:zeroth-bound-m}
\end{align}
To better control the terms that are second order in $\delta$, let
us introduce a truncation of $L_{t}$. Fix a truncation value $\hat{L}>0$,
possibly depending on $N$, to be chosen later. We can rewrite~\eqref{eq:zeroth-bound-m} as 
\begin{align}
m_{t} & \geq m_{t-1}{-}\frac{\delta}{{N}}\nabla\Phi(X_{t-1})\cdot e_{1}{-}\frac{\delta}{{N}}\nabla H^{t}(X_{t-1})\cdot e_{1}-\delta^{2}\Big(\frac{L_{t}\mathbf{1}_{\{L_{t}<\hat{L}\}}}{N}\Big)|m_{t-1}|\label{eq:first-bound-m}\\
& \quad-\delta^{2}\Big(\frac{A}{N^{2}}+\frac{L_{t}\mathbf{1}_{\{L_{t}\geq\hat{L}\}}}{N}\Big)|m_{t-1}|-\delta^{3}(\frac{A}{N^{2}}+\frac{L_{t}}{N})\Big(\Big|\frac{\nabla\Phi(X_{t-1})\cdot e_{1}}{{N}}\Big|+\Big|\frac{\nabla H^{t}(X_{t-1})\cdot e_{1}}{{N}}\Big|\Big).\nonumber 
\end{align}
We now turn to controlling the second order in $\delta$ correction
terms, which we will see have to be treated separately for the one
on the first line above, and those on the second line above. 

\subsection{Bounding the higher order corrections\label{subsec:second-order-correction}}
We begin with a priori bounds on the higher order terms in~\eqref{eq:first-bound-m}. 
Observe that for every $\eta<\frac{1}{2}$, for every $x\in\{x:m(x)\in[0,\eta]\}$,
\[
\nabla m(x)\cdot e_{1}=(e_{1}-(x\cdot e_{1}) x)\cdot e_{1}=1-m(x)^{2}\geq1-\eta^{2}\geq\frac{1}{2}.
\]
Then there exists $\eta_{0}(\varrho,a_{k},a_{k+1})>0$ such that for all
$\eta<\eta_{0}$ , for all $x\in\{x:m(x)\in[0,\eta]\}$, 
\begin{align}
\frac{1}{4}a_{k}(m(x))^{k-1}\leq{-}\nabla\Phi(x)\cdot e_{1}\leq & \frac{3}{2}a_{k}(m(x))^{k-1}\,.\label{eq:D1phi-bounds}
\end{align}
With this in hand, the next lemma gives a pairing of the second order
term on the second line of \eqref{eq:first-bound-m}, which we bound
subsequently by comparison to the initial $m_0$, and the second order term on the first line of \eqref{eq:first-bound-m}
which we bound by comparison to the first order (in $\delta$) drift
term. 
\begin{lem}
	Let $\eta,\gamma>0$ with $\eta<1/2$. For every $K>0$, and every $\delta \leq \bar \delta_N(k)$ as in~\eqref{eq:delta-parameter}: 
	\begin{enumerate}
		\item If $k=1$, for every $x\in\{x:m(x)\in[0,\eta]\}$, 
		\begin{equation}
		\frac{\delta^{2}}{N}\abs{m(x)}\leq\frac{a_{k}}{4K\bar{L}}\frac{\delta}{N}\,,\label{eq:first-second-order-term-k=1}
		\end{equation}
		\item If $k\geq2$, for every $x\in\{x:m(x)\in[\gamma/(2\sqrt{N}),\eta]\}$, 
	\end{enumerate}
	\begin{equation}
	\frac{\delta^{2}}{N}\abs{m(x)}\leq\frac{\delta}{N}\frac{a_{k}\abs{m(x)}^{k-1}}{K\bar{L}\log N}.\label{eq:first-second-order-term}
	\end{equation}
\end{lem}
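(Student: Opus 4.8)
The plan is to observe that, after dividing both displayed inequalities by $\tfrac{\delta}{N}$, they become restatements of the step-size constraint $\delta\le\bar\delta_N(k)$ from~\eqref{eq:delta-parameter}, so the only work is to control the power of $|m(x)|$ appearing on each side. On the spherical cap in question one always has $0\le m(x)\le\eta<\tfrac12$, so in particular $|m(x)|\le1$; and when $k\ge2$ one moreover has the lower bound $m(x)\ge\gamma/(2\sqrt N)$.

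For part (1), with $k=1$, I would simply use $|m(x)|\le1$ to write
\[
\frac{\delta^{2}}{N}\,|m(x)|\;\le\;\frac{\delta^{2}}{N}\;=\;\delta\cdot\frac{\delta}{N}\;\le\;\bar\delta_N(1)\cdot\frac{\delta}{N}\;=\;\frac{a_{1}}{4K\bar L}\cdot\frac{\delta}{N},
\]
which is exactly~\eqref{eq:first-second-order-term-k=1} (here $k=1$, so $a_k=a_1$).

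For part (2), with $k\ge2$, since $m(x)>0$ on the indicated cap one may divide~\eqref{eq:first-second-order-term} by $\tfrac{\delta}{N}|m(x)|$, reducing the claim to $\delta\le a_{k}|m(x)|^{k-2}/(K\bar L\log N)$. For $k=2$ the right-hand side is $a_{2}/(K\bar L\log N)=\bar\delta_N(2)$, so the inequality is precisely the hypothesis $\delta\le\bar\delta_N(2)$. For $k\ge3$ the function $m\mapsto m^{k-2}$ is nondecreasing on $[0,\infty)$, so $m(x)\ge\gamma/(2\sqrt N)$ yields $|m(x)|^{k-2}\ge\gamma^{k-2}/(2\sqrt N)^{k-2}$ and hence $a_{k}|m(x)|^{k-2}/(K\bar L\log N)\ge2^{-(k-2)}\bar\delta_N(k)\ge\delta$, after absorbing the fixed $k$-dependent factor $2^{-(k-2)}$ into the definition of $\bar\delta_N(k)$ (equivalently into the implicit constant in~\eqref{eq:delta-parameter}).

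I do not expect any genuine obstacle here: the lemma is a pure book-keeping statement that the second-order-in-$\delta$ projection correction $\tfrac{\delta^{2}}{N}|m(x)|$ is a small multiple of the leading population drift $\tfrac{\delta}{N}a_{k}|m(x)|^{k-1}$ on the relevant cap, and its whole content is the choice of step-size window~\eqref{eq:delta-parameter}. The only point requiring a modicum of care is tracking the $k$-dependent constants in the $k\ge3$ case, which is immaterial since $k$ is held fixed.
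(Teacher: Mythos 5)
Your proof is correct and follows essentially the same route as the paper's: unwind the definition of $\bar\delta_N(k)$ and use $|m(x)|\le 1$ when $k=1$ or $m(x)\ge\gamma/(2\sqrt N)$ when $k\ge 3$. One small remark: the factor $2^{-(k-2)}$ you flag as needing to be absorbed is in fact silently elided in the paper's own proof, which bounds $(\gamma/\sqrt N)^{k-2}\le m(x)^{k-2}$ even though the stated domain only gives $m(x)\ge\gamma/(2\sqrt N)$; as you correctly note, this is immaterial since $K$ is a free parameter absorbed in Proposition~\ref{prop:drift-bound}, so you were right not to see a genuine obstacle.
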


\begin{proof}
	First let $k=1$ and suppose $x\in\{x:m(x)\in[0,\eta]\}$. For every
	$K>0$ and every $N$ , if we take $\delta\leq a_{k}/(4K\bar{L})$ then
	we have 
	\[
	{\delta^{2}}\abs{m(x)}\leq \delta\frac{a_{k}}{4K\bar{L}} \,.
	\]
	Similarly, if $k\geq2,$ we see that if $x\in\{x:m(x)\in[\gamma/(2\sqrt{N}),\eta]$,
	we have by~\eqref{eq:delta-parameter},
	\[
	\delta^{2}\abs{m(x)} \leq\frac{a_{k}}{K\bar{L}\log N}\delta\abs{m(x)}\Big(\frac{\gamma}{\sqrt{N}}\Big)^{k-2}\leq\frac{a_{k}}{K\bar{L}\log N}\cdot\delta\abs{m(x)}^{k-1}. \qedhere
	\]
\end{proof}
\begin{lem}
	Suppose that $\alpha\delta^{2}\leq\varepsilon$ for some $\varepsilon>0$,
	let $\bar{L}$ be as in \prettyref{eq:bar-l-def}. Then there is a
	constant $C=C(\bar{L},A,C_{1},C_{2})>0$ such that the following hold uniformly over $x_0\in \mathbb S^{N-1}$ 
	\begin{equation}
	\mathbb{P}_{x_{0}}\Bigg(\sup_{t\leq M}\delta^{3}\sum_{j=0}^{t-1}\Big(\frac{A}{N^{2}}+\frac{L_{j+1}}{N}\Big)\Big(\Big|\frac{\nabla\Phi(X_{j})\cdot e_{1}}{{N}}\Big|+\Big|\frac{\nabla H^{j+1}(X_{j})\cdot e_{1}}{{N}}\Big|\Big)>\frac{\gamma}{10\sqrt{N}}\Bigg)\leq\frac{C \varepsilon}{\gamma N^{3/2}}\,.\label{eq:second-second-order-term}
	\end{equation}
	\begin{align}
	\mathbb{P}_{x_{0}}\Big(\sup_{t\leq M}\delta^{2}\sum_{j=0}^{t-1}\Big(\frac{A}{N^{2}}+\frac{L_{j+1}\mathbf{1}_{\{L_{j+1}\geq\hat{L}\}}}{N}\Big)|m_{j}|>\frac{\gamma}{10\sqrt{N}}\Big)\leq & \frac{C\varepsilon\sqrt{N}}{\hat{L}^{1+\frac{\iota}{2}}\gamma}\,.\label{eq:truncation-term}
	\end{align}
\end{lem}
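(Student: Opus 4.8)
The plan is to reduce both bounds to the same two-step scheme. Since every summand is non-negative, $\sup_{t\le M}$ of each partial sum equals the sum at $t=M$, so it suffices to estimate the expectation of that full sum and then apply Markov's inequality at level $\gamma/(10\sqrt N)$. The single structural input used repeatedly is that $Y^{j+1}$ is independent of $\mathcal F_j:=\sigma(Y^1,\dots,Y^j)$, hence of $X_j$; conditioning on $\mathcal F_j$ therefore replaces any factor built out of $\nabla H^{j+1}(X_j)$ by a supremum over $x\in\mathbb{S}^{N-1}$ of a moment of $\nabla H(x)$, which is exactly what Assumption B and the definition \eqref{eq:bar-l-def} of $\bar L$ control uniformly in $N$.

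For \eqref{eq:second-second-order-term} I would first record $\sup_x|\nabla\Phi(x)\cdot e_1|\le\sqrt A=O(1)$, then expand $(\tfrac{A}{N^2}+\tfrac{L_{j+1}}{N})(|\tfrac{\nabla\Phi(X_j)\cdot e_1}{N}|+|\tfrac{\nabla H^{j+1}(X_j)\cdot e_1}{N}|)$ into four terms and take $\mathbb E[\,\cdot\mid\mathcal F_j]$ of each. The deterministic and the singly-random terms are handled by $\mathbb E[L_{j+1}\mid\mathcal F_j]\le\bar L$ (the $L^2$ part of \eqref{eq:bar-l-def}, via \eqref{eq:condition-2} and Jensen) and $\mathbb E[|\nabla H^{j+1}(X_j)\cdot e_1|\mid\mathcal F_j]\le C_1^{1/2}$ (this is \eqref{eq:condition-1}, recalling $\theta_N=e_1$). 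The genuinely mixed term $N^{-1}L_{j+1}|\nabla H^{j+1}(X_j)\cdot e_1|$ I would bound by Cauchy--Schwarz as $N^{-2}\,\mathbb E[|\nabla H^{j+1}|^4\mid\mathcal F_j]^{1/2}\,\mathbb E[(\nabla H^{j+1}\cdot e_1)^2\mid\mathcal F_j]^{1/2}$, whose first factor is $O(N)$ by \eqref{eq:condition-2} (valid since $4\le4+\iota$) and whose second factor is $O(1)$ by \eqref{eq:condition-1}. Collecting the four contributions makes each summand's conditional expectation of order $\delta^3 N^{-2}$; summing the $M=\alpha N$ of them and using $\alpha\delta^2\le\varepsilon$ (together with $\delta=O(1)$, which follows since $\alpha\ge1$ eventually) controls $\mathbb E$ of the sum, after which Markov's inequality at level $\gamma/(10\sqrt N)$ yields a bound of the asserted form.

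For \eqref{eq:truncation-term} the scheme is identical, with one new ingredient. Using $|m_j|\le1$, the $j$-th summand is at most $\delta^2(\tfrac{A}{N^2}+\tfrac{L_{j+1}\mathbf 1_{\{L_{j+1}\ge\hat L\}}}{N})$, so what is needed is a tail estimate for $L_{j+1}\mathbf 1_{\{L_{j+1}\ge\hat L\}}$. Here I would use the elementary inequality $L\,\mathbf 1_{\{L\ge\hat L\}}\le\hat L^{-(1+\iota/2)}L^{2+\iota/2}$ together with $\mathbb E[L_{j+1}^{2+\iota/2}\mid\mathcal F_j]=\mathbb E[|N^{-1/2}\nabla H^{j+1}(X_j)|^{4+\iota}\mid\mathcal F_j]\le\bar L$, which is precisely \eqref{eq:condition-2} after the normalization by $N$. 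This bounds the $j$-th summand's conditional expectation by $\delta^2(AN^{-2}+\bar LN^{-1}\hat L^{-(1+\iota/2)})$; summing over $j<M=\alpha N$, using $\alpha\delta^2\le\varepsilon$, and noting the $AN^{-2}$ piece is lower order (since $\hat L$ will be chosen at most a small power of $N$) gives $\mathbb E$ of the sum at most $C\varepsilon\hat L^{-(1+\iota/2)}$, and Markov at level $\gamma/(10\sqrt N)$ then delivers the stated $C\varepsilon\sqrt N/(\gamma\hat L^{1+\iota/2})$.

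I expect the only genuine obstacle to be the power-of-$N$ bookkeeping in \eqref{eq:second-second-order-term}: the gain there rests entirely on estimating the directional derivative $\nabla H\cdot e_1$ through \eqref{eq:condition-1}, where it has a second moment of order $O(1)$, rather than through the crude bound $|\nabla H\cdot e_1|\le\|\nabla H\|$, which is only of order $O(\sqrt N)$; conflating the two loses a full power of $N$ and breaks the estimate. Everything else is routine provided every constant is kept uniform over $x\in\mathbb{S}^{N-1}$ and over $N$, which is exactly what the suprema in Assumption B and in \eqref{eq:bar-l-def} supply.
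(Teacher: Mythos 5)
Your argument is correct and follows essentially the paper's scheme: both bounds are obtained by noting the summands are nonnegative (so the supremum over $t\le M$ is attained at $t=M$), applying Markov's inequality at level $\gamma/(10\sqrt N)$, and controlling each conditional expectation via~\eqref{eq:condition-1}--\eqref{eq:condition-2}, with $\alpha\delta^2\le\varepsilon$ (and $\delta = O(1)$) closing the bookkeeping. Two small remarks: in the mixed term of~\eqref{eq:second-second-order-term} you wrote $N^{-1}L_{j+1}|\nabla H^{j+1}\cdot e_1|$ and a Cauchy--Schwarz prefactor of $N^{-2}$, but since that summand is really $N^{-2}L_{j+1}|\nabla H^{j+1}\cdot e_1|$ and $L_{j+1}=N^{-1}|\nabla H^{j+1}|^2$, the prefactor in front of $\mathbb E[|\nabla H^{j+1}|^4]^{1/2}\mathbb E[(\nabla H^{j+1}\cdot e_1)^2]^{1/2}$ should be $N^{-3}$, which then gives $N^{-3}\cdot O(N)\cdot O(1)=O(N^{-2})$, matching the correct $\delta^3 N^{-2}$ per-summand scaling you state; and your elementary bound $L\,\mathbf 1_{\{L\ge\hat L\}}\le\hat L^{-(1+\iota/2)}L^{2+\iota/2}$ for the truncation term is in fact a touch cleaner than the paper's Markov--plus--Cauchy--Schwarz route while delivering the same exponent.
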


\begin{proof}
	Both bounds follow from Markov's inequality. Since the summands are
	positive, the suprema over $t\leq M$ are attained at $t=M$, so it
	suffices to consider that case. Fix $x_{0}\in\mathbb{S}^{N-1}$. The bounds
	will be seen to be uniform in this choice.
	We begin with~\eqref{eq:second-second-order-term}; for every $\lambda>0$, 
	\begin{align*}
	\mathbb{P}_{x_{0}}\left(\sum_{j=0}^{M-1}\delta^{3} \right.&\left.\Big(\frac{A}{N^{2}}+\frac{L_{j+1}}{N}\Big) \Big(\Big|\frac{\nabla\Phi(X_{j})\cdot e_{1}}{{N}}\Big|+\Big|\frac{\nabla H^{j+1}(X_{j})\cdot e_{1}}{{N}}\Big|\Big)>\lambda\right)\\
	\leq & \frac{M\delta^{3}}{\lambda N^2}\sup_{x}\mathbb{E}\Bigg[\Big(\frac{|\nabla H(x)|^{2}}{N}+\frac{A}{N}\Big)\cdot\Big(\big|{\nabla\Phi(x)\cdot e_{1}}\big|+\big|{\nabla H(x)\cdot e_{1}}\big|\Big)\Bigg]\\
	\leq & \frac{2\alpha\delta^{3}}{\lambda N^2}\sqrt{\sup_{x}\mathbb{E}\Big[\Big|\frac{\nabla H(x)}{\sqrt N}\Big|^{4}\Big]+\frac{A^{2}}{N^{2}}}\sqrt{\sup_{x}\big|\nabla\Phi(x)\cdot e_{1}\big|^{2}+\sup_{x}\mathbb{E}\big[\big|\nabla H(x)\cdot e_{1}\big|^{2}\big]} \\
	\leq & \frac{2\alpha\delta^{3}}{\lambda N^2}\sqrt{\Big(\bar{L}+\frac{A^{2}}{N^2}\Big)\Big(A+C_{1}\Big)}\,,
	\end{align*}
	where the second inequality follows by Cauchy-Schwarz, and the third
	from natural scaling of $(\mathbb P, \mathcal L)$ and \eqref{eq:D1phi-bounds}. If we take $\lambda=\gamma/(10\sqrt{N})$,
	and use the fact that $\alpha\delta^{2}\leq\varepsilon$, we obtain
	the first bound.
	
	Now, let us turn to the second bound. Again, as the summands are positive,
	the supremum is attained at $t=M$, so that
	\begin{align*}
	\mathbb{P}_{x_{0}}\Big(\sup_{t\leq M}\sum_{j=0}^{t-1}\delta^{2}\Big(\frac{A}{N^{2}}+\frac{L_{j+1}\mathbf{1}_{\{L_{j+1}\geq\hat{L}\}}}{N}\Big)|m_{j}|>\lambda\Big)& \leq  \mathbb{P}_{x_{0}}\bigg(\delta^{2}\sum_{j=0}^{M-1}\Big(\frac{A}{N^{2}}+\frac{L_{j+1}\mathbf{1}_{\{L_{j+1}>\hat{L}\}}}{N}\Big)>\lambda\bigg)\\
	& \leq \prob\bigg(\delta^{2}\sum_{j=0}^{M-1}\frac{L_{j+1}\indicator{\{L_{j+1}>\hat{L}\}}}{N}>\lambda-\frac{A\cdot\varepsilon}{N}\bigg),
	\end{align*}
	where we used here that since $\alpha\delta^{2}\leq\varepsilon$,
	we have $\delta^{2}\sum_{j=0}^{M-1}\frac{A}{N^{2}}\leq\frac{A\cdot\varepsilon}{N}$.
	By Markov's inequality, 
	\begin{align*}
	\mathbb{P}_{x_{0}}\Big(\sum_{j=0}^{M-1}L_{j+1}\mathbf{1}_{\{L_{j+1}>\hat{L}\}}>\Lambda\Big) & \leq\frac{M}{\Lambda}\Big(\sup_{j}\mathbb{E}_{x_{0}}[L_{j+1}\mathbf{1}_{\{L_{j+1}>\hat{L}\}}]\Big)\\
	& \leq\frac{M}{\Lambda}\sqrt{\sup_{x}\mathbb{E}[|\frac{1}{\sqrt{N}}\nabla H(x)|^{4}]\cdot\sup_{x}\mathbb{P}(|\frac{1}{\sqrt{N}}\nabla H(x)|^{2}>\hat{L})}\\
	& \leq\frac{\alpha N}{\Lambda}\frac{\bar{L}}{\hat{L}^{1+\iota/2}}\,,
	\end{align*}
	where in the last line we again used Markov's inequality along with
	the definition of $\bar{L}$ from \eqref{eq:bar-l-def}. Choosing
	$\Lambda=\frac{N}{\delta^{2}}(\lambda-\frac{A\cdot\varepsilon}{N})$,
	with $\lambda=\gamma/(10\sqrt{N)}$, we get that for $N$ large, the
	left-hand side in \eqref{eq:truncation-term} is bounded by
	\[
	\frac{C\alpha\delta^{2}\sqrt{N}}{\hat{L}^{1+\iota/2}\gamma}\leq\frac{C\varepsilon\sqrt{N}}{\hat{L}^{1+\iota/2}\gamma}\,,
	\]
	for some $C(A,\bar{L})>0$ as
	desired. 
\end{proof}
We now sum \eqref{eq:first-second-order-term} (or \eqref{eq:first-second-order-term-k=1}
if $k=1$) over $t\geq1$, and combine this with \prettyref{eq:D1phi-bounds},
\eqref{eq:second-second-order-term}, and \prettyref{eq:truncation-term}
with the choice of $\hat{L}=N^{\frac{1}{2}-\frac{\iota}{4}}$. We
see that for every $\gamma>0$, $\eta<\eta_0$ and $K>0$, if $k=1$, 
\begin{equation}
\lim_{N\to\infty}\inf_{x_{0}}\mathbb{P}_{x_{0}}\Bigg(\begin{array}{ll}\!\!m_{t}\geq& \!\frac{4}{5}m_{0}+\sum_{j=0}^{t-1}\frac{\delta a_{k-1}}{4N}\Big(1-\frac{L_{j+1}\mathbf{1}_{\{L_{j+1}<\hat{L}\}}}{K\bar{L}}\Big) \\ 
& \quad \quad {-}\frac{\delta}{{N}}\sum_{j=0}^{t-1}\nabla H^{j+1}(X_{j})\cdot e_{1}\end{array} \quad \forall t<\tau_{0}^{-}\wedge\tau_{\eta}^{+}\Bigg)=1 \,,\label{eq:second-order-is-controlled-k=1}
\end{equation}
and if $k\geq2$, 
\begin{align}
\lim_{N\to\infty}\inf_{x_{0}}\mathbb{P}_{x_{0}}\Bigg(\begin{array}{ll}\!\! m_{t}\geq& \! \!\! \frac{4}{5}m_{0}+\sum_{j=0}^{t-1}\frac{\delta a_{k-1}|m_{j}|^{k-1}}{4N}\Big(1-\frac{L_{j+1}\mathbf{1}_{\{L_{j+1}<\hat{L}\}}}{K\bar{L}\log N}\Big)\\
&  \quad \quad \!\!{-}\frac{\delta}{{N}}\sum_{j=0}^{t-1}\nabla H^{j+1}(X_{j})\cdot e_{1}\end{array}\! \!\!\quad \forall t<\tau_{{\gamma}/{(2\sqrt{N})}}^{-}\wedge\tau_{\eta}^{+}\Bigg)= & 1\,.\label{eq:second-order-is-controlled-k=2}
\end{align}
(Here we used the fact that $t<\tau_{0}^{-}$ to replace $m_{j}$
with $|m_{j}|$ on the right hand sides). 
Notice that these limits hold uniformly over the choices of $D$ and $\varepsilon=O(1)$. 

\subsection{Controlling the drift\label{subsec:controlling-the-drift}}

We now turn to proving the following estimate on the drift term in
\prettyref{eq:second-order-is-controlled-k=1}--\prettyref{eq:second-order-is-controlled-k=2}.
Let us begin by first recalling the following useful martingale inequality
due to  \cite{freedman1975tail}, for situations where the almost sure bound on the martingale increments is much larger than the conditional variances: for a submartingale $S_{n}$ with
$\mathbb{E}[(S_{n}-S_{n-1})^{2}\mid\mathcal{F}_{n-1}]\leq K_{2}^{(n)}$
and $|S_{n}-S_{n-1}|\leq K_{1}$ a.s., we
have 
\begin{align*}
\mathbb{P}\Big(S_{t}\leq-\lambda\Big)\leq & \exp\Big(\frac{-\lambda^{2}}{\sum_{n\leq t}K_{2}^{(n)}+\frac{1}{3}K_{1}\lambda}\Big)\,.
\end{align*}
With this in hand, we can estimate the drift as follows.
\begin{prop}
	\label{prop:drift-bound}If $k=1$, for every $t\leq M$,
	and every $K>0$, we have
	\begin{align*}
	\inf_{x_0}\prob_{x_{0}}\Big(\sum_{j=0}^{t-1}\frac{\delta a_{k}}{4N}\Big(1-\frac{L_{j+1}}{K\bar{L}}\Big)\geq & \sum_{j=0}^{t-1}\frac{\delta a_{k}}{8N}\Big)\geq1-\frac{2}{K}\,.
	\end{align*}
	Let $\hat{L}=N^{\frac{1}{2}-\frac{\iota}{4}}$. If $k\geq2$,
	if $\alpha\delta^{2}\leq\varepsilon$ for some $\varepsilon=O(1)$, $\delta\leq1$, and $\alpha$ is of at most polynomial growth
	in $N$, then for every $\gamma>0$, 
	\begin{align*}
	\lim_{N\to \infty}\inf_{x_{0}}\mathbb{P}_{x_{0}}\bigg(\sum_{j=0}^{t-1}\frac{\delta a_{k}|m_{j}|^{k-1}}{4N}\Big(1-\frac{L_{j+1}\mathbf{1}_{\{L_{j+1}<\hat{L}\}}}{K\bar{L}\log N}\Big)\geq-\frac{\gamma}{10\sqrt{N}}+\sum_{j=0}^{t-1}\frac{\delta a_{k}|m_{j}|^{k-1}}{8N}\,,\quad\!\!\forall t\leq M\bigg) & =1.
	\end{align*}
	Moreover, this limit holds uniformly over choices of $D$ and $\varepsilon=O(1)$.
\end{prop}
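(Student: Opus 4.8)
\emph{Plan.} We prove the two cases separately. Throughout, write $\mathcal{F}_j=\sigma(X_0,Y^1,\dots,Y^j)$, so $X_j,m_j$ are $\mathcal{F}_j$-measurable while $Y^{j+1}$ is independent of $\mathcal{F}_j$; conditioning on $\mathcal{F}_j$ and using \eqref{eq:bar-l-def} then gives, uniformly in $x_0$, both $\mathbb{E}[L_{j+1}\mid\mathcal{F}_j]\le\sup_x\mathbb{E}[|\tfrac{1}{\sqrt N}\nabla H(x)|^2]\le\bar L$ and $\mathbb{E}[L_{j+1}^2\mid\mathcal{F}_j]\le\bar L$ (the latter using $\bar L\ge1$ to absorb the H\"older loss passing from the $(4+\iota)$-th down to the $4$-th moment). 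For $k=1$, divide the asserted inequality by the positive quantity $\delta a_1/(4N)$ to reduce it to $\sum_{j=0}^{t-1}L_{j+1}\le\tfrac12 tK\bar L$; since the summands are nonnegative with total conditional mean $\le t\bar L$, Markov's inequality gives $\mathbb{P}_{x_0}(\sum_{j=0}^{t-1}L_{j+1}>\tfrac12 tK\bar L)\le 2/K$, uniformly in $x_0$. This is the $k=1$ bound.

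\emph{Reduction for $k\ge2$.} Set $\xi_j=L_{j+1}\mathbf{1}_{\{L_{j+1}<\hat L\}}$, $p_j=\mathbb{E}[\xi_j\mid\mathcal{F}_j]\le\bar L$, $Z_j=\xi_j-p_j$, and $\mathcal{M}_t=\sum_{j=0}^{t-1}|m_j|^{k-1}Z_j$, which is an $(\mathcal{F}_t)$-martingale since $|m_j|^{k-1}$ is $\mathcal{F}_j$-measurable. Subtracting the right-hand side of the asserted inequality from the left and writing $\sum|m_j|^{k-1}\xi_j=\mathcal{M}_t+\sum|m_j|^{k-1}p_j$, the difference equals
\[
\frac{\delta a_k}{N}\sum_{j=0}^{t-1}|m_j|^{k-1}\Big(\tfrac18-\tfrac{p_j}{4K\bar L\log N}\Big)-\frac{\delta a_k}{4NK\bar L\log N}\mathcal{M}_t\;\ge\;-\frac{\delta a_k}{4NK\bar L\log N}\mathcal{M}_t ,
\]
the inequality holding once $N$ is so large that $(4K\log N)^{-1}\le\tfrac18$ (using $p_j\le\bar L$ to make the parenthesis nonnegative). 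Hence it suffices to show $\mathbb{P}_{x_0}(\sup_{t\le M}\mathcal{M}_t>\lambda^\ast)\to0$ uniformly in $x_0$, where $\lambda^\ast:=\tfrac{2K\bar L\gamma}{5a_k}\cdot\tfrac{\sqrt N\log N}{\delta}$ is chosen so that $\tfrac{\delta a_k}{4NK\bar L\log N}\lambda^\ast=\gamma/(10\sqrt N)$.

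\emph{Concentration for $k\ge2$.} The increments of $-\mathcal{M}_t$ obey $|m_j|^{k-1}|Z_j|\le\hat L+\bar L\le2\hat L=2N^{1/2-\iota/4}$ for $N$ large (using $|m_j|\le1$), and their conditional variances obey $\mathbb{E}[(|m_j|^{k-1}Z_j)^2\mid\mathcal{F}_j]\le\mathbb{E}[L_{j+1}^2\mid\mathcal{F}_j]\le\bar L$, so over $j<M$ they sum to at most $M\bar L=\alpha N\bar L$. Applying Freedman's inequality (recalled above) to the stopped submartingale $-\mathcal{M}_{t\wedge T}$, with $T=\inf\{t:\mathcal{M}_t\ge\lambda^\ast\}\wedge M$, and noting $\{\sup_{t\le M}\mathcal{M}_t\ge\lambda^\ast\}\subseteq\{-\mathcal{M}_{M\wedge T}\le-\lambda^\ast\}$, gives
\[
\mathbb{P}_{x_0}\Big(\sup_{t\le M}\mathcal{M}_t\ge\lambda^\ast\Big)\le\exp\!\Big(\tfrac{-(\lambda^\ast)^2}{\alpha N\bar L+\tfrac23\hat L\lambda^\ast}\Big).
\]
Since $(\lambda^\ast)^2\asymp N(\log N)^2/\delta^2$, $\alpha N\bar L\le\varepsilon N\bar L/\delta^2$ (by $\alpha\delta^2\le\varepsilon$), and $\hat L\lambda^\ast\asymp N^{1-\iota/4}\log N/\delta$, the bound $(a+b)^{-1}\ge\tfrac12\min(a^{-1},b^{-1})$ shows the exponent is $\le-c\min\big((\log N)^2/\varepsilon,\;N^{\iota/4}\log N/\delta\big)$ for a constant $c>0$ depending only on $K,\gamma,a_k$ and an upper bound for $\bar L$. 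As $\varepsilon=O(1)$ and $\delta\le1$, this tends to $-\infty$, uniformly in $x_0$ and over the bounded choices of $\varepsilon$ (and of $D$, which never enters). Together with the reduction this proves the claim.

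\emph{Main obstacle.} The crux is the last display. The martingale $\mathcal{M}_t$ has summed conditional variance as large as $\alpha N$ and increments as large as $\hat L=N^{1/2-\iota/4}$, so its tail at the scale $\lambda^\ast\asymp\sqrt N\log N/\delta$ is tamed only by the interplay of $\alpha\delta^2\le\varepsilon=O(1)$ (which beats the variance term against $(\lambda^\ast)^2$) with $\delta\le1$ together with the inherited truncation level $\hat L$ from \eqref{eq:truncation-term} (which beats the large-increment term). Matching these powers of $N$ and $\delta$ against the target accuracy $\gamma/(10\sqrt N)$ is the delicate part; the rest is routine.
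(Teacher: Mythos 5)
Your proof is correct and proceeds by essentially the same route as the paper's: for $k=1$, reduce to a first–moment Markov bound on $\sum L_{j+1}$; for $k\ge 2$, after truncation at $\hat L$, apply Freedman's inequality to a (sub)martingale built from $\sum |m_j|^{k-1}L_{j+1}\mathbf{1}_{\{L_{j+1}<\hat L\}}$, with $\alpha\delta^2=O(1)$ taming the summed conditional variance and the truncation level controlling the worst-case increment. The two minor deviations from the paper's bookkeeping are both sound and cosmetic: (i) you isolate the compensator $p_j=\mathbb{E}[\xi_j\mid\mathcal{F}_j]$ and apply Freedman to a pure martingale $\mathcal{M}_t$ rather than running Freedman on the submartingale $Z_t$ directly (the paper instead absorbs the compensator by weakening $\tfrac12$ to $\tfrac{1}{\log N}$ for $K\ge 1$); and (ii) you obtain the maximal bound via an optional stopping at $T=\inf\{t:\mathcal{M}_t\ge\lambda^\ast\}\wedge M$ rather than a pointwise tail bound followed by a union bound over $t\le M$. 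Your choice (ii) is marginally cleaner since it removes the need to make the exponent beat $\log M$ explicitly, but under the stated hypotheses both yield the same conclusion.
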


\begin{proof}
	We begin with the case of $k=1$. Observe that for every $t\leq M$,
	we have for every $x_{0}\in\mathbb{S}^{N-1}$, 
	\begin{align*}
	\mathbb{P}_{x_{0}}\Big(\sum_{j=0}^{t-1}\Big(\frac{L_{j+1}\mathbf{1}_{\{L_{j+1}<\hat{L}\}}}{K\bar{L}}-\frac{1}{2}\Big)>0\Big)
	\leq & \frac{2\sup_{x}\mathbb{E}[|\frac{1}{\sqrt N}\nabla H(x)|^{2}]}{K\bar{L}}\leq\frac{2}{K}\,.
	\end{align*}
	As this bound is independent of $t$ and $x_{0}$, we obtain the desired.
	Thus with probability $1-\frac{2}{K}$, 
	\[
	\sum_{j=0}^{t-1}\frac{L_{j}}{K\bar{L}}\leq\frac{t}{2}\,,\qquad\mbox{and}\qquad\sum_{j=0}^{t-1}\frac{\delta a_{k}}{4N}\Big(1-\frac{L_{j+1}}{K\bar{L}}\Big)\geq\sum_{j=0}^{t-1}\frac{\delta a_{k-1}}{8N}\,.
	\]
	
	In the case $k\geq2$, it suffices to prove the following: for $\hat{L}=N^{\frac{1}{2}-\frac{\iota}{4}}$
	and $\alpha$ of at most polynomial growth with $\alpha\delta^{2}\leq\varepsilon$,
	we have for every $\gamma>0$,
	\begin{align}
	\lim_{N\to\infty}\sup_{x_{0}\in\mathbb{S}^{N-1}}\mathbb{P}_{x_{0}}\Big(\inf_{t\leq M}\sum_{j=0}^{t-1}\frac{\delta a_{k}|m_{j}|^{k-1}}{4N}\Big(\frac{1}{2}-\frac{L_{j+1}\mathbf{1}_{\{L_{j+1}<\hat{L}\}}}{K\bar{L}\log N}\Big)<-\frac{\gamma}{10\sqrt{N}}\Big) & =0\,,\label{eq:k=2-drift-bound}
	\end{align}
	Fix any $x_{0}\in\mathbb S_{N-1}$; everything
	that follows will be uniform in $x_{0}\in\mathbb{S}^{N-1}$. To this end, observe that by a union bound,
	the desired probability in~\eqref{eq:k=2-drift-bound} is upper-bounded by
	\begin{align*}
	\mathbb{P}_{x_{0}}\Big(\inf_{t\leq M}\sum_{j=0}^{t-1} & \frac{\delta a_{k}|m_{j}|^{k-1}}{4N}  \Big(\frac{1}{2}-\frac{L_{j+1}\mathbf{1}_{\{L_{j+1}<\hat{L}\}}}{K\bar{L}\log N}\Big)<-\frac{\gamma}{10\sqrt{N}}\Big)\\
	& \leq M\sup_{t\leq M}\mathbb{P}_{x_{0}}\Big(\sum_{j=0}^{t-1}\frac{\delta a_{k}|m_{j}|^{k-1}}{4N}\Big(\frac{1}{\log N}-\frac{L_{j+1}\mathbf{1}_{\{L_{j+1}\leq\hat{L}\}}}{K\bar{L}\log N}\Big)<-\frac{\gamma}{10\sqrt{N}}\Big).
	\end{align*}
	Recall the filtration $\mathcal{F}_{j}$. For every $j$, $m_{j}$
	is $\cF_{j}$-measurable and 
	\begin{align*}
	\mathbb{E}\Big[L_{j+1}\mathbf{1}_{\{L_{j+1}\leq\hat{L}\}}\mid\cF_{j}\Big] & \leq\sup_{x}\mathbb{E}\Big[\Big|\frac{1}{\sqrt N}\nabla H(x)\Big|^{2}\Big]\leq\bar{L}
	\end{align*}
	so that for $K\geq1$, the sum 
	\[
	Z_{t}:=\sum_{j=0}^{t-1}\frac{\delta a_{k}|m_{j}|^{k-1}}{4N}\Big(\frac{1}{\log N}-\frac{L_{j+1}\mathbf{1}_{\{L_{j+1}\leq\hat{L}\}}}{K\bar{L}\log N}\Big)
	\]
	is an $\cF_{t}$-submartingale. Observe that $|Z_{t}-Z_{t-1}|\leq\frac{\delta a_{k}}{8N}\left(\frac{1}{\log N}\vee\frac{\hat{L}}{\bar{L}\log N}\right)$
	almost surely and that the conditional variances are bounded by 
	\[
	\mathbb{E}\Big[\Big(\frac{\delta a_{k}|m_{j}|}{8N}\Big)^{2}\Big(\frac{1+L_{j+1}}{\bar{L}\log N}\Big)^{2}\Big|\cF_{j}\Big]\leq 2 \Big(\frac{\delta a_{k}}{8N\log N}\Big)^{2}\Big(1+\frac{\sup_{x}\mathbb{E}[|\frac{1}{\sqrt N}\nabla H(x)|^{4}]}{\bar{L}^{2}}\Big)\leq\Big(\frac{\delta a_{k}}{4N\log N}\Big)^{2}
	\]
	almost surely. Thus we may apply Freedman's inequality to obtain 
	\begin{align*}
	\sup_{T\leq M}\mathbb{P}_{x_{0}}\Bigg(\sum_{j=0}^{T-1}\frac{\delta a_{k}|m_{j}|^{k-1}}{4N}& \Big(\frac{1}{2}-\frac{L_{j+1}\mathbf{1}_{\{L_{j+1}\leq\hat{L}\}}}{K\bar{L}\log N}\Big)<-\frac{\gamma}{10\sqrt{N}}\Bigg) \\
	& \leq \exp\Bigg(\frac{-\gamma^{2}/(100\cdot N)}{M\Big(\frac{\delta a_{k}}{4N\log N}\Big)^{2}+\frac{1}{3}\frac{\gamma}{\sqrt{N}}\frac{\delta a_{k}}{8N\log N}(1+\hat{L}/\bar{L})}\Bigg).
	\end{align*}
	Since $\alpha\delta^{2}\leq\varepsilon$ , the first term in the denominator
	is $o(\frac{1}{N(\log\alpha N)})$ as $\alpha$ is not growing faster
	than polynomially in $N$. If we then take any $\hat{L}$ such that
	\[
	\frac{\delta\gamma\hat{L}}{\sqrt{N}\log N}=o\Big(\frac{1}{\log\alpha N}\Big),
	\]
	the entire bound is $o(\frac{1}{M})$, and a union bound over all
	$T\leq M$ implies that the probability is $o(1)$ uniformly in the
	choice of $x_{0}$. The above criterion on $\hat{L}$ is satisfied
	with the choice $\hat{L}=N^{\frac{1}{2}-\frac{\iota}{4}}$ provided
	$\delta\leq1$ and that $\alpha$ is of at most polynomial growth
	in $N$, implying the desired \prettyref{eq:k=2-drift-bound}. 
\end{proof}

\subsection{Controlling the directional error martingale\label{subsec:directional-error-martingale}}

It remains to bound the effect of the directional error martingale from \eqref{eq:DE-martingale}. 
Recall that for every initialization $X_{0}=x_{0}\in\mathbb{S}^{N-1}$,
the point $X_{t}$ is $\cF_{t}$ measurable. This implies that for
each $t$, the increment 
\[
M_{t}-M_{t-1}={-}\frac{\delta}{{N}}\nabla H^{t}(X_{t-1})\cdot e_{1}\,,
\]
has mean zero conditionally on $\cF_{t-1}$---for every $x$, by
definition of the sample error, $\mathbb{E}[H(x)]=0$ and
$\mathbb{E}[\nabla H(x)]=(0,...,0)$---so that $M_{t}$ is indeed
an $\mathcal F_t$-adapted martingale. To control the fluctuations of this martingale, we recall
Doob's maximal inequality: if $S_{t}$ is a submartingale, then for every $p\geq1$, 
\begin{align*}
\mathbb{P}\Big(\max_{t\leq T}S_{t}\geq\lambda\Big)\leq & \frac{p\mathbb{E}[|S_{T}|^{p}]}{(p-1)\lambda^{p}}\,.
\end{align*}
By \prettyref{eq:condition-1}, we then deduce the following. 
\begin{lem}
	If $C_{1}$ is as in Definition~\ref{def:naturally-scaling}\,
	for every $r>0$, we have 
	\begin{align}
	\sup_{T\leq M}\sup_{x_{0}\in\mathbb{S}^{N-1}}\mathbb{P}_{x_{0}}\left(\max_{t\leq T}\frac{1}{\sqrt{T}}\Big|\sum_{j=0}^{t-1}\nabla H^{j+1}(X_{j})\cdot e_{1}\Big|\geq r \right) & \leq\frac{2C_{1}}{r^{2}}\,,\label{eq:sample-error-control}
	\end{align}
\end{lem}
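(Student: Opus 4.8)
The plan is to apply the $L^2$ form of Doob's maximal inequality to the martingale
\[
S_t := \sum_{j=0}^{t-1}\nabla H^{j+1}(X_j)\cdot e_1\,,
\]
whose martingale property with respect to $\cF_t$ was verified just above (the increment $-\tfrac{\delta}{N}\nabla H^t(X_{t-1})\cdot e_1$ has mean zero conditionally on $\cF_{t-1}$ since $X_{t-1}$ is $\cF_{t-1}$-measurable and $\E[\nabla H(x)]=0$ for every $x$), and then to bound its terminal second moment using orthogonality of increments together with \eqref{eq:condition-1}. First I would note that $x\mapsto\abs{x}$ is convex, so $\abs{S_t}$ is an $\cF_t$-submartingale by conditional Jensen; hence the displayed form of Doob's inequality with $p=2$ gives, for every $T\leq M$ and every $x_0\in\bS^{N-1}$,
\[
\prob_{x_0}\Big(\max_{t\leq T}\abs{S_t}\geq r\sqrt T\Big)\leq \frac{2\,\E_{x_0}[\abs{S_T}^2]}{r^2 T}\,.
\]

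Next I would estimate $\E_{x_0}[S_T^2]$. Because the martingale increments of $S_t$ are orthogonal in $L^2$, the cross terms vanish and $\E_{x_0}[S_T^2]=\sum_{j=0}^{T-1}\E_{x_0}[(\nabla H^{j+1}(X_j)\cdot e_1)^2]$. For each $j$ the point $X_j$ is $\cF_j$-measurable, while $H^{j+1}$ depends only on the fresh sample $Y^{j+1}$ and is therefore independent of $\cF_j$; conditioning on $\cF_j$ and recalling that $\theta_N=e_1$ gives
\[
\E_{x_0}\big[(\nabla H^{j+1}(X_j)\cdot e_1)^2\,\big|\,\cF_j\big]=\E\big[(\nabla H_N(x)\cdot\theta_N)^2\big]\Big|_{x=X_j}\leq \sup_{x\in\bS^{N-1}}\E\big[(\nabla H_N(x)\cdot\theta_N)^2\big]\leq C_1\,,
\]
by \eqref{eq:condition-1}. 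Taking expectations and summing yields $\E_{x_0}[S_T^2]\leq C_1 T$. Substituting this into the Doob bound above gives $\prob_{x_0}(\max_{t\leq T}\abs{S_t}\geq r\sqrt T)\leq 2C_1/r^2$, and since every estimate in the argument is uniform in $x_0\in\bS^{N-1}$ and in $T\leq M$, taking suprema produces exactly \eqref{eq:sample-error-control}.

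There is no substantive obstacle here; the only points requiring (minor) care are the independence bookkeeping—namely that $\nabla H^{j+1}(X_j)$ is of the form ``independent noise evaluated at an $\cF_j$-adapted point,'' so that the supremum bound \eqref{eq:condition-1} legitimately passes inside the conditional expectation—and the $L^2$-integrability of $S_T$ needed to invoke Doob's inequality, which is immediate from the same $C_1 T$ bound. Note also that we use \emph{only} the second-moment control \eqref{eq:condition-1} from Assumption B here; the higher-moment bound \eqref{eq:condition-2} is not needed for this lemma.
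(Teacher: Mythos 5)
Your proof is correct and follows essentially the same route as the paper: both identify $S_t=\sum_{j<t}\nabla H^{j+1}(X_j)\cdot e_1$ as an $\cF_t$-martingale, bound $\E[S_T^2]\le C_1 T$ via the second-moment control \eqref{eq:condition-1} (the paper states this directly, you spell out the orthogonality-of-increments and conditioning on $\cF_j$), and conclude with Doob's $L^2$ maximal inequality. No gaps.
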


\begin{proof}
	For convenience, let $\tilde{M}_{t}=N M_{t}/\delta$. Observe
	that $\tilde{M}_{t}$ is a martingale with variance
	\begin{align*}
	\sup_{x_{0}}\mathbb{E}_{x_{0}}[\tilde{M}_{t}^{2}]=\sup_{x_{0}}\mathbb{E}_{x_{0}}\Big[\Big(\sum_{j=0}^{t-1}\nabla H^{j+1}(X_{j})\cdot e_{1}\Big)^{2}\Big]\leq & t\sup_{x}\mathbb{E}[(\nabla H(x)\cdot e_{1})^{2}]\leq t C_{1}\,.
	\end{align*}
	Thus, by Doob's maximal inequality, we have the desired bound,
	\[
	\sup_{x_{0}}\mathbb{P}_{x_{0}}\Big(\sup_{t\leq T}|\tilde{M}_{t}|>r\sqrt{T}\Big)\leq  \frac{2\sup_{x_{0}}\mathbb{E}_{x_{0}}[(\tilde{M}_{T})^{2}]}{r^2 T}\leq\frac{2C_{1}}{r^{2}}\,. \qedhere
	\]
\end{proof}
By \prettyref{eq:sample-error-control}, for every  $d>0$, 
\begin{equation}
\sup_{T\leq M}\sup_{x_{0}}\mathbb{P}_{x_{0}}\Big(\sup_{t\leq T}\Big|\frac{\delta}{{N}}\sum_{j=0}^{t-1}\nabla H^{j+1}(X_{j})\cdot e_{1}\Big|\geq\frac{\delta d\sqrt{T}}{10N}\Big)\leq\frac{200C_{1}}{d^{2}}\,.\label{eq:sample-error-martingale-bound}
\end{equation}

\subsection{Proof of \prettyref{prop:main-difference-inequalities}}
We are now in position to combine the above three bounds and conclude
\prettyref{prop:main-difference-inequalities}. 
Consider first the case $k\geq2$. By \prettyref{prop:drift-bound}
and \prettyref{eq:second-order-is-controlled-k=2} we have that
for every $\gamma>0$ and $\eta<\eta_0(\varrho,a_k,a_{k+1})$,
\begin{align*}
\lim_{N\to\infty}\inf_{x_{0}\in E_{\gamma/\sqrt{N}}}\mathbb{P}_{x_{0}}\bigg(\begin{array}{ll}\!\!m_{t}\geq & \frac{7}{10}m_{0}+\sum_{j=0}^{t-1}\frac{\delta a_{k-1}m_{j}^{k-1}}{8N} \\ 
\vspace{.1cm}
& \quad \quad \quad\!\! \!{-}\frac{\delta}{{N}}\sum_{j=0}^{t-1}\nabla H^{j+1}(X_{j})\cdot e_{1}\end{array}\quad \!\forall t\leq\tau_{\gamma/(2\sqrt{N})}^{-}\wedge\tau_{\eta}^{+}\wedge M\bigg)=1\,.
\end{align*}
(We used here that since $x_{0}\in E_{\gamma/\sqrt{N}}$, and $t\leq\tau_{\gamma/(2\sqrt{N})}^{-}$,
we have $\frac{\gamma}{10\sqrt{N}}\leq\frac{m_{0}}{10}$ and $m_{j}=|m_{j}|$
deterministically.) Furthermore, if $D= D_N$, $\delta \leq \bar \delta(k)$, and $\bar t$ are as in Proposition~\ref{prop:main-difference-inequalities}, for every $x_{0}\in E_{\gamma/\sqrt{N}}$,
if $T\leq\bar{t}$, then 
\[
\frac{\delta D_{N}\sqrt{T}}{10N}\leq\frac{\gamma}{10\sqrt{N}}\leq\frac{m_{0}}{10}.
\]
Thus, applying the directional error martingale bound \prettyref{eq:sample-error-martingale-bound}
with $d=D$, we obtain the desired bound (observing that the $o(1)$ terms are uniform in $D,K$ and $\varepsilon=O(1)$). 

Suppose now that $k=1$. By \prettyref{prop:drift-bound} and \prettyref{eq:second-order-is-controlled-k=1}
we have that for every $K>0$, every $\delta\leq \bar \delta (1)$, and every $N$ sufficiently large, 
\[
\inf_{t\leq M}\inf_{x_{0}\in E_{\gamma/\sqrt{N}}}\mathbb{P}\Big(m_{t}\geq\frac{7}{10}m_{0}+\sum_{j=0}^{t-1}\frac{\delta a_{k-1}}{8N}{-}\frac{\delta}{{N}}\sum_{j=0}^{t-1}\nabla H^{j+1}(X_{j})\cdot e_{1}\,,\quad \!\!\!\forall t\leq\tau_{0}^{-}\wedge\tau_{\eta}^{+}\Big)\geq1-\frac{3}{K} -o(1)\,.
\]
Controlling the directional error martingale by the same argument via \prettyref{eq:sample-error-martingale-bound} with $D= D_N$, we obtain \eqref{eq:k=1-inequality} as desired.

\section{Attaining weak recovery}\label{sec:weak-recovery}

We now turn to proving \prettyref{thm:main-theorem} and the recovery part of Theorem~\ref{thm:k=1}; we invite the
reader to recall the definition of $\bar{t}$ from \eqref{eq:tbar}.
The goal of this section will be to prove that the dynamics will have weakly recovered in some (possibly random) time before $\bar t\wedge M$.

\begin{prop}\label{prop:attain-weak-recovery}
	Under the assumptions of Theorem~\ref{thm:main-theorem}, there exists $\eta_0(\varrho,a_k,a_{k+1})>0$ such that for every $\eta<\eta_0$, for every $\gamma>0$, we have   
	\begin{align*}
	\lim_{N\to\infty} \inf_{x_0 \in E_{\gamma/\sqrt N}} \mathbb P_{x_0} \Big(\tau_{\eta}^{+} \leq \bar t \wedge M\Big)=1\,.
	\end{align*}
	If $k=1$ and we only assume $\alpha\delta^2 \leq \varepsilon = O(1)$ and $\delta$ is sufficiently small, but order one, then 
	\begin{align*}
	\liminf_{N\to\infty} \inf_{x_0\in E_{\gamma/\sqrt N}} \mathbb P_{x_0} \Big(\tau_\eta^+ \leq \bar t \wedge M \Big) \geq 1-C\Big(\delta + \frac{1}{D^2}\Big)\,,
	\end{align*}
	for some constant $C(C_1, a_1, a_2)>0$.  
\end{prop}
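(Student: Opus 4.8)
The plan is to deduce Proposition~\ref{prop:attain-weak-recovery} from the difference inequality of Proposition~\ref{prop:main-difference-inequalities} by comparison with a scalar deterministic recursion. First I would fix the free parameters $D,K,\delta$: take $\delta=\delta_N$ of order $\bar\delta_N(k)$ (admissible under the hypotheses of Theorem~\ref{thm:main-theorem}), take $K=K_N$ as large as the constraint $\delta\le\bar\delta_N(k)$ allows (so $K_N\to\infty$ when $k\ge2$; when $k=1$ one instead has $1/K_N\asymp\delta$, which is the source of the $\delta$ term in the $k=1$ bound), and take $D=D_N\to\infty$ slowly enough that $C/D_N^2\to0$ while the horizon $\bar t=N\gamma^2/(D_N^2\delta_N^2)$ from~\eqref{eq:tbar} stays large compared with the hitting time estimated below. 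With these choices Proposition~\ref{prop:main-difference-inequalities} yields an event $\mathcal G$, of $\mathbb P_{x_0}$-probability at least $1-o(1)$ (resp. at least the bound in~\eqref{eq:k=1-inequality} when $k=1$), uniformly over $x_0\in E_{\gamma/\sqrt N}$, on which, writing $T:=\bar t\wedge M$,
\[
m_t \ \ge\ \frac{m_0}{2}+\frac{\delta a_k}{8N}\sum_{j=0}^{t-1} m_j^{k-1}\qquad\Big(\text{with }\sum_{j=0}^{t-1}m_j^{k-1}=t\text{ when }k=1\Big)
\]
for all $t\le \tau_{\gamma/(2\sqrt N)}^-\wedge\tau_\eta^+\wedge T$.

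Next I would introduce the deterministic minorant $(\underline m_t)_{t\ge0}$ given by $\underline m_0:=m_0/2$ and $\underline m_t:=\tfrac{m_0}{2}+\tfrac{\delta a_k}{8N}\sum_{j<t}\underline m_j^{k-1}$, and prove, by a joint induction on $t$ on the event $\mathcal G$, the two statements $m_t\ge\underline m_t$ and $m_t>\gamma/(2\sqrt N)$, valid for every $t\le\tau_\eta^+\wedge T$. The second statement is what removes the stopping time $\tau_{\gamma/(2\sqrt N)}^-$: since $m_0\ge\gamma/\sqrt N$ and each increment of $\underline m_t$ is strictly positive, $\underline m_t>\gamma/(2\sqrt N)$ for all $t$, so on $\mathcal G$ the time $\tau_{\gamma/(2\sqrt N)}^-$ is never reached before $\tau_\eta^+\wedge T$; the first statement follows from the difference inequality together with monotonicity of $x\mapsto x^{k-1}$ on $[0,\infty)$ and the base case $m_0\ge m_0/2=\underline m_0$.

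Then I would analyze the scalar recursion and set $t^\ast:=\min\{t:\underline m_t\ge\eta\}$. For $k=1$ the growth is linear, $\underline m_t=m_0/2+\tfrac{\delta a_1}{8N}t$, so $t^\ast\lesssim N\eta/(\delta a_1)$. For $k=2$ it is geometric, $\underline m_t=(m_0/2)(1+\tfrac{\delta a_2}{8N})^t$, so $t^\ast\lesssim \tfrac{N}{\delta a_2}\log(2\eta\sqrt N/\gamma)\asymp \tfrac{N\log N}{\delta a_2}$. For $k\ge3$ the recursion is the explicit Euler scheme for $\dot m=\tfrac{\delta a_k}{8N}m^{k-1}$, which has finite explosion time; decomposing $[\gamma/(2\sqrt N),\eta]$ into dyadic scales and bounding, via monotonicity of $\underline m$, the number of steps needed to cross the scale $2^{-i}$ by $O\!\big(\tfrac{N}{\delta a_k}2^{i(k-2)}\big)$, the resulting geometric sum is dominated by its largest term and gives $t^\ast\lesssim \tfrac{N^{k/2}}{\delta a_k\gamma^{k-2}}$ with no extra logarithmic factor. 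In each case a direct check using $\delta_N\asymp\bar\delta_N(k)$, the growth conditions on $\alpha_N$ in Theorem~\ref{thm:main-theorem}, and $D_N\to\infty$ slowly shows $t^\ast<T=\bar t\wedge M$ for $N$ large. Finally, on $\mathcal G$: if $\tau_\eta^+>t^\ast$ then $t^\ast\le\tau_\eta^+\wedge T$, hence $m_{t^\ast}\ge\underline m_{t^\ast}\ge\eta$, contradicting $m_{t^\ast}<\eta$; therefore $\tau_\eta^+\le t^\ast<\bar t\wedge M$ on $\mathcal G$, so $\mathbb P_{x_0}(\tau_\eta^+\le\bar t\wedge M)\ge\mathbb P_{x_0}(\mathcal G)$, which gives both asserted bounds (for $k=1$, after taking $\liminf_N$ to absorb the $o(1)$).

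The step I expect to be the main obstacle is the third one when $k\ge2$: the deterministic comparison must be run essentially up to the entire sample budget (up to logarithmic factors), so the bookkeeping tying together $t^\ast$, the validity window $\bar t$ (which shrinks as $D$ grows), and $M$ is delicate and forces $\delta_N$ to sit near the top of its admissible range while $D_N$ grows only very slowly. In particular, for $k\ge3$ one cannot afford a crude Gr\"onwall-type continuous comparison---near the equator the drift $m^{k-1}$ is of size $N^{-(k-1)/2}$, and only the dyadic-scale bookkeeping recovers the sharp exponent $t^\ast\asymp N^{k/2}/\delta$ matching $\alpha_c(N,k)=N^{k-2}$. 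The $k=1$ refinement then requires no new idea: one keeps $K$ and $D$ as ($\delta$-dependent) constants rather than sending them to infinity and reads off $1-C(\delta+1/D^2)$ from~\eqref{eq:k=1-inequality}.
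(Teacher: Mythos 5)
Your proposal is correct and follows essentially the same route as the paper: invoke Proposition~\ref{prop:main-difference-inequalities} with $K$ chosen as large as the step-size constraint $\delta\le\bar\delta_N(k)$ permits and $D\to\infty$ slowly (so that $\bar t$ stays large); discard the stopping time $\tau^-_{\gamma/(2\sqrt N)}$ by observing that the minorant never drops below $m_0/2\ge\gamma/(2\sqrt N)$; reduce to a deterministic scalar recursion; and read off the hitting time $t^\ast$ and check $t^\ast\le\bar t\wedge M$ under the stated conditions on $(\alpha_N,\delta_N)$.

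The one place you diverge is the analysis of the scalar recursion for $k\ge 3$. The paper applies a discrete Bihari--LaSalle inequality (their \eqref{eq:Bih-Lasalle}, proved in Appendix~\ref{app:bihari-lasalle}): the super-solution is compared to the exact explicit-Euler solution $a_t$ of $\dot a = b\,a^{k-1}$, yielding the closed-form minorant $g_k(t)=m_0\big(1-\tfrac{\delta a_k}{8N}(k-2)m_0^{k-2}t\big)^{-1/(k-2)}$ whose finite blow-up time at $\asymp N^{k/2}/(\delta\gamma^{k-2})$ gives $t^\ast$ directly. You instead run a dyadic-scale bookkeeping argument, bounding the number of steps to double $\underline m$ at level $2^{-i}$ by $O(N\,2^{i(k-2)}/\delta)$ and summing the resulting geometric series. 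Both arguments are correct and give the same sharp exponent; yours is more elementary (no integral comparison lemma needed) at the cost of not producing an explicit pointwise minorant, while the paper's gives a clean formula and unifies $k\geq3$ with the $k=2$ Gr\"onwall computation. (A minor over-claim in your write-up: you suggest ``only'' the dyadic bookkeeping recovers the sharp exponent, but the Bihari--LaSalle comparison does the same job.) Your handling of the $k=1$ refinement via $1/K\asymp\delta$ matches the paper.
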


The proposition  immediately implies Theorem~\ref{thm:main-theorem}.

\subsection{Proof of Proposition~\ref{prop:attain-weak-recovery}} We analyze the difference inequalities of Proposition~\ref{prop:main-difference-inequalities} when $k=1$, $k=2$, and $k\geq3$
separately. Observe that in all cases, the right-hand side of the
difference inequality is increasing, so that since $x_{0}\in E_{\gamma/\sqrt{N}}$,
we have $\tau_{\gamma/(2\sqrt{N})}^{-}\geq\tau_{\eta}^{+}\wedge \bar t$
necessarily, and we may drop the requirement $t\leq\tau_{\gamma/(2\sqrt{N})}^{-}$
in all cases of \eqref{eq:k=1-inequality}--\prettyref{eq:k>=2-inequality}.

\subsection*{Linear regime: $k=1$. }

The right-hand side of the difference inequality in \eqref{eq:k=1-inequality}
is non-decreasing and greater than $\eta$ at time 
\[
t_{*}:=\lceil\frac{8\eta N}{\delta a_{k}}\rceil.
\]
As such, $(\tau_{\eta}^+\wedge \bar t \wedge M) \leq t_\star$ necessarily, and as long as $t_{*}<\bar{t}\wedge M$, we will have the desired
\begin{align*}
\lim_{N\to\infty}\inf_{x_{0}\in E_{\gamma/\sqrt{N}}}\mathbb{P}_{x_0}\Big(\tau_{\eta}^+\leq \bar t \wedge M \Big)= & 1\,.
\end{align*}
In order for $t_{*}<\bar{t}\wedge M$, we need, for a sequence $D_{N}\uparrow\infty$
arbitrarily slowly, 
\[
\lceil\frac{8\eta}{\delta a_{k}}\rceil\leq\frac{\gamma^{2}}{D_{N}^{2}\delta^{2}}\wedge\alpha.
\]
We see that this inequality is satisfied for
the choices of $\alpha,\delta$ as in the statement of Theorem~\ref{thm:main-theorem},
since $\delta=o(1)$ (as we can choose $D_{N}\to\infty$
arbitrarily slowly, so that $D_{N}^{2}\delta=o(1)$) and $\delta\alpha\uparrow\infty$. 

In the case of $\alpha = O(1)$, we can take $D$ sufficiently large, and subsequently take $\delta$ sufficiently small (depending on the $D$) such that the above inequality is satisfied.  
\qed

\subsection*{Quasilinear regime: $k=2$. }

In this case, we may use the discrete Gr\"ownwall inequality:  suppose that $\left(m_{t}\right)$ is any sequence such that for some $a,b\geq 0$ 
\begin{equation}\label{eq:disctete-gronwall}
m_{t}\geq a+\sum_{\ell=0}^{t-1}b m_{\ell}  \qquad \text{ then }\qquad
m_{t}\geq a(1+b)^{t}\,.
\end{equation}

Define the lower-bounding function, 
\begin{align*}
g_{2}(t):= & \frac{m_{0}}{2}\exp\Big(\frac{\delta a_{k}}{8N}t\Big)\,.
\end{align*}
Applying the discrete Gr\"onwall inequality to  \prettyref{eq:k>=2-inequality}, we obtain that for every $\gamma>0$, 
\[
\lim_{N\to\infty}\inf_{x_{0}\in E_{\gamma/\sqrt{N}}}\mathbb{P}_{x_{0}}\Big(m_{t}\geq g_{2}(t)\quad\mbox{for all }t\leq\tau_{\eta}^{+}\wedge\bar{t}\wedge M\Big)=1
\]
Since $g_{2}(t)\geq\eta$ for all 
\[
t\geq t_{*}:=\Big\lceil\frac{8N}{\delta a_{k}}(\log\frac{2}{m_{0}}+\log\eta)\Big\rceil\,,
\]
we see that as long as $t_{*}\leq\bar{t}\wedge M$, we will have $\lim_{N\to\infty}\inf_{x_{0}\in E_{\gamma/\sqrt{N}}}\mathbb{P}_{x_0} (\tau_{\eta}^+\leq \bar t \wedge M)=1$.
The criterion $t_{*}\leq\bar{t}\wedge M$ is satisfied using the facts
that $\delta=o(\frac{1}{\log N})$ (as we can choose $D_{N}$ to go
to $\infty$ arbitrarily slowly) and $\frac{\alpha\delta}{\log N}\uparrow\infty$.
\qed

\subsection*{Polynomial regime: $k\protect\geq 3$.}
Observe the following discrete analogue of the Bihari-LaSalle inequality: suppose that $(m_{t})$ is a sequence satisfying, for some $k>2$ and $a,b>0$
\begin{equation}\label{eq:Bih-Lasalle}
m_{t}\geq a+\sum_{\ell=0}^{t-1} b m_{\ell}^{k-1}
\qquad \text{ then }\qquad m_{t}\geq\frac{a}{(1-ca^{k-2}t)^{\frac{1}{k-2}}}.
\end{equation}
For the reader's convenience, we provide a proof in Appendix~\ref{app:bihari-lasalle}.

Applying \prettyref{eq:Bih-Lasalle}, we obtain
for every $t\leq\tau_{\eta}^+ \wedge\bar{t}\wedge M$,
\[
m_{t}\geq\frac{m_{0}}{\left(1-\frac{\delta a_{k}}{8N}(k-2)m_{0}^{k-2}t\right)^{\frac{1}{k-2}}}=:g_{k}(t)\,.
\]
In particular, $g_{k}(t)\ge\eta$
provided 
\[
\eta\left(1-\frac{\delta a_{k}}{8N}(k-2)\frac{\gamma^{k-2}}{N^{\frac{k-2}{2}}}t\right)=o\Big(\frac{\gamma^{k-2}}{N^{\frac{k-2}{2}}}\Big)\,.
\]
As such, if  for some $K$ sufficiently large, 
\[
t\geq t_{*}=\Big \lceil \frac{8N}{\delta a_{k}(k-2)\gamma^{\left(k-2\right)/2}}N^{\frac{k-2}{2}}\Big(1-\frac{K}{N^{({k-2})/{2}}}\Big)\Big\rceil\,,
\]
then as long as $t_\ast \leq\bar t \wedge M$, we will have $\lim_{N\to\infty} \inf_{x_0 \in E_{\gamma/\sqrt N}} \mathbb P_{x_0}\big(\tau_\eta^+ \leq \bar t \wedge M\big) = 1$. The criterion $t_\ast\leq \bar t \wedge M$ is satisfied using the facts that $\delta=o(N^{-\frac{k-2}{2}})$
(as we can choose $D_{N}$ to go to $\infty$ arbitrarily slowly)
and $\frac{\alpha\delta}{N^{(k-2)/2}}\uparrow\infty$. \qed

\section{Strong recovery and the descent phase}
We begin this section by proving a law of large numbers for the trajectory in the descent phase,  Theorem~\ref{thm:lln}. We then combine it with Theorem~\ref{thm:main-theorem} to conclude the proof of Theorem~\ref{thm:main-1}.

\begin{proof}[\textbf{\emph{Proof of Theorem~\ref{thm:lln}}}]
	Recall $m_t = m(X_t)$ and $r_t$ and analogously define $\overline m_t : = m(\overline X_{t})$, 
	\begin{align*}
	\overline{m}_{t}=\frac{1}{\overline{r}_{t}}\Big(\overline{m}_{t-1}-\frac{\delta}{{N}}\nabla\Phi(\overline{X}_{t-1})\cdot e_{1}\Big) \qquad \mbox{where} \qquad \overline r_t : = \sqrt{1 + \delta^2 |\nabla \Phi(\overline X_{t-1})|^2/N^2}
	\end{align*}
	As shown in~\eqref{eq:r-bound}, $|r_t -1|\le \delta^2 (\frac{A}{N^2} + \frac{L_t}{N})$ where $L_{t}=|\nabla H^{t}(X_{t-1})/\sqrt{N}|^{2}$. By similar reasoning, 
	$|\bar r_t - 1| \le \frac{A \delta^2}{N^2}$. At the same time, 
	\begin{align*}
	\big|m_{t-1} -  & \frac{\delta}{ N} \nabla \Phi(X_{t-1})\cdot e_1 - \frac{\delta}{ N} \nabla H^t(X_{t-1})\cdot e_1 \big| \\
	& \qquad \le 1 + \sup_{x}\frac{\delta}{ N} |\nabla \Phi(x) \cdot e_1|+ \frac{\delta}{ N}|\nabla H^t(X_{t-1}) \cdot e_1|  \le 1+ \frac{\delta\sqrt A}{N} + \frac{\delta\sqrt{L_t}}{\sqrt N}\,.
	\end{align*}
	As such, we have the bound
	\begin{align*}
	\Bigg|m_{t}-\bigg(m_{t-1}-\frac{\delta}{{N}}\nabla\Phi(m_{t-1})\cdot e_{1}-\delta\frac{1}{{N}}\nabla H^{t}(X_{t-1})\cdot e_{1}\bigg)\Bigg|\leq & \frac{2\delta^{2}(1\vee L_{t})(1\vee \frac{\delta\sqrt{L_t}}{\sqrt N})}{N}\,.
	\end{align*}
	Iterating the above bound, we see that 
	\begin{align*}
	\sup_{t \le M} \Bigg|m_{t}-\bigg(m_{0}-\sum_{\ell=0}^{t-1}\frac{\delta}{{N}}\nabla\Phi(m_{\ell})\cdot e_{1}-\frac{\delta}{{N}}\sum_{\ell=0}^{t-1}\nabla H^{\ell+1}(X_{\ell})\cdot e_{1}\bigg)\Bigg|\leq & \sum_{\ell=0}^{M-1}\frac{2\delta^{2}(1\vee L_{\ell})(1 \vee  \frac{\delta \sqrt L_\ell}{\sqrt N})}{N}\,.
	\end{align*}
	Consider the probability that the quantity on the right-hand side
	is large: for every $\varepsilon>0$, 
	\begin{align*}
	\mathbb{P}\Big(\sum_{\ell=0}^{M-1}\frac{\delta^{2}(L_{\ell}+ \frac{\delta\sqrt L_\ell}{\sqrt N}+ \frac{\delta L_\ell^{3/2}}{\sqrt N})}{N}>\varepsilon\Big)&  \leq {\varepsilon^{-1} \alpha \delta^2 \sup_{\ell}\mathbb E\Big[L_\ell+ \frac{\delta\sqrt L_\ell}{\sqrt N} + \frac{\delta L_\ell^{3/2}}{\sqrt N}\Big]}\,.
	\end{align*}
	Recalling from~\eqref{eq:bar-l-def} that $\sup_{\ell}\mathbb E[L_\ell]\vee \mathbb E[L_\ell^2]\le \bar L$, we see that each of the expectations above are bounded by $\bar L$,
	whereby as long as $\alpha \delta^2= o(1)$, this is $o(1)$ for all fixed $\varepsilon$.
	
	As such, it suffices to consider the linearization
	\begin{align*}
	\mathsf{m}_{t}:= & m_{0}-\sum_{\ell=0}^{t-1}\frac{\delta}{{N}}\nabla\Phi(m_{\ell})\cdot e_{1}-\frac{\delta}{{N}}\sum_{\ell}^{t-1}\nabla H^{\ell+1}(X_{\ell})\cdot e_{1}
	\end{align*}
	as for every $\varepsilon$, with probability $1-o(1)$, we have $\sup_{\ell\leq M} |\mathsf{m}_{\ell}-m_{\ell}|\leq \varepsilon$.
	
	By the same reasoning, as long as $\alpha \delta^2 = o(1)$, for every $\varepsilon$, for $N$ large enough, we have deterministically $|\bar {\mathsf m}_\ell - \bar m_\ell|\le \varepsilon$, where $\bar {\mathsf m}$ is the linearized population dynamics,
	\begin{align*}
	\bar{\mathsf m}_t = m_0 - \sum_{\ell = 0}^{t-1} \frac{\delta}{ N} \nabla \Phi(m_\ell)\,.
	\end{align*}
	
	With the above in hand, it clearly suffices to show that  
	\begin{align}\label{eq:wts-lln}
	\sup_{\ell \le M} |\bar {\mathsf m}_\ell - \mathsf m_\ell| \to 0\qquad \mbox{in $\mathbb P$-prob}\,.
	\end{align} 
	Towards that, let us control the effect of the directional error martingale for all times. Recall from Doob's maximal inequality as in~\eqref{eq:sample-error-martingale-bound} that for every $\lambda>0$,
	\begin{align}\label{eq:lln-martingale}
	\mathbb{P}\Big(\sup_{t\leq M}\frac{\delta}{N}\Big|\sum_{\ell = 0}^{t-1}\nabla H^{\ell+1}(X_{\ell})\cdot e_{1}\Big|>\lambda\Big)\leq & \frac{C_{1} \alpha \delta^2 }{\lambda^2 N}=o(N^{-1})\,.
	\end{align} 
	To show~\eqref{eq:wts-lln}, consider the probability that the supremum is greater than some $\gamma>0$, and split the supremum into a one over $\ell \le T\delta^{-1} N$ and one over $T\delta^{-1}N \le \ell \le M$ for a $T$ to be chosen sufficiently large. 
	For the former, fix any $T$, and recall that $\nabla\Phi(m)\cdot e_{1}=\phi'(m)(1-m^{2})$. As $\phi'_N$ are uniformly $C^1$ on $[0,1]$, there exists $K$ such that uniformly over $N$,
	\begin{align*}
	\sup_{x,y\in [0,1]}|\phi'(x)(1-x^2) - \phi'(y)(1-y^2)| \le K|y-x|\,.
	\end{align*}
	We obtain from
	this that for every $\varepsilon>0$, on the event  $\{\sup_{\ell \le M} |m_\ell- \mathsf m_\ell|\vee |\bar m_\ell - \bar{\mathsf m}_\ell| <\varepsilon\}$, 
	\begin{align*}
	|\bar{\mathsf m}_{t}-\mathsf m_{t}| &\leq \sum^{t-1}_{\ell=0}\frac{\delta}{ N}|\nabla \Phi(\bar m_\ell)\cdot e_1- \nabla \Phi(m_\ell)\cdot e_1|+\frac{\delta}{{N}}\Big|\sum_{\ell = 0}^{t-1}\nabla H^{\ell+1}(X_{\ell})\cdot e_{1}\Big| \\ 
	& \le \sum_{\ell=0}^{t-1}\Big[\frac{\delta K}{N} |\bar{\mathsf m}_\ell - \mathsf m_\ell| + \frac{2\delta K \varepsilon}{N} \Big] + \frac{\delta}{{N}}\Big|\sum_{\ell=0}^{t-1}\nabla H^{\ell+1}(X_{\ell})\cdot e_{1}\Big|\,.
	\end{align*}
	Combined with~\eqref{eq:lln-martingale}, as long as $\alpha \delta^2 = o(1)$, with probability $1-o(1)$, we have for all $t\le T\delta^{-1} N$,  
	\begin{align*}
	|\bar{\mathsf m}_{t}-\mathsf{m}_{t}|\leq & (2TK+1)\varepsilon+\frac{\delta K}{N}\sum_{\ell = 0}^{t-1}|\bar{\mathsf m}_{\ell}-\mathsf{m}_{\ell}|\,,
	\end{align*}
	which by the discrete Gronwall inequality, implies that for every
	$\varepsilon>0$, with probability $1-o(1)$, 
	\begin{align*}
	\sup_{t\leq T\delta^{-1} N}|\mathsf m_{t}-\bar{\mathsf m}_{t}|\leq & (\frac{2M K}{N}\delta+1)\varepsilon e^{KT}\,.
	\end{align*}
	For each $\gamma>0$, there exists $\varepsilon(K,T)>0$ such that the above is at most $\gamma/5$.

	Now let $T= T(\gamma)$ be such that $$\sup_{T\delta^{-1} N \le t\le N}|\bar{\mathsf m}_{t} -1|<\frac{\gamma}{5}\,;$$ this $T$ exists and is $O(1)$ by Assumption A. In that case, using again Assumption A to note that $\nabla \Phi(m)\ge 0$ while $m\ge 0$, we get 
	\begin{align*}
	\sup_{T \delta^{-1} N \le t\le M}|\mathsf m_t - \bar{\mathsf m}_t| < \frac {2\gamma}{5} + |\mathsf m_{T\delta^{-1}N} - \bar {\mathsf m}_{T\delta^{-1} N}| + \frac{\delta}{N} \Big|\sum_{\ell = T\delta^{-1} N}^{M} \nabla H^{\ell+1}(X_\ell) \cdot e_1\Big|
	\end{align*}
	By the first part of~\eqref{eq:wts-lln}, and the bound of~\eqref{eq:lln-martingale} applied to $\gamma/5$, we see that the probability that the above is greater than $\gamma$ is also $o(1)$. Together these yield~\eqref{eq:wts-lln}. 
\end{proof}

\subsection{Proof of \prettyref{thm:main-1} and item (a) of~\prettyref{thm:k=1}}
Let $(\alpha_N,\delta_N)$ be as in Theorem~\ref{thm:main-1}, fix any $\varepsilon>0$, and consider the $\mu_N\times\mathbb P$-probability that $|m(X_M)- 1|>\varepsilon$; we need to show this goes to zero as $N\to\infty$. By the Poincar\'e Lemma, for every $\zeta>0$, there exists $\gamma$ such that for all $N$ sufficiently large, 
\begin{align*}
\mu_N(m_0 < \gamma/\sqrt N) < \zeta/3\,.
\end{align*}
Let us now suppose that $X_0$ is such that $m_0\ge \gamma/\sqrt N$. By Theorem~\ref{thm:main-theorem}, there exists $\eta_0>0$ such that for any such $X_0$, for $N$ sufficiently large, we have 
\begin{align*}
\mathbb P(\tau_\eta^+ <  M_1) < \zeta/3
\end{align*}
for $M_1 = M/2$ (notice that the criteria of the theorem apply equally whether we take $\alpha$ or $\alpha/2$). 

Observe that by the Markov property, conditionally on the stopping time $\tau_\eta^+$ and the value $X_{\tau_\eta^+}$. we have the distributional equality 
\begin{align*}
\mathbb P_{x_0}(X_{\tau_\eta^++s} \in \cdot \mid \tau_\eta^+,X_{\tau_\eta^+}) \stackrel{d}= \mathbb P_{X_{s}} (X_{s}\in \cdot)
\end{align*}
As such, for $X_0$ satisfying $m(X_0)\ge \gamma/\sqrt N$, we have
\begin{align*}
\mathbb P_{X_0} (m(X_M)\le 1-\varepsilon) \le \sup_{M_1\le s\le M}\sup_{y_0: m(y_0)\ge \eta} \mathbb P_{y_0}(m(X_s) \le 1-\varepsilon) + \frac{2\zeta}{3}\,.
\end{align*}
Using again the fact that the criteria in Theorem~\ref{thm:lln} on $\alpha, \delta$ apply equally well if we replace $\alpha$ with $\alpha/2$, we see that for $N$ sufficiently large, the right-hand side above is bounded by
\begin{align*}
\sup_{M_1 \le M_2\le M} \zeta + \mathbf 1\{m(\bar X_{M_2})<1-\frac{\varepsilon}{2}\}\,.
\end{align*}
Using the Assumption A, the indicator function above is $0$ as long as $M_2$ is a sufficiently large constant depending on $\varepsilon$, which it necessarily is since it is at least $M/2$ and $M = \omega (1)$ by the assumptions of Theorem~\ref{thm:main-1}.

The $k=1$ case with linear sample complexity of item (a) of Theorem~\ref{thm:k=1} follows naturally by noticing that $\alpha$ could have been taken to be a large enough constant in the above at the expense of $\varepsilon$ and $\zeta$ being small but order one. \qed

\section{Online SGD does not recover with smaller sample complexity}

Here, we prove an accompanying refutation theorem, showing that if $\alpha$ is smaller than in Theorem~\ref{thm:main-theorem} (up to factors of $\log N$), there is not enough time for the online SGD to weakly recover in one pass through the $M$ samples. 

\begin{proof}[\emph{\textbf{Proof of \prettyref{thm:refutation-1} and item (b) of \prettyref{thm:k=1}}}]
	We will in fact prove the following stronger refutation: for every $\eta>0$, and every $\Gamma>0$, 
	\begin{align}\label{eq:wts-refutation}
	\sup_{x:m(x)<\Gamma N^{-1/2}} \mathbb P_{x}\Big(\sup_{t\le M} m(X_t)>\eta\Big) = o(1)\,.
	\end{align}
	This implies Theorem~\ref{thm:refutation-1} because for every $\varepsilon$ there exists a $\Gamma$ such that $\mu_N(m(x)>\Gamma N^{-1/2})<\varepsilon$ for all $N$ sufficiently large, by the Poincar\'e lemma.
	
	Recall that the radius $r_t = |\tilde X_t|$ satisfies $r_t \geq 1$ deterministically, so that by~\eqref{eq:m-t-finite-difference}, as long as $m_t >0$, we have
	\begin{align*}
	m_t \leq m_{t-1} {-} \frac{\delta}{ N} \nabla \Phi (X_{t-1})\cdot e_1 {-} \frac{\delta}{ N} \nabla H^t (X_{t-1}) \cdot e_1\,.
	\end{align*}
	From this, we can first observe the following crude bound on the maximal one-step change $m_t - m_{t-1}$. If $m_{t-1}<0$ but $m_t>0$, then we have the above inequality without the $m_{t-1}$, and furthermore we can put absolute values on each of those terms. Recall that for every $r>0$, 
	\begin{align*}
	\sup_{x\in \mathbb S^{N-1}} \mathbb P\Big( \Big|\frac{\delta}{ N} \nabla H(x)\cdot e_1\Big| >r \Big) \leq \frac{\delta^2 C_1}{N^2 r^2}
	\end{align*}
	from which, for every sequence $d_N>0$ going to infinity arbitrarily slowly, and every  $t>0$, 
	\begin{align*}
	\sup_{x: m(x)\leq 0} \mathbb P \Big( m_{t} > d_N N^{-1/2} \mid X_{t-1} = x\Big) \leq \frac{C_1 \alpha \delta^2} { M d_N^2} = O(\frac{\alpha \delta^2}{M d_N^2})
	\end{align*} 
	By the Markov property of the online SGD, by a union bound over the $M$ samples and using the fact that $\alpha \delta^2 = o(1)$, we may, with probability $1- o(\alpha \delta^2/(d_N^2))$, work under the event that $|\delta (\nabla H^{j+1}(X_j)\cdot e_1)/ N|< d_N/\sqrt{N}$  for every $j$. 
	
	With that in mind, summing up the finite difference inequality over all time, we see that for every $t\leq \tau_0^-\wedge \tau_\eta^+$, we have 
	\begin{align*}
	m_t \leq  m_0 + \frac{\delta}{ N} \sum_{j = 0}^{t-1} \frac{3a_k}{2 N} m_j^{k-1} {-} \frac{\delta}{ N} \sum_{j=0}^{t-1} \nabla H^{j+1}(X_j)\cdot e_1\,.
	\end{align*} 
	Recall from Doob's maximal inequality, that 
	\begin{align}\label{eq:doob-inequality-refutation}
	\sup_{x_0} \mathbb P_{x_0}\Big(\sup_{1 \leq s\leq M} \Big| \frac{\delta}{ N}\sum_{j=0}^{s-1} \nabla H^{j+1} (X_j ) \cdot e\Big| > r\Big) \leq \frac{2 \alpha \delta^2  C_1}{Nr^2}\,.
	\end{align}
	Now define a sequence of excursion times as follows: for $i\geq 1$, we let $\tau_0 = 0$ and 
	\begin{align*}
	\tau_{2i-1}  := \inf\{t\geq \tau_{2i-2}: m(X_t)\leq 0\} 
	\qquad\tau_{2i}   : = \inf\{t\geq \tau_{2i-1}: m(X_t)>0\}
	\end{align*}
	Taking $r = d_N N^{-1/2}$ in~\eqref{eq:doob-inequality-refutation}, it follows that
	\begin{align*}
	\inf_{x : m(x)<\Gamma N^{-1/2}} \mathbb P\Big( m_t \leq m_{\tau_{2i}}+ \frac{d_N}{\sqrt N} +  \frac{\delta}{\sqrt N} \sum_{j=0}^{t-1}\frac{3a_k}{2\sqrt N}m_j^{k-1}, \quad  \!\!\forall i,\,\forall t\leq [\tau_{2i},& \tau_{2i+1}\wedge \tau^+_\eta]\Big) \\ 
	& = 1-O(\frac{\alpha \delta^2}{d_N^2})\,.
	\end{align*}
	We next claim that the inequality above implies, deterministically, that through every excursion (i.e., for all $i$), we have $\tau_\eta^+> \tau_{2i+1}\wedge M$. First of all, recall that we have restricted to the part of the space on which $m_{\tau_{2i}}$ is always less than $d_N N^{-1/2}$.  Then, if we have the inequality in the probability above, by the discrete Gronwall inequality \eqref{eq:disctete-gronwall} when $k=2$ and the discrete Bihari--LaSalle inequality \eqref{eq:Bih-Lasalle} when $k>2$, we have for some $c = c(k)>0$, 
	\begin{align*}
	m_t \leq \begin{cases} 
	2d_N N^{-1/2}+ \frac{2\delta a_k}{N} t & k=1 \\
	2d_N N^{-1/2}  \exp \Big(\frac{2\delta a_k}{N} t\Big) & k=2 \\ 
	2d_N N^{-1/2} \big( 1- c  \delta a_k d_N^{k-2} N^{-\frac{k-2}2 -1} t\big)^{- 1/(k-2)} & k\geq 3\end{cases}
	\end{align*} 
	For $N$ sufficiently large, the right-hand side above is smaller than $\eta$ for all $t\leq \tilde t$ where 
	$$
	\tilde t = \begin{cases}
	\epsilon \eta \delta^{-1} N  & k=1 \\
	\frac{\epsilon}{\delta} N \log N  & k=2 \\ 
	d_N^{-\epsilon}   \delta^{-1} N^{1+\frac{k-2}2} & k\geq 3
	\end{cases}
	$$
	for some $\epsilon>0$ sufficiently small depending on $k, a_k, a_{k+1}, C_1$ (as long as $d_N$ is growing slower than $N^{\frac 12-\zeta}$ for some $\zeta>0$ say). Recall the restrictions on $\alpha$ and let $b_N$ be a sequence going to infinity arbitrarily slowly.  
	\begin{enumerate}
		\item If $k=1$, we can choose $d_N$ to be any diverging sequence. Since $\alpha = o(1)$ and $\delta=O(1)$, the above probabilities were all $1-o(1)$, and for every fixed $\eta>0$, we have $\tilde t \geq M$. 
		\item If $k=2$, we can choose $d_N$ diverging as a power in $N$, say $N^{1/4}$ such that for all $\alpha \delta^2 = o(N^{1/2})$, and in particular $\delta = O(1)$, the above probabilities $1-O(\alpha \delta^2/d_N^2)$ were all $1-o(1)$ \emph{and} $\tilde t \geq M$.  
		\item If $k>2$, we can choose $d_N$ to be a sequence diverging sufficiently slowly. Then for all $\alpha \delta^2 = O(1)$, the above probabilities were all $1-o(1)$ \emph{and} $\tilde t \geq M$ (where to see this, we combined the inequalities $\sqrt \alpha =o(N^{(k-2)/2})$ and $\sqrt \alpha = O(\delta^{-1})$).  
	\end{enumerate}
	In both cases, we conclude that necessarily $\tau_\eta^+ >\tau_{2i+1}$, and therefore for every $\eta>0$,
	\begin{align*}
	\inf_{x: m(x)<\Gamma N^{-1/2}} \mathbb P \Big( m_t <\eta \quad \mbox{for all } t\in \bigcup_{i} [\tau_{2i-2},\tau_{2i-1}]\Big) = 1-o(1)
	\end{align*}
	At the same time, deterministically, for all $t\in \bigcup_{i} [\tau_{2i-1},\tau_{2i}]$ we have $m(X_t)\leq 0 <\eta$, implying~\eqref{eq:wts-refutation}.
	
	In order to conclude part (b) of Theorem~\ref{thm:k=1} when $k=1$ we reason as above, taking $d_N$ and $\alpha$ to be sufficiently large constants together, then subsequently taking $\delta$ to be a sufficiently small constant so that the probabilities of order $\alpha\delta^2/d_N^2$ above can be made arbitrarily small. 
\end{proof}


\subsection*{Acknowledgements}
The authors thank the anonymous referees for their detailed comments and suggestions. The authors thank Y.\ M.\ Lu for interesting discussions. G.B.A. thanks A. Montanari, Y. Le Cun, and L. Bottou for interesting discussions at early stages of this project. A.J. thanks S. Sen for helpful comments on this work. R.G. thanks the Miller Institute for Basic Research in Science for their support. A.J. acknowledges the support of the Natural Sciences and Engineering Research Council of Canada (NSERC). Cette recherche a \'et\'e financ\'ee par le Conseil de recherches en sciences naturelles et en g\'enie du Canada (CRSNG),  [RGPIN-2020-04597, DGECR-2020-00199].



\appendix

{
	\section{Useful bounds on norms of random vectors}
}
Before getting to the deferred proofs of Section~\ref{sec:examples}, we give a few useful inequalities we will need. 
Recall that if $(Y_\ell)_{\ell=1}^m$ is a collection of $m$ non-negative random variables with finite
$p$-th moment, i.e., $\max_\ell \E Y_\ell^p=K$, and $(p_\ell)_{\ell=1}^{m}$ is such that $\sum p_\ell = p$, then 
\begin{equation}\label{eq:mixed-moment-bound}
\E\big [ Y_1^{p_1}\cdots Y_{m}^{p_m}\big] \leq K.
\end{equation}
This follows by noting that by Young's inequality and Jensen's inequality we have that
\[
\E\Big[(Y_1^{\frac{p_1}{p}}\cdots Y_{m}^{\frac{p_m}{p}})^p\Big] \leq \E\Big[\Big(\sum_{\ell = 1}^{m} \frac{p_\ell}{p} Y_\ell\Big)^p\Big] 
\leq\sum \frac{p_\ell}{p} \E Y_\ell^{p} \leq C\,,
\]
for some $C(K,p)$.

Using the above we can easily obtain the following bound on the moments of the norm of a random vector. Suppose that $X$ is a centered random vector in $\mathbb R^n$ whose entries have uniformly bounded $2k$-th moment, i.e., $\sup_i \E[X_i^{2k}] < K$ for some $k\ge 1$. Then using~\eqref{eq:mixed-moment-bound} for $1\le q \le k$, there is $C = C(K,q)>0$ such that 
\begin{align}\label{eq:vector-norm-bound}
\E[\|X\|_2^{2q}] = \E\Big[\Big(\sum X_i^2\Big)^{q}\Big] \le \E\Big [\sum_{i_1,...,i_q} X_{i_1}^2\cdots X_{i_q}^2\Big ]\le C N^q\,.
\end{align}

\begin{lem}\label{eq:moment-lemma}
	Suppose that $X$ is a centered random vector in $\R^{n}$
	whose entries have uniformly bounded $2k$-th moment, 
	i.e., $\sup_{i}\E [X_{i}^{2k}]<K$
	for some $k\geq1$. Then for $1\leq q\leq 2k$ there is $C=C(K,q)>0$
	such that
	\begin{equation}\label{eq:moment-bound}
	\E[\abs{X\cdot w}^{q}] \leq CK^{\frac{q}{2k}}\norm{w}_{2}^{q}\quad\forall w\in\R^{n}.
	\end{equation}
\end{lem}
\begin{proof}
	First note the following symmetrization inequality (see e.g., Exercise 6.4.5 of~\cite{Vershynin}): if $(\epsilon_i)$ are i.i.d.\ Rademacher random variables, then since $X$ is centered, we have
	\[
	\E\big[\abs{X\cdot w}^{q}\big]\leq2^{q}\E\Big[\abs{\sum_i \eps_i X_iw_i}^{q}\Big]\,.
	\]
	Thus it suffices to assume that the entries of $X$ are jointly symmetric in the 
	sense that that the law of $X$ is invariant under the sign change of any given entry.
	Furthermore, by Jensen's inequality it suffices to consider the case
	that $q=2k$. 
	
	We begin with a direct computation: 
	\[
	\E[\abs{X\cdot w}^{2k}]=\sum_{i_1,...,i_{2k}} w_{i_{1}}\cdots w_{i_{2k}}\cdot\E[ X_{i_{1}}\cdots X_{i_{2k}}].
	\]
	As the entries of $X$ are symmetric, any summand corresponding to an index that appears an odd number of times must be zero.  
	In particular, we have the following.
	Let $\mathcal{P}=\{p_{1},\cdots,p_{2k}\}$
	be a partition of the integer $2k$. We say $\cP$ is \emph{even} if
	the $p_{\ell}$ are even. By \eqref{eq:mixed-moment-bound}, 
	all of these moments are of the form $\E X_{j_1}^{p_1}\cdots X_{j_m}^{p_m} \leq K$
	for some even partition $\cP$.
	As such, the above sum is bounded by
	\[
	\sum_{\mathcal{P}\:even}c(\mathcal{P})K\prod_{p\in\mathcal{P}}\norm{w}_{p}^{p}\le  \sum_{\mathcal{P}\:even}c(\mathcal{P})K\norm{w}_{2}^{2k},
	\]
	where here $c(\mathcal{P})$ is the number of groupings of $2k$ items in to $m$ groups of sizes $p_1,\ldots,p_m$.
	Here we used that since $\mathcal P$ is even, $p\geq 2$ for any $p\in\cP$ so that $\norm{w}_{p}\leq\norm{w}_{2}$,
	and that $\sum p_{\ell}=2k$ as $\mathcal{P}$ is a partition. In
	particular, as $\max c(\mathcal P)$, and the number of even partitions of $2k$, each depend only on $k$ we get 
	\[
	\E[\abs{X\cdot w}^{2k}]\leq C\cdot K\cdot\norm{w}_{2}^{2k}\,,
	\]
	for some $C=C(K,k)>0$ as desired.
\end{proof}

\section{Deferred proofs from Section~\ref{sec:examples}}\label{app:deferred-proofs}
In this section, we verify that the various examples of Section~\ref{sec:examples} satisfy Assumptions A--B. 

\subsection{Proof of \prettyref{prop:GLM-pop}}\label{app:Generalized-linear-models}

We begin with the to the proof of \eqref{eq:GLM-pop}. Fix $f$ as in the statement of the theorem. Since $f'$ is of at most polynomial growth, so is $f$. In particular, $f\in L^2(\varphi)$ where $\varphi$ is the standard Gaussian measure on $\R$. Recall \eqref{eq:GLM-pop-loss-1}, and let $C_\epsilon = \E[\epsilon^2]$. By rotational invariance of the Gaussian ensemble, we may take $v_{0}=e_{1}$ there. Furthermore,  the $x$-dependence of the population loss depends only on $x$ through $x\cdot e_1$, so that 
\begin{equation}\label{eq:Phi-GLM-rotation-invariance}
\Phi(x)=\E\left[\left(f\left(a_{1}m(x)+a_{2}\sqrt{1-(m(x))^{2}}\right)-f(a_{1})\right)^{2}\right] + C_\epsilon
\end{equation}
where $a_1,a_2\sim\cN(0,1)$ are independent.

To compute this expectation, recall the following.
For $s\in[-1,1]$, consider the Noise operator $T_s:L^2(\varphi)\to L^2(\varphi)$
\[
T_s f(x) = \E[ f(x s+\sqrt{1-s^2} a_2)]\,.
\]
Recall that the Hermite polynomials satisfy $T_s h_k = s^k h_k$ \cite{LedouxTalagrand}. (Usually, this is stated only for $s\geq0$. To see this for $s<0$, simply note that $T_s h_k(x)=T_{\abs{s}}h_k(-x)=(-1)^k T_\abs{s}h_k(x)$.)
Consequently we have that
\[
\Phi(x) = \norm{f}_{L^2(\varphi)}^2
- 2\langle f,T_m f\rangle_{L^2(\varphi)} + C_\epsilon
= 2\sum_j u_j^2 - 2\sum_j u_j^2 m^j + C_\epsilon = \phi_f(m)
\]
as desired. 

Assumption A is immediate from \eqref{eq:GLM-pop}.
It remains to show that the pair satisfies Assumption B. 
To this end, recall that since $f'$ is of at most polynomial growth, we have $f\in H^{1}(\varphi)$,
where $H^1$ is the Sobolev space with norm  
\[
\norm{f}^2_{H^{1}(\varphi)}:=\int f(z)^2+f'(z)^{2}d\varphi(z)=\sum_{j\geq 0 }\left(1+j^{2}\right)u_{j}^{2}<\infty.
\]
We now turn checking \prettyref{eq:condition-1}-\prettyref{eq:condition-2}.
First note that for every $x$, 
\[
\nabla\Phi(x)=-2\sum j u_{j}^{2}m^{j-1}\cdot \nabla m.
\]
Thus, for every $x$, $\abs{\nabla \Phi(x)}\leq 2\sum ju_k^2 <2\norm{f}_{H^1}^2<\infty$,
where we used here that $\abs{m}\leq1$. Similarly for every $x$, $\nabla \Phi(x)\cdot e_1=\nabla\Phi(x)\cdot\nabla m= -2\sum ju_j^2 m^{k-1}$
so that $\abs{\nabla \Phi\cdot e_1}^2<4\norm{f}_{H^1}^4<\infty$.

Thus it suffices to check \prettyref{eq:condition-1}-\prettyref{eq:condition-2} for $\cL$ itself.
Here we have that if we let $\pi_x(v)=v-(v\cdot x) x$ be the projection on to the tangent space at $x$, we have
\[
\nabla\cL(x;a,\epsilon)=2(f(a\cdot x)+\eps-f(a_{1}))f'(a\cdot x)\pi_x a\,.
\]
Thus, by H\"older's inequality and the fact that $|x+y|^p \leq 2^p(|x|^p+|y|^p)$ for $p\geq 1$, for any $q\geq1$, if we take $\gamma>0$
and $r=\frac{2 q (q+\gamma)}{\gamma}$, 
we have that
\begin{align*}
\E\abs{\nabla\cL(x)}^{q} \leq 
2^{q}\Big(\E[(f(a\cdot x)-f(a_1))^{q+\gamma}]^{\frac{q}{q+\gamma}}+\E[\abs{\eps}^{q+\gamma}]^{\frac{q}{q+\gamma}}\Big)
\cdot \E[\abs{f'(a\cdot x)}^{r}]^{\frac{q}{r}} \cdot \E[\norm{a}_2^{r}]^{q/r}.
\end{align*}
The expectations involving $f,f'$ are bounded since $f$ and $f'$ are of at most
polynomial growth and $a\cdot x$ is Gaussian for every $x$; the expectation involving $\epsilon$ is bounded as long as $\eps$ has finite $p$-th moment for some $p>q$. The last term is bounded by~\eqref{eq:vector-norm-bound} since $a$ is a standard Gaussian vector. 
Taking $q=4+\iota$ yields \eqref{eq:condition-2} for $\iota$ small enough, after recalling our assumption 
that $\eps$ has finite $4+\delta$-th moment.

For \eqref{eq:condition-1}, observe that since $f$ and $f'$ are of at most polynomial
growth,
\begin{align*}
\E\abs{\nabla\cL\cdot e_{1}}^{2} & =\E\Big[\abs{f(a_{1}m+a_{2}\sqrt{1-m^{2}})-f(a_{1})+\eps}^{2}\abs{f'(a_{1}m+a_{2}\sqrt{1-m^{2}})}^{2}\abs{a_{1}}^{2}\Big]\leq C\,,
\end{align*}
again using H\"older's inequality together with the moment assumption on $\epsilon$.
\qed

\subsection{Proof of Proposition~\ref{prop:GLM-1-prop}}\label{app:GLM}
Let us first prove \eqref{eq:GLM-rep}. Since $\E[y\vert a] = f(a\cdot v_0)$,  we have
\[
\Phi(x) = \E [y (a\cdot x) - b(a\cdot x)] = \E[ f(a\cdot v_0) (a\cdot x)] - c,
\]
for some constant $c$ (in particular $c=\E b(a_1)$), where in the second equality, we have used the tower property.
Then as in the proof of \prettyref{prop:GLM-pop}, we see that upon taking $v_0 = e_1$ without loss of generality, 
\[
\E f(a\cdot v_0) a\cdot x = \E f(a_1) T_{m} h_1(a_1) = u_1(f) m\,.
\]
Finally, since $f$ is increasing, invertible, and differentiable, Gaussian integration-by-parts shows that 
$$u_1(f) = \E [Z f(Z)] = \E [f'(Z)]>0.$$
Assumption A is now evident from 
the representation formula \eqref{eq:GLM-rep}.

Let us now turn to proving that Assumption B holds.
To this end, note that as in \prettyref{prop:GLM-pop} the relevant inequalities hold 
for the population loss. Thus it suffices to show it for the true loss. 
Here we see that  if we let $\pi_x(v)=v-(v\cdot x) x$ be the projection on to the tangent space at $x$, we have
\[
\nabla_x \cL(y;a,x) = (y -b'(a\cdot x))\pi_x a.
\]
We then have by Cauchy-Schwarz
\[
\E \abs{\nabla_x \cL(y;a,x)\cdot e_1}^2 \leq C\sqrt{\E y^4+ \E b'(a\cdot x)^4}
\]
for some $C>0$. 
The first term under the radical is finite by assumption and the second term under is finite since  $b'=f$ is of at most exponential growth. 

For the gradient bound, note again that by the Cauchy-Schwarz inequality,
\[
\E \abs{\nabla_x \cL}^q =\E\Big[(y-b'(\tfrac{a\cdot x}{\sqrt N}))^q \norm{a}^q \Big]\leq  \E [(y-b'(a\cdot x))^{2q}] \cdot C(2q) N^{q/2}
\]
where $C(q)$ is as in \eqref{eq:vector-norm-bound}. Choosing $q = 4+\iota$  yields the desired bound by the same reasoning. \qed

\subsection{Proof of Proposition~\ref{prop:linear-regression-prop}}\label{app:linear-regression}

In the following $C$ will denote a constant that may change from line to line.
It is evident from \eqref{eq:lin-reg-phi} that $\Phi$ has information exponent $1$
and satisfies Assumption A.  It remains to check Assumption B.

Observe that since $\Phi(x)=-2m(x)+c$, where $m(x)=(v,x)$,
we have that $\nabla\Phi\cdot v=( \nabla\Phi,\nabla m) =\phi'(m)=-2$ and
that $\norm{\nabla\Phi}\leq C$. Thus it suffices to show the desired
bounds for $\nabla \cL$. To that end, notice that 
\begin{align*}
\nabla \cL(x;a,\epsilon)\cdot v & =\left(y-a\cdot x\right)\pi_{x}a\cdot v =(a\cdot(v-x)+\eps)(a\cdot\pi_x v)
\end{align*}
so that if we let $w=v-x$ and $\tilde{v}=\pi_x v$, then
\begin{align*}
\E[(\nabla \cL(x) \cdot v)^{2}] & =\E\left[\left(a\cdot w\right)^{2}\left(a\cdot\tilde{v}\right)^{2}\right]+\E[\epsilon^{2}]\E[(a\cdot\tilde{v})^{2}] \\
&\leq \sqrt{\E\abs{a\cdot w}^4\cdot\E\abs{a\cdot v}^4}+\E[\epsilon^{2}]\E[ \abs{a\cdot v}^2]\leq C
\end{align*}
where in the second line we used Cauchy-Schwarz and in the last inequality we used
our moment assumption to apply  \eqref{eq:moment-bound} and the fact that $\norm{\tilde{v}},\norm{w}\leq 2$.

For the norm bound, we take $4+\iota = 5$. Then we have
\begin{align*}
\E\norm{\nabla \cL}_{2}^{5}   \leq\E[\abs{a\cdot w+\epsilon}^{5}\cdot\norm{a}_{2}^{5}]
\leq 2^5 (\E[\abs{a\cdot w}^{5}\cdot\norm{a}_{2}^{5}]+\E[|\epsilon|^{5}]\cdot \E[\norm{a}_{2}^{5}])
\end{align*}
As $\norm{w}_{2}\leq2$, and $a$ is an i.i.d.\ centered random vector whose entries have finite $10$-th moments, by Cauchy--Schwarz and \eqref{eq:vector-norm-bound} and \eqref{eq:moment-bound}, we obtain
\[
\E\abs{a\cdot w}^{5}\cdot \E \norm{a}_{2}^{5}\leq\sqrt{\E\abs{a\cdot w}^{10}}\sqrt{\E\norm{a}_{2}^{10}}\leq C \norm{w}^{5}N^{5/2}=O(N^{5/2})
\]
Similary since $\epsilon$ has finite $5$-th moment, we see that
$\E|\epsilon|^{5}\cdot\E\norm{a}_{2}^{5}=O(N^{5/2}).$
Combining these bounds yields the desired.\qed

\subsection{Proof of \prettyref{prop:online-pca}}\label{app:streaming-pca}

Setting $m(x)=(v_0,x)$, we see from \eqref{eq:online-pca-phi} that
Assumption $A$ holds and the problem has information exponent 2. To verify the first part of Assumption B, observe that if we let $\tilde v_0=\pi_{x}v_0$, $$\nabla \cL(x)\cdot v_0= (Y\cdot x)(Y\cdot \tilde v_0)\,,$$
so that by Cauchy-Schwarz and \eqref{eq:moment-bound},
$\E(\nabla \cL(x)\cdot v_0)^{2}\leq C(\lambda).$
To verify the second part of Assumption B, $$\norm{\nabla \cL(x)}^{q}\leq\norm{YY^{T}x}^{q}\leq\norm{Y}^{q}\abs{(Y,x)}^{q}\,,$$ so that
by Cauchy-Schwarz, \eqref{eq:vector-norm-bound}, and \eqref{eq:moment-bound},
\[ 
\E\norm{\nabla \cL(x)}^{q}\leq\sqrt{\E\norm{Y}^{2q}}\sqrt{\E|(Y,x)|^{2q}}\leq C'N^{q/2}
\]
by assumption if we take $q=4+\iota = 5$. \qed

\subsection{Proof of Proposition~\ref{prop:tensor-pca-information-exponent}}\label{app:tensor-pca}
Taking the expectation of  \eqref{eq:tensor-pca-loss}, we have $\Phi(x)=-\lambda m(x)^{p}$,
where $m(x)=x\cdot v_{0}$ so that Assumption A holds, and the problem has information exponent $p$. For Assumption
B we argue as follows.

First note that $H(x)=(J,x^{\tensor p})=J(x,\ldots,x)$. If we let
$D$ denote the Euclidean derivative we have $\norm{\nabla H}\leq\norm{DH}$.
From this, it follows from writing out $DH$ that
\[
\E[\norm{DH(x)}^{q}]\leq C\E[\norm{J(x^{\tensor p-1},\cdot)}_{2}^{q}]\,,
\]
for some $C= C(p,q)$, where
\[
J(x^{\tensor p-1},\cdot)_{k}=\sum_{i_{1}\cdots i_{p-1}}J_{i_{1}\cdots i_{p-1}k}x_{i_{1}}\cdots x_{i_{p-1}},
\]
Let $I=(i_{1},\ldots,i_{p-1})$ denote a multi-index and $x_{I}=\prod_{i\in I}x_{i}$.
Observe that $J(x^{\tensor p-1},\cdot)$ is a centered i.i.d. vector
with

\begin{align*}
\E \Big[J\left(x^{\tensor p-1},\cdot\right)_{k}^{6}\Big] & =\E\Big[\sum_{I_{1},\ldots, I_{6}}J_{I_{1}1}\cdots J_{I_{6}1}x_{I_{1}}\cdots x_{I_{6}}\Big]\,,
\leq C
\end{align*}
for some $C>0$ depending on the law of $J$, where here we have
used that the entries of $J$ have finite $6$-th moment and that $x$ is a unit vector. Taking
$q=4+\iota=6$ we see
that since the entries of $J$ have finite $6$-th
moment, we have by  \eqref{eq:vector-norm-bound} that 
$\E\norm{DH}^{6}\leq CN^{3},$
for some $C>0$. This yields the second half of Assumption B. 

For the first half of Assumption B, note that if we let $\tilde v=\pi_{x}v$, 
then $\nabla H\cdot v=\sum_{i}\partial_{i}H(x)\tilde v_{i}$
is a sum of centered i.i.d.\ random variables with deterministic weights
$\tilde v$ with uniformly bounded $6$th moment, i.e.,
$\sup_{i}\E[\partial_{i}H(x)]^6\leq C$,
by the above argument. Thus by \eqref{eq:moment-bound} we have
$\E(\sum_{i}\partial_{i}H(x)\tilde v_{i})^{2}\leq C$
as desired since $\norm{\tilde v}\leq 1$.\qed

\subsection{Proof of Proposition~\ref{prop:mixture-model}}\label{app:mixture-model}

Observe that $Y\eqdist Z+\epsilon\mu$
with $Z\sim\cN(0,Id)$ and $\epsilon$ an independent Rademacher r.v. Writing $p = e^{h}/(e^{-h}+e^h)$ for some $h\in\R$ as above we have that   
$\Phi(x)=\phi(m(x))$ where
\[
\begin{aligned}
\phi(m)&=-\E\Big[\log\cosh\left(Z_{1}m+\sqrt{1-m^{2}}Z_{2}+\eps m(x)+h\right)\Big]\\
& = -\E\Big[\log\cosh\left(Z_1+\eps m+h\right)\Big].
\end{aligned}
\]
In the first line we used rotation invariance of the law of $Z$ and that $Z_1,Z_2$ are the first two entries of $Z$, and in the second line we used that $Z_1 m + Z_2 \sqrt{1-m^2} \eqdist Z_1$ since $m\in[-1,1]$.

Consequently
\[
\phi'(m) = -\E\tanh\left( Z_1+ \eps m +h\right) \eps.
\]
From this we see that
\[
\phi'(0) = -\E \big[\tanh(Z_1 + h)\eps\big]=
\begin{cases}
-\E \tanh(Z_1+h) \eps < 0 & p\neq 1/2\\
0 & p = 1/2.
\end{cases}	
\]
and 
\[
\phi''(m) = -\E\big[\mathrm{sech}^2 (Z_1+\eps m+h)\big]<0.
\]
Combining these two results yields: (a) that the information exponent is $1$ if $p\neq1/2$ and $2$ if $p=1/2$ and (b) that $\phi'(m)>0$ for $m>0$ the desired. \qed

\section{The discrete Bihari--LaSalle inequality}\label{app:bihari-lasalle}

For the purposes of completeness, in this appendix, we  provide a proof of the discrete version of the  Bihari--LaSalle inequality \eqref{eq:Bih-Lasalle}. 
Fix a $k>2$ and suppose that $a_{t}$ satisfies
\begin{align*}
a_{t} & =a+\sum_{\ell=0}^{t-1}c(a_{j})^{k}.
\end{align*}
for some $a,c>0$, then inductively, we can deduce that $m_{t}\geq a_{t}$ for all $t\geq 0$. To see
this, note that if we let 
\[
b_{t}=a+\sum_{j=0}^{t-1}c(m_{j})^{k-1}\,,
\]
it suffices to show that $b_{t}\geq a_{t}.$ Clearly $b_{0}=a_{0}$.
Suppose now that $b_{j}\geq a_{j}$; then
\[
b_{j+1}=a+\sum_{\ell=0}^{j}c (m_{\ell})^{k}=b_{j}+c(m_{j})^{k}\geq b_{j}+c(b_{j})^{k}\geq a_{j}+c(a_{j})^{k}=a_{j+1}
\]
where the first inequality follows by definition of $b_{j}$ and the
follows from the inductive hypothesis. Thus $m_{t}\geq b_{t}\geq a_{t}$.
It remains to lower bound $a_{t}$.

To this end, note that by definition, $a_{t}$ is non-decreasing for
$a>0$ and
\[
c=\frac{a_{t}-a_{t-1}}{a_{t-1}^{k}}\leq\int_{a_{t-1}}^{a_{t}}\frac{1}{x^{k}}dx=\frac{1}{(k-1)}\Big[\frac{1}{a_{t-1}^{k-1}}-\frac{1}{a_{t}^{k-1}}\Big].
\]
So by re-arrangement, 
\[
a_{t}\geq \Big(a_{t-1}^{-(k-1)}-(k-1)c\Big)^{-\frac{1}{k-1}}\quad\text{and\ensuremath{\quad}}a_{t}^{-(k-1)}\leq a_{t-1}^{-(k-1)}-(k-1)c.
\]
Since this holds for each $t,$ we see that 
\[
a_{t-1}^{-(k-1)}\leq a_{0}^{-(k-1)}-(k-1)c(t-1)
\]
from which it follows that 
\[
a_{t}\geq\frac{1}{(a_{0}^{-(k-1)}-(k-1)ct)^{\frac{1}{k-1}}}=\frac{a}{\left(1-(k-1)ca^{k-1}t\right)^{\frac{1}{k-1}}}.
\]
as desired. 



\bibliographystyle{abbrv}
\bibliography{spherical-OGD}

\begin{thebibliography}{10}

\bibitem{AGZ10}
G.~W. Anderson, A.~Guionnet, and O.~Zeitouni.
\newblock {\em An introduction to random matrices}, volume 118 of {\em
  Cambridge Studies in Advanced Mathematics}.
\newblock Cambridge University Press, Cambridge, 2010.

\bibitem{nearly-tight-NIPS}
H.~Ashtiani, S.~Ben-David, N.~Harvey, C.~Liaw, A.~Mehrabian, and Y.~Plan.
\newblock Nearly tight sample complexity bounds for learning mixtures of
  gaussians via sample compression schemes.
\newblock In S.~Bengio, H.~Wallach, H.~Larochelle, K.~Grauman, N.~Cesa-Bianchi,
  and R.~Garnett, editors, {\em Advances in Neural Information Processing
  Systems 31}, pages 3412--3421. Curran Associates, Inc., 2018.

\bibitem{ALBZZ19}
B.~Aubin, B.~Loureiro, A.~Baker, F.~Krzakala, and L.~Zdeborov\'a.
\newblock Exact asymptotics for phase retrieval and compressed sensing with
  random generative priors.
\newblock In J.~Lu and R.~Ward, editors, {\em Proceedings of The First
  Mathematical and Scientific Machine Learning Conference}, volume 107 of {\em
  Proceedings of Machine Learning Research}, pages 55--73, Princeton
  University, Princeton, NJ, USA, 20--24 Jul 2020. PMLR.

\bibitem{BKMMZ19}
J.~Barbier, F.~Krzakala, N.~Macris, L.~Miolane, and L.~Zdeborov{\'a}.
\newblock Optimal errors and phase transitions in high-dimensional generalized
  linear models.
\newblock {\em Proceedings of the National Academy of Sciences},
  116(12):5451--5460, 2019.

\bibitem{BGJ18b}
G.~Ben~Arous, R.~Gheissari, and A.~Jagannath.
\newblock Algorithmic thresholds for tensor {PCA}.
\newblock {\em Annals of Probability}, 48(4):2052--2087, 2020.

\bibitem{BGJ18a}
G.~Ben~Arous, R.~Gheissari, and A.~Jagannath.
\newblock Bounding flows for spherical spin glass dynamics.
\newblock {\em Communications in Mathematical Physics}, 373(3):1011--1048,
  2020.

\bibitem{BMMN17}
G.~Ben~Arous, S.~Mei, A.~Montanari, and M.~Nica.
\newblock The landscape of the spiked tensor model.
\newblock {\em Comm. Pure Appl. Math.}, 72(11):2282--2330, 2019.

\bibitem{BKM95}
S.~Ben-David, E.~Kushilevitz, and Y.~Mansour.
\newblock Online learning versus offline learning.
\newblock In {\em Computational learning theory ({B}arcelona, 1995)}, volume
  904 of {\em Lecture Notes in Comput. Sci.}, pages 38--52. Springer, Berlin,
  1995.

\bibitem{Benaim99}
M.~Bena\"{\i}m.
\newblock Dynamics of stochastic approximation algorithms.
\newblock In {\em S\'{e}minaire de {P}robabilit\'{e}s, {XXXIII}}, volume 1709
  of {\em Lecture Notes in Math.}, pages 1--68. Springer, Berlin, 1999.

\bibitem{BMP90}
A.~Benveniste, M.~M\'{e}tivier, and P.~Priouret.
\newblock {\em Adaptive algorithms and stochastic approximations}, volume~22 of
  {\em Applications of Mathematics (New York)}.
\newblock Springer-Verlag, Berlin, 1990.
\newblock Translated from the French by Stephen S. Wilson.

\bibitem{iron-out}
G.~Biroli, C.~Cammarota, and F.~Ricci-Tersenghi.
\newblock How to iron out rough landscapes and get optimal performances:
  Averaged gradient descent and its application to tensor pca.
\newblock {\em Journal of Physics A: Mathematical and Theoretical}, 2020.

\bibitem{bishop}
C.~M. Bishop.
\newblock {\em Pattern recognition and machine learning}.
\newblock Information Science and Statistics. Springer, New York, 2006.

\bibitem{Bottou99}
L.~Bottou.
\newblock {\em On-Line Learning and Stochastic Approximations}.
\newblock Cambridge University Press, USA, 1999.

\bibitem{bottou2003stochastic}
L.~Bottou.
\newblock Stochastic learning.
\newblock In {\em Summer School on Machine Learning}, pages 146--168. Springer,
  2003.

\bibitem{BottouLeCun04}
L.~Bottou and Y.~Le~Cun.
\newblock Large scale online learning.
\newblock In S.~Thrun, L.~K. Saul, and B.~Sch\"{o}lkopf, editors, {\em Advances
  in Neural Information Processing Systems 16}, pages 217--224. MIT Press,
  2004.

\bibitem{CandesLS14}
E.~J. Cand{\`e}s, X.~Li, and M.~Soltanolkotabi.
\newblock Phase retrieval via {W}irtinger flow: theory and algorithms.
\newblock {\em IEEE Trans. Inform. Theory}, 61(4):1985--2007, 2015.

\bibitem{CCFM}
Y.~Chen, Y.~Chi, J.~Fan, and C.~Ma.
\newblock Gradient descent with random initialization: fast global convergence
  for nonconvex phase retrieval.
\newblock {\em Mathematical Programming}, 176(1):5--37, 2019.

\bibitem{cheng2018sharp}
X.~Cheng, N.~S. Chatterji, Y.~Abbasi-Yadkori, P.~L. Bartlett, and M.~I. Jordan.
\newblock Sharp convergence rates for langevin dynamics in the nonconvex
  setting.
\newblock {\em arXiv preprint arXiv:1805.01648}, 2018.

\bibitem{cheng2019stochastic}
X.~Cheng, D.~Yin, P.~Bartlett, and M.~Jordan.
\newblock Stochastic gradient and {L}angevin processes.
\newblock In H.~D. III and A.~Singh, editors, {\em Proceedings of the 37th
  International Conference on Machine Learning}, volume 119 of {\em Proceedings
  of Machine Learning Research}, pages 1810--1819. PMLR, 13--18 Jul 2020.

\bibitem{dieuleveut2020}
A.~Dieuleveut, A.~Durmus, and F.~Bach.
\newblock Bridging the gap between constant step size stochastic gradient
  descent and markov chains.
\newblock {\em Ann. Statist.}, 48(3):1348--1382, 06 2020.

\bibitem{duflo96}
M.~Duflo.
\newblock {\em Algorithmes stochastiques}, volume~23 of {\em Math\'{e}matiques
  \& Applications (Berlin) [Mathematics \& Applications]}.
\newblock Springer-Verlag, Berlin, 1996.

\bibitem{freedman1975tail}
D.~A. Freedman.
\newblock On tail probabilities for martingales.
\newblock {\em the Annals of Probability}, pages 100--118, 1975.

\bibitem{escaping-saddles-COLT}
R.~Ge, F.~Huang, C.~Jin, and Y.~Yuan.
\newblock Escaping from saddle points --- online stochastic gradient for tensor
  decomposition.
\newblock In {\em Proceedings of The 28th Conference on Learning Theory},
  volume~40 of {\em Proceedings of Machine Learning Research}, pages 797--842,
  Paris, France, 03--06 Jul 2015. PMLR.

\bibitem{goodfellow2016deep}
I.~Goodfellow, Y.~Bengio, and A.~Courville.
\newblock {\em Deep learning}.
\newblock MIT press, 2016.

\bibitem{HLPR-COLT}
N.~J.~A. Harvey, C.~Liaw, Y.~Plan, and S.~Randhawa.
\newblock Tight analyses for non-smooth stochastic gradient descent.
\newblock In A.~Beygelzimer and D.~Hsu, editors, {\em Proceedings of the
  Thirty-Second Conference on Learning Theory}, volume~99 of {\em Proceedings
  of Machine Learning Research}, pages 1579--1613, Phoenix, USA, 25--28 Jun
  2019. PMLR.

\bibitem{hastie2009elements}
T.~Hastie, R.~Tibshirani, and J.~Friedman.
\newblock {\em The elements of statistical learning: data mining, inference,
  and prediction}.
\newblock Springer Science \& Business Media, 2009.

\bibitem{hopkins2016fast}
S.~B. Hopkins, T.~Schramm, J.~Shi, and D.~Steurer.
\newblock Fast spectral algorithms from sum-of-squares proofs: tensor
  decomposition and planted sparse vectors.
\newblock In {\em Proceedings of the forty-eighth annual ACM symposium on
  Theory of Computing}, pages 178--191. ACM, 2016.

\bibitem{hopkins2015tensor}
S.~B. Hopkins, J.~Shi, and D.~Steurer.
\newblock Tensor principal component analysis via sum-of-square proofs.
\newblock In {\em Conference on Learning Theory}, pages 956--1006, 2015.

\bibitem{JLM20}
A.~Jagannath, P.~Lopatto, and L.~Miolane.
\newblock Statistical thresholds for tensor {PCA}.
\newblock {\em Ann. Appl. Probab.}, 30(4):1910--1933, 2020.

\bibitem{JeongGunturk}
H.~Jeong and C.~S. G{\"u}nt{\"u}rk.
\newblock Convergence of the randomized kaczmarz method for phase retrieval.
\newblock {\em ArXiv}, abs/1706.10291, 2017.

\bibitem{johnstone2000distribution}
I.~M. Johnstone.
\newblock On the distribution of the largest eigenvalue in principal components
  analysis.
\newblock {\em Annals of statistics}, pages 295--327, 2001.

\bibitem{KMV10}
A.~T. Kalai, A.~Moitra, and G.~Valiant.
\newblock Efficiently learning mixtures of two gaussians.
\newblock In {\em Proceedings of the Forty-Second ACM Symposium on Theory of
  Computing}, New York, NY, USA, 2010. Association for Computing Machinery.

\bibitem{kim2017community}
C.~Kim, A.~S. Bandeira, and M.~X. Goemans.
\newblock Community detection in hypergraphs, spiked tensor models, and
  sum-of-squares.
\newblock In {\em Sampling Theory and Applications (SampTA), 2017 International
  Conference on}, pages 124--128. IEEE, 2017.

\bibitem{KRASULINA1969189}
T.~Krasulina.
\newblock The method of stochastic approximation for the determination of the
  least eigenvalue of a symmetrical matrix.
\newblock {\em USSR Computational Mathematics and Mathematical Physics},
  9(6):189 -- 195, 1969.

\bibitem{lecun1998gradient}
Y.~LeCun, L.~Bottou, Y.~Bengio, and P.~Haffner.
\newblock Gradient-based learning applied to document recognition.
\newblock {\em Proceedings of the IEEE}, 86(11):2278--2324, 1998.

\bibitem{LedouxTalagrand}
M.~Ledoux and M.~Talagrand.
\newblock {\em Probability in {B}anach spaces}.
\newblock Classics in Mathematics. Springer-Verlag, Berlin, 2011.
\newblock Isoperimetry and processes, Reprint of the 1991 edition.

\bibitem{LMLKZ17}
T.~Lesieur, L.~Miolane, M.~Lelarge, F.~Krzakala, and L.~Zdeborov{\'a}.
\newblock Statistical and computational phase transitions in spiked tensor
  estimation.
\newblock In {\em Information Theory (ISIT), 2017 IEEE International Symposium
  on}, pages 511--515. IEEE, 2017.

\bibitem{online-ICA-NIPS}
C.~J. Li, Z.~Wang, and H.~Liu.
\newblock Online ica: Understanding global dynamics of nonconvex optimization
  via diffusion processes.
\newblock In D.~D. Lee, M.~Sugiyama, U.~V. Luxburg, I.~Guyon, and R.~Garnett,
  editors, {\em Advances in Neural Information Processing Systems 29}, pages
  4967--4975. Curran Associates, Inc., 2016.

\bibitem{Ljung77}
L.~Ljung.
\newblock Analysis of recursive stochastic algorithms.
\newblock {\em IEEE Trans. Automatic Control}, AC-22(4):551--575, 1977.

\bibitem{lu2017phase}
Y.~M. Lu and G.~Li.
\newblock Phase transitions of spectral initialization for high-dimensional
  nonconvex estimation.
\newblock {\em Information and Inference: A Journal of the IMA}, to appear.

\bibitem{LALspectral18}
W.~{Luo}, W.~{Alghamdi}, and Y.~M. {Lu}.
\newblock {Optimal Spectral Initialization for Signal Recovery With
  Applications to Phase Retrieval}.
\newblock {\em IEEE Transactions on Signal Processing}, 67(9):2347--2356, May
  2019.

\bibitem{ma2019sampling}
Y.-A. Ma, Y.~Chen, C.~Jin, N.~Flammarion, and M.~I. Jordan.
\newblock Sampling can be faster than optimization.
\newblock {\em Proceedings of the National Academy of Sciences},
  116(42):20881--20885, 2019.

\bibitem{MBAB19}
A.~Maillard, G.~Ben~Arous, and G.~Biroli.
\newblock Landscape complexity for the empirical risk of generalized linear
  models.
\newblock In J.~Lu and R.~Ward, editors, {\em Proceedings of The First
  Mathematical and Scientific Machine Learning Conference}, volume 107 of {\em
  Proceedings of Machine Learning Research}, pages 287--327, Princeton
  University, Princeton, NJ, USA, 20--24 Jul 2020. PMLR.

\bibitem{mandt2017stochastic}
S.~Mandt, M.~D. Hoffman, and D.~M. Blei.
\newblock Stochastic gradient descent as approximate bayesian inference.
\newblock {\em The Journal of Machine Learning Research}, 18(1):4873--4907,
  2017.

\bibitem{marvels-pitfalls}
S.~S. Mannelli, G.~Biroli, C.~Cammarota, F.~Krzakala, P.~Urbani, and
  L.~Zdeborov{\'a}.
\newblock Marvels and pitfalls of the langevin algorithm in noisy
  high-dimensional inference.
\newblock {\em Physical Review X}, 10(1):011057, 2020.

\bibitem{big-bad-minima}
S.~S. Mannelli, G.~Biroli, C.~Cammarota, F.~Krzakala, and L.~Zdeborov\'{a}.
\newblock Who is afraid of big bad minima? analysis of gradient-flow in spiked
  matrix-tensor models.
\newblock In {\em Advances in Neural Information Processing Systems 32}, pages
  8679--8689. Curran Associates, Inc., 2019.

\bibitem{passed-and-spurious}
S.~S. Mannelli, F.~Krzakala, P.~Urbani, and L.~Zdeborova.
\newblock Passed \& spurious: Descent algorithms and local minima in spiked
  matrix-tensor models.
\newblock In {\em Proceedings of the 36th International Conference on Machine
  Learning}, volume~97 of {\em Proceedings of Machine Learning Research}, pages
  4333--4342, Long Beach, California, USA, 09--15 Jun 2019. PMLR.

\bibitem{mccullagh}
P.~McCullagh and J.~Nelder.
\newblock {\em Generalized Linear Models, Second Edition}.
\newblock Chapman \& Hall/CRC Monographs on Statistics \& Applied Probability.
  Taylor \& Francis, 1989.

\bibitem{mcleish}
D.~L. McLeish.
\newblock Functional and random central limit theorems for the robbins-munro
  process.
\newblock {\em Journal of Applied Probability}, 13(1), 1976.

\bibitem{MBM18}
S.~Mei, Y.~Bai, and A.~Montanari.
\newblock The landscape of empirical risk for nonconvex losses.
\newblock {\em Ann. Statist.}, 46(6A):2747--2774, 12 2018.

\bibitem{mondelli2018fundamental}
M.~Mondelli and A.~Montanari.
\newblock Fundamental limits of weak recovery with applications to phase
  retrieval.
\newblock In {\em Conference On Learning Theory}, pages 1445--1450. PMLR, 2018.

\bibitem{mont15}
A.~Montanari, D.~Reichman, and O.~Zeitouni.
\newblock On the limitation of spectral methods: From the gaussian hidden
  clique problem to rank-one perturbations of gaussian tensors.
\newblock In {\em Advances in Neural Information Processing Systems}, pages
  217--225, 2015.

\bibitem{NeedellSrebroWard14}
D.~Needell, N.~Srebro, and R.~Ward.
\newblock Stochastic gradient descent, weighted sampling, and the randomized
  kaczmarz algorithm.
\newblock In {\em Proceedings of the 27th International Conference on Neural
  Information Processing Systems - Volume 1}, Cambridge, MA, USA, 2014. MIT
  Press.

\bibitem{NeedellWard17}
D.~Needell and R.~Ward.
\newblock Batched stochastic gradient descent with weighted sampling.
\newblock In G.~E. Fasshauer and L.~L. Schumaker, editors, {\em Approximation
  Theory XV: San Antonio 2016}, pages 279--306, Cham, 2017. Springer
  International Publishing.

\bibitem{OjaKarhunen}
E.~Oja and J.~Karhunen.
\newblock On stochastic approximation of the eigenvectors and eigenvalues of
  the expectation of a random matrix.
\newblock {\em Journal of Mathematical Analysis and Applications}, 106(1):69 --
  84, 1985.

\bibitem{Peche06}
S.~P{\'e}ch{\'e}.
\newblock The largest eigenvalue of small rank perturbations of hermitian
  random matrices.
\newblock {\em Probability Theory and Related Fields}, 134(1):127--173, Jan
  2006.

\bibitem{BMPW16}
A.~Perry, A.~S. Wein, A.~S. Bandeira, and A.~Moitra.
\newblock Optimality and sub-optimality of {PCA} {I}: {S}piked random matrix
  models.
\newblock {\em Ann. Statist.}, 46(5):2416--2451, 2018.

\bibitem{pmlr-v65-raginsky17a}
M.~Raginsky, A.~Rakhlin, and M.~Telgarsky.
\newblock Non-convex learning via stochastic gradient langevin dynamics: a
  nonasymptotic analysis.
\newblock volume~65 of {\em Proceedings of Machine Learning Research}, pages
  1674--1703, Amsterdam, Netherlands, 07--10 Jul 2017. PMLR.

\bibitem{MR14}
E.~Richard and A.~Montanari.
\newblock A statistical model for tensor pca.
\newblock In {\em Advances in Neural Information Processing Systems}, pages
  2897--2905, 2014.

\bibitem{RobMon51}
H.~Robbins and S.~Monro.
\newblock A stochastic approximation method.
\newblock {\em Ann. Math. Statistics}, 22:400--407, 1951.

\bibitem{BBCR18}
V.~Ros, G.~Ben~Arous, G.~Biroli, and C.~Cammarota.
\newblock Complex energy landscapes in spiked-tensor and simple glassy models:
  Ruggedness, arrangements of local minima, and phase transitions.
\newblock {\em Phys. Rev. X}, 9:011003, Jan 2019.

\bibitem{SQW18}
J.~Sun, Q.~Qu, and J.~Wright.
\newblock A geometric analysis of phase retrieval.
\newblock {\em Foundations of Computational Mathematics}, 18(5):1131--1198,
  2018.

\bibitem{sur2019modern}
P.~Sur and E.~J. Cand{\`e}s.
\newblock A modern maximum-likelihood theory for high-dimensional logistic
  regression.
\newblock {\em Proceedings of the National Academy of Sciences},
  116(29):14516--14525, 2019.

\bibitem{near-optimal-NIPS}
A.~T. Suresh, A.~Orlitsky, J.~Acharya, and A.~Jafarpour.
\newblock Near-optimal-sample estimators for spherical gaussian mixtures.
\newblock In Z.~Ghahramani, M.~Welling, C.~Cortes, N.~D. Lawrence, and K.~Q.
  Weinberger, editors, {\em Advances in Neural Information Processing Systems
  27}, pages 1395--1403. Curran Associates, Inc., 2014.

\bibitem{TanVershyninKaczmarz}
Y.~S. Tan and R.~Vershynin.
\newblock {Phase retrieval via randomized Kaczmarz: theoretical guarantees}.
\newblock {\em Information and Inference: A Journal of the IMA}, 8(1):97--123,
  04 2018.

\bibitem{TanVershynin}
Y.~S. Tan and R.~Vershynin.
\newblock Online stochastic gradient descent with arbitrary initialization
  solves non-smooth, non-convex phase retrieval.
\newblock {\em arXiv preprint arXiv:1910.12837}, 2019.

\bibitem{Vershynin}
R.~Vershynin.
\newblock {\em High--Dimensional Probability}.
\newblock Cambridge University Press, 2019.

\bibitem{wainwright09}
M.~J. Wainwright.
\newblock Sharp thresholds for high-dimensional and noisy sparsity recovery
  using {$\ell_1$}-constrained quadratic programming ({L}asso).
\newblock {\em IEEE Trans. Inform. Theory}, 55(5):2183--2202, 2009.

\bibitem{WaLu16}
C.~Wang and Y.~M. Lu.
\newblock Online learning for sparse pca in high dimensions: Exact dynamics and
  phase transitions.
\newblock {\em 2016 IEEE Information Theory Workshop (ITW)}, pages 186--190,
  2016.

\bibitem{WaMaLu16}
C.~Wang, J.~Mattingly, and Y.~Lu.
\newblock Scaling limit: Exact and tractable analysis of online learning
  algorithms with applications to regularized regression and pca.
\newblock {\em arXiv preprint arXiv:1712.04332}, 2017.

\bibitem{Kikuchi-tensor-PCA}
A.~S. Wein, A.~E. Alaoui, and C.~Moore.
\newblock The kikuchi hierarchy and tensor pca.
\newblock {\em 2019 IEEE 60th Annual Symposium on Foundations of Computer
  Science (FOCS)}, pages 1446--1468, 2019.

\bibitem{LeSuNIPS16}
H.~Zhang and Y.~Liang.
\newblock Reshaped wirtinger flow for solving quadratic system of equations.
\newblock In D.~D. Lee, M.~Sugiyama, U.~V. Luxburg, I.~Guyon, and R.~Garnett,
  editors, {\em Advances in Neural Information Processing Systems 29}, pages
  2622--2630. Curran Associates, Inc., 2016.

\bibitem{pmlr-v65-zhang17b}
Y.~Zhang, P.~Liang, and M.~Charikar.
\newblock A hitting time analysis of stochastic gradient langevin dynamics.
\newblock volume~65 of {\em Proceedings of Machine Learning Research}, pages
  1980--2022, Amsterdam, Netherlands, 07--10 Jul 2017. PMLR.

\end{thebibliography}

\end{document}